\def\eqref#1{equation~\ref{#1}}
\def\1{\bm{1}}
\def\vtheta{{\bm{\theta}}}
\def\ve{{\bm{e}}}
\def\vu{{\bm{u}}}
\def\vv{{\bm{v}}}
\def\vw{{\bm{w}}}
\def\vx{{\bm{x}}}
\def\vxi{{\bm{\xi}}}
\DeclareMathAlphabet{\mathsfit}{\encodingdefault}{\sfdefault}{m}{sl}
\SetMathAlphabet{\mathsfit}{bold}{\encodingdefault}{\sfdefault}{bx}{n}
\def\gB{{\mathcal{B}}}
\def\gN{{\mathcal{N}}}
\def\sN{{\mathbb{N}}}
\DeclareMathOperator*{\argmax}{arg\,max}
\DeclareMathOperator*{\E}{\mathbb{E}}
\DeclareMathOperator*{\PR}{\textrm{Pr}}
\newcommand{\ex}[1]{\E\left[#1\right]}
\newcommand{\pr}[1]{\PR\left[#1\right]}
\newcommand{\Exp}[2]{\E\limits_{#1}\left[#2\right]}
\newcommand{\Prob}[2]{\PR\limits_{#1}\left[#2\right]}
\newcommand{\norm}[1]{\|#1\|}
\newcommand{\vttheta}{\widetilde{\vtheta}}
\newcommand{\teta}{\widetilde{\eta}}
\newcommand{\reals}{\mathbb{R}}
\newcommand{\one}[1]{\mathds{1}\left[#1\right]}
\newcommand{\hvw}{\hat{\vw}}
\newcommand{\sgn}[1]{\textrm{sign}\left(#1\right)}
\newcommand{\za}{z^1}
\newcommand{\zb}{z^2}
\newcommand{\lambdamax}{\lambda_{max}}
\newcommand{\etamax}{\eta_{max}}
\newcommand{\GD}{\textrm{GD}}
\newcommand{\GDW}{\GD}
\newcommand{\pa}{\Pi_1}
\newcommand{\pb}{\Pi_2}
\newcommand{\pc}{\Pi_3}
\newcommand{\pd}{\Pi_4}
\newcommand{\can}{N}
\newcommand{\bigc}{\mathop{\bigcirc}\limits}
\newcommand{\osim}[2][1.5]{
  \mathrel{\overset{#2}{\scalebox{#1}[1]{$\sim$}}}
}
\newenvironment{pf}[1][\proofname]{\par
  \pushQED{}%
  \normalfont \topsep0\p@\relax
  \trivlist
  \item[\hskip\labelsep\itshape
  #1\@addpunct{.}]\ignorespaces
}{%
  \popQED\endtrivlist\@endpefalse
}
\newcommand{\qqed}{\tag*{$\qed$}}
\newcommand{\abs}[1]{\left|#1\right|}
\newcommand{\pra}[1]{\left(#1\right)}
\theoremstyle{definition}
\newtheorem{theorem}{Theorem}[section]
\newtheorem{definition}[theorem]{Definition}
\newtheorem{remark}[theorem]{Remark}
\theoremstyle{plain}
\newtheorem{lemma}[theorem]{Lemma}
\newcommand{\zy}[1]{{\color{blue} zhiyuan: #1}}
\title{An Exponential Learning Rate Schedule for Deep Learning}
\author{Zhiyuan Li \\
Princeton University \\
\texttt{zhiyuanli@cs.princeton.edu}
\And
Sanjeev Arora\\
Princeton University and Institute for Advanced Study\\
\texttt{arora@cs.princeton.edu} \\
}
\begin{document}

\maketitle

\begin{abstract}
Intriguing empirical evidence exists that deep learning can work well with exotic schedules for varying the learning rate. This paper suggests that the phenomenon may be due to Batch Normalization or BN\citep{ioffe2015batch}, which is ubiquitous and provides benefits in optimization and generalization across all standard architectures.  
The following new results are shown about BN with weight decay and momentum (in other words, the typical use case which was not considered in earlier theoretical analyses of stand-alone  BN~\citep{ioffe2015batch,santurkar2018does,arora2018optimization}
\begin{itemize}
    \item Training can be done using SGD with momentum and an exponentially {\em increasing} learning rate schedule, i.e., learning rate increases by some $(1+\alpha)$ factor in every epoch for some $\alpha >0$. (Precise statement in the paper.) To the best of our knowledge this is the first time such a rate schedule has been successfully used, let alone for highly successful architectures. As expected, such training rapidly blows up network weights, but the network stays well-behaved due to normalization. 
    \item Mathematical explanation of the  success of the above rate schedule: a rigorous proof that it is {\em equivalent} to the standard setting of BN + SGD + Standard Rate Tuning + Weight Decay +  Momentum. This equivalence holds for other normalization layers as well, Group Normalization\citep{wu2018group}, Layer Normalization\citep{ba2016layer}, Instance Norm\citep{ulyanov2016IN}, etc.
    
    \item A worked-out toy example illustrating the above linkage of hyper-parameters.  Using either weight decay or BN alone reaches global minimum, but convergence fails when both are used. 
    
\end{itemize}
\end{abstract}

\section{Introduction}
\label{sec:intro}
Batch Normalization (BN) offers significant benefits in optimization and generalization across architectures, and has become ubiquitous. Usually best performance is attained by adding weight decay and momentum in addition to BN.
    
    Usually weight decay is thought to improve generalization by controlling the norm of the parameters.  However, it is fallacious to try to separately think of optimization and generalization because we are dealing with a nonconvex objective with multiple optima. Even slight changes to the training surely lead  to a  different trajectory in the loss landscape, potentially ending up at a different solution! One needs trajectory analysis  to have a hope of reasoning about the effects of such changes.
    
    In the presence of  BN and other normalization schemes, including GroupNorm, LayerNorm, and InstanceNorm, the optimization objective is \emph{scale invariant} to the parameters, which means rescaling parameters would not change the prediction, except the  parameters that compute the output which do not have BN. However, \cite{hoffer2018fix} shows that fixing the output layer randomly doesn't harm the performance of the network. So the trainable parameters satisfy scale invariance.(See more in Appendix~\ref{sec:scale_invariance_in_net}) The current paper introduces new modes of analysis for such settings. This rigorous analysis yields the surprising conclusion that the original learning rate (LR) schedule and weight decay(WD) can be folded into a new exponential schedule for learning rate: in each iteration multiplying it by $(1+\alpha)$ for some $\alpha >0$ that depends upon the momentum and weight decay rate.

    \begin{theorem}[Main, Informal]
     SGD on a scale-invariant objective with initial learning rate $\eta$, weight decay factor $\lambda$, and momentum factor $\gamma$ is equivalent to SGD with momentum factor $\gamma$ where at iteration $t$, the learning rate $\tilde{\eta}_t$ in the new exponential learning rate schedule is defined as $\tilde{\eta}_t = \alpha^{-2t-1} \eta$ without weight decay($\tilde{\lambda} = 0$) where $\alpha$ is a non-zero root of equation 
     \begin{equation}\label{eq:quadratic}  x^2-(1+\gamma-\lambda\eta)x + \gamma = 0, \end{equation}
     
    Specifically, when momentum $\gamma=0$,  the above schedule can be simplified as $\tilde{\eta}_t = (1-\lambda\eta)^{-2t-1} \eta$.
    \end{theorem}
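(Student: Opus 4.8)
The plan is to run the given trajectory $\{\theta_t\}$ of SGD with learning rate $\eta$, weight decay $\lambda$ and momentum $\gamma$ side by side with a rescaled sequence $\tilde\theta_t := c_t\theta_t$, where $(c_t)$ is a geometric sequence of positive scalars to be chosen, and to check that $\{\tilde\theta_t\}$ is \emph{exactly} the trajectory of SGD with momentum $\gamma$, zero weight decay, and the claimed schedule $\tilde\eta_t$. Because the objective $L$ is scale invariant we have $L(\tilde\theta_t)=L(\theta_t)$ and the network computes the same function at $\tilde\theta_t$ as at $\theta_t$, so this really is an equivalence of the two runs; the same scale invariance holds for every minibatch loss, so the whole argument is carried out pathwise in the minibatch sequence and the stochasticity is inert.

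First I would record the one consequence of $L(c\theta)=L(\theta)$ that drives everything: differentiating in $\theta$ gives $\nabla L(c\theta)=c^{-1}\nabla L(\theta)$ (and, differentiating in $c$, $\langle\nabla L(\theta),\theta\rangle=0$). Next I would eliminate the momentum buffer and write one step of the given process as the second-order ``heavy ball'' recursion
\begin{equation*}
\theta_{t+1}=(1+\gamma-\lambda\eta)\,\theta_t-\gamma\,\theta_{t-1}-\eta\,\nabla L(\theta_t),
\end{equation*}
whose homogeneous part has characteristic polynomial $x^2-(1+\gamma-\lambda\eta)x+\gamma$ --- already \eqref{eq:quadratic}. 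The target process (momentum $\gamma$, no weight decay, step sizes $\tilde\eta_t$) rewrites similarly as $\tilde\theta_{t+1}=(1+\gamma\tilde\eta_t/\tilde\eta_{t-1})\tilde\theta_t-(\gamma\tilde\eta_t/\tilde\eta_{t-1})\tilde\theta_{t-1}-\tilde\eta_t\nabla L(\tilde\theta_t)$.

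Then I would substitute $\theta_t=c_t^{-1}\tilde\theta_t$ together with $\nabla L(\theta_t)=c_t\nabla L(\tilde\theta_t)$ into the heavy-ball recursion, multiply through by $c_{t+1}$, and match coefficients against the target recursion. With the ansatz $c_t=\alpha^{-t}$: the coefficient of $\nabla L(\tilde\theta_t)$ forces $\tilde\eta_t=\eta\,c_tc_{t+1}=\eta\,\alpha^{-2t-1}$; the coefficient of $\tilde\theta_{t-1}$ is then automatically consistent (since $\tilde\eta_t/\tilde\eta_{t-1}=c_{t+1}/c_{t-1}$); and the coefficient of $\tilde\theta_t$ collapses, after clearing $\alpha^{2}$, to $\alpha^{2}-(1+\gamma-\lambda\eta)\alpha+\gamma=0$, i.e. $\alpha$ must be a nonzero root of \eqref{eq:quadratic} (the relevant root is positive in the parameter regime of interest, so the $c_t$ are positive as required). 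Setting $\gamma=0$ gives $\alpha=1-\lambda\eta$ and the stated simplification. What remains, and is where the informal statement has to be pinned down, is the base case: the two second-order recursions must be launched from matching pairs $(\theta_0,\theta_1)\leftrightarrow(\tilde\theta_0,\tilde\theta_1)$, which for $\gamma>0$ forces the new process to start from a specific (radial) initialization of its momentum buffer rather than from zero. I expect this to be the only delicate point; the conceptual crux is simply recognizing the heavy-ball reformulation together with the geometric ansatz $c_t=\alpha^{-t}$, after which the schedule $\tilde\eta_t$ and the quadratic \eqref{eq:quadratic} are forced and the induction $\tilde\theta_t=c_t\theta_t$ for all $t$ is routine.
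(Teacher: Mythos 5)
Your proof is correct: eliminating the momentum buffer to get the second-order recursion $\theta_{t+1}=(1+\gamma-\lambda\eta)\theta_t-\gamma\theta_{t-1}-\eta\nabla L(\theta_t)$, substituting the geometric ansatz $\tilde\theta_t=\alpha^{-t}\theta_t$ with $\nabla L(\theta_t)=\alpha^{-t}\,\nabla L(\tilde\theta_t)$, and matching coefficients does force $\tilde\eta_t=\eta\,\alpha^{-2t-1}$ and $\alpha^2-(1+\gamma-\lambda\eta)\alpha+\gamma=0$, and your remark about the base case (the rescaled run must start from $\tilde\theta_{-1}=\alpha\theta_{-1}$, i.e.\ a specific nonzero radial momentum when $v_0=0$) is exactly the initial-condition caveat the paper builds into the formal statement. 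The route differs from the paper's proof of Theorem~\ref{thm:main_gd_momentum}, which works on an augmented state $(\vtheta,\eta,\vtheta',\eta')$ and develops a small calculus of maps --- Lemma~\ref{lem:commute_momentum} (equivalent scalings commute with the update) and Lemma~\ref{lem:gdw_momentum} (the update with WD factors as vanilla GD conjugated by scalings, with $\alpha+\gamma\alpha^{-1}=\rho+\gamma$ playing the role of your characteristic equation) --- and then composes these by induction. Your direct coefficient-matching is more elementary and makes the origin of \eqref{eq:quadratic} transparent as the characteristic polynomial of the homogeneous part of the heavy-ball recursion; it is in fact essentially the specialization to constant $\eta,\lambda$ of the paper's own induction proof of Theorem~\ref{thm:main} (TEXP++), where your fixed $\alpha$ is the fixed point of the recursion~\ref{eq:alpha_defi}. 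What the paper's map formalism buys in exchange for its overhead is reusability: the same lemmas carry over to the multi-phase Step Decay setting of Theorem~\ref{thm:multi_exp}, where the LR changes between phases, the buffered and current LRs disagree, and the scalings no longer telescope cleanly, producing the one-time momentum corrections; with your ansatz that case would require letting $c_{t+1}/c_t$ vary with $t$, which is precisely what TEXP++ does.
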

    
    The above theorem requires that the product of learning rate and weight decay factor, $\lambda\eta$, is small than $(1-\sqrt{\gamma})^2$, which is almost always satisfied in practice. The rigorous and most general version of above theorem is Theorem~\ref{thm:multi_exp}, which deals with multi-phase LR schedule, momentum and weight decay.

    There are other recently discovered exotic LR schedules, e.g. Triangular LR schedule\citep{smith2017cyclical} and Cosine LR schedule\citep{cosine_lr}, and our exponential LR schedule is an extreme example of LR schedules that become possible in presence of BN.
    Such an exponential increase in learning rate seems absurd at first sight and to the best of our knowledge, no deep learning success has been reported using such an idea before. It does highlight the above-mentioned viewpoint that in deep learning, optimization and regularization are not easily separated. Of course, the exponent  trumps the effect of initial lr very fast (See Figure~\ref{fig:instant_lr_decay_large_exponent}), which explains why training with BN and WD is not sensitive to the scale of initialization, since with BN, tuning the scale of initialization is equivalent to tuning the initial LR $\eta$ while fixing the product of LR and WD, $\eta\lambda$ (See Lemma~\ref{lem:commute_momentum}).
    
    Note that it is customary in BN to switch to a lower LR upon reaching a plateau in the validation loss. According to the analysis in the above theorem, this corresponds to an exponential growth with a smaller exponent, except for a  transient effect when a correction term is needed for the two processes to be equivalent (see discussion around Theorem~\ref{thm:multi_exp}).
    
    Thus the final training algorithm is roughly as follows: {\em Start from a convenient LR like $0.1$, and grow it at an exponential rate with a suitable exponent. When validation loss plateaus, switch to an exponential growth of LR with a lower exponent. Repeat the procedure until the training loss saturates.}

    In Section~\ref{sec:role_of_wd}, we demonstrate on a toy example  how weight decay and normalization are inseparably involved in the optimization process. With either weight decay or normalization alone, SGD will achieve zero training error. But with both turned on, SGD fails to converge to global minimum. 
    
    In Section~\ref{sec:experiments}, we experimentally verify our theoretical findings on CNNs and ResNets. We also construct better exponential LR schedules by incorporating the Cosine LR schedule on CIFAR10, which opens the possibility of even more general theory of rate schedule tuning towards better performance.

\subsection{Related Work}
\label{sec:related_work}
There have been other theoretical analyses of training models with scale-invariance. 
\citep{cho2017riemannian} proposed to run Riemanian gradient descent on Grassmann manifold $\mathcal{G}(1,n)$ since the weight matrix is scaling invariant to the loss function.  observed that the effective stepsize is proportional to $\frac{\eta_w}{\|\vw_t\|^2}$. \citep{arora2018theoretical} show the gradient is always perpendicular to the current parameter vector which has the effect that norm of each scale invariant parameter group increases monotonically, which has an auto-tuning effect. \citep{wu2018wngrad} proposes a new adaptive learning rate schedule motivated by scale-invariance property of Weight Normalization.

\textbf{Previous work for understanding Batch Normalization.}  \citep{santurkar2018does} suggested that the success of BNhas does not derive from reduction in Internal Covariate Shift, but by making landscape smoother. \citep{kohler2018exponential} essentially shows linear model with BN could achieve exponential convergence rate assuming gaussian inputs, but their analysis is for a variant of GD with an inner optimization loop rather than GD itself.  \citep{bjorck2018understanding}  observe that the higher learning rates enabled by BN empirically improves generalization. \citep{arora2018theoretical} prove that with certain mild assumption, (S)GD with BN finds approximate first order stationary point with any fixed learning rate. None of the above analyses incorporated weight decay, but \citep{zhang2018three,hoffer2018norm,van2017l2,david_page_blog,david_wu_blog} argued qualitatively that weight decay makes parameters have smaller norms, and thus the effective learning rate,  $\frac{\eta_w}{\|\vw_t\|^2}$ is larger. They described experiments showing this effect but didn't have a closed form theoretical analysis like ours. None of the above analyses deals with momentum rigorously. 

 \subsection{Preliminaries and Notations}
\label{subsec:notations}

For batch $\gB=\{x_i\}_{i=1}^B$, network parameter $\vtheta$, we  denote the network by $f_\vtheta$ and the loss function at iteration $t$ by $L_t(f_\vtheta) = L(f_{\vtheta}, \gB_t)$ . When there's no ambiguity, we also use $L_t(\vtheta)$ for convenience.

We say a loss function $L(\vtheta)$ is \emph{scale invariant} to its parameter $\vtheta$ is for any $c\in \reals^+$, $L(\vtheta) = L(c\vtheta)$. In practice, the source of scale invariance is usually different types of normalization layers, including Batch Normalization~\citep{ioffe2015batch}, Group Normalization~\citep{wu2018group}, Layer Normalization~\citep{ba2016layer}, Instance Norm~\citep{ulyanov2016IN}, etc.

Implementations of SGD with Momentum/Nesterov comes with subtle variations in literature. We adopt the variant from \cite{Sutskever:2013:IIM:3042817.3043064},  also the default in PyTorch~\citep{paszke2017automatic}. \emph{$L2$ regularization} (a.k.a. \emph{Weight Decay}) is another common trick used in deep learning. Combining them together, we get the one of the mostly used optimization algorithms below.

\begin{definition}\label{defi:sgd_momentum_wd}[SGD with Momentum and Weight Decay]
At iteration $t$, with randomly sampled batch $\gB_t$, update the parameters $\vtheta_t$ and momentum $\vv_t$ as following:
\begin{align}
\vspace{-0.2cm}
     \vtheta_t = & \vtheta_{t-1} - \eta_{t-1} \vv_t  \\
     \vv_t =&   \gamma\vv_{t-1} + \nabla_\vtheta \left(L_t(\vtheta_{t-1})+ \frac{\lambda_{t-1}}{2} \norm{\vtheta_{t-1}}^2\right),
\vspace{-0.3cm}
\end{align}
where $\eta_t$ is the learning rate at epoch $t$, $\gamma$ is the momentum coefficient, and $\lambda$ is the factor of weight decay. Usually, $\vv_0$ is initialized to be $\bm{0}$.

For ease of analysis, we will use the following equivalent of Definition~\ref{defi:sgd_momentum_wd}.
\begin{equation}
\vspace{-0.2cm}
    \frac{\vtheta_t - \vtheta_{t-1}}{\eta_{t-1}}  = \gamma\frac{\vtheta_{t-1} - \vtheta_{t-2}}{\eta_{t-2}} - \nabla_\vtheta \left((L(\vtheta_{t-1})+ \frac{\lambda_{t-1}}{2}\norm{\vtheta_{t-1}}_2^2\right),
    \vspace{-0.15cm}
\end{equation}
where $\eta_{-1}$ and $\vtheta_{-1}$ must be chosen in a way such that $\vv_0 = \frac{\vtheta_{0} - \vtheta_{-1}}{\eta_{-1}}$ is satisfied, e.g. when $\vv_0 = \bm{0}$, $\vtheta_{-1} = \vtheta_{0}$ and $\eta_{-1}$ could be arbitrary.
\end{definition}

A key source of intuition is the following simple lemma about scale-invariant networks~\cite{arora2018theoretical}. The first property ensures GD (with momentum) always increases the norm of the weight.(See Lemma~\ref{thm:increase_norm} in Appendix~\ref{sec:other_results}) and the second property says that the gradients are smaller for parameteres with larger norm, thus stabilizing the trajectory from diverging to infinity.

    \begin{lemma}[Scale Invariance]\label{lem:scale_invariance}
    
    If for any $c\in \reals^+$, $L(\vtheta) = L(c\vtheta)$, then \\
    (1). $\langle\nabla_{\vtheta}L,\vtheta\rangle=0$;\quad \\
    (2). $\nabla_{\vtheta}L\big\rvert_{\vtheta = \vtheta_0} = c \nabla_{\vtheta}L\big\rvert_{\vtheta = c\vtheta_0}$, for any $c>0$
    \end{lemma}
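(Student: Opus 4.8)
The plan is to exploit the defining identity $L(c\vtheta) = L(\vtheta)$ — valid for every $\vtheta$ and every $c \in \reals^+$ — in two complementary ways: differentiating in the scalar $c$ yields (1), while differentiating in $\vtheta$ with $c$ held fixed yields (2). Throughout I assume, as is implicit in writing $\nabla_\vtheta L$, that $L$ is differentiable at the points in question; the identity then transfers to the derivatives.

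For part (2), fix $c > 0$ and regard both sides of $L(c\vtheta) = L(\vtheta)$ as functions of $\vtheta$. Applying $\nabla_\vtheta$ to the right-hand side gives $\nabla_\vtheta L\big\rvert_{\vtheta}$. Applying $\nabla_\vtheta$ to the left-hand side, the chain rule for the composition $\vtheta \mapsto c\vtheta \mapsto L(\cdot)$ gives $c\, \nabla_\vtheta L\big\rvert_{c\vtheta}$, since the Jacobian of the inner map $\vtheta \mapsto c\vtheta$ is $c\,\mathrm{Id}$. Equating the two expressions and renaming $\vtheta$ to $\vtheta_0$ gives exactly $\nabla_\vtheta L\big\rvert_{\vtheta_0} = c\, \nabla_\vtheta L\big\rvert_{c\vtheta_0}$.

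For part (1), fix $\vtheta$ and differentiate the scalar function $\phi(c) := L(c\vtheta)$ with respect to $c$. On one hand $\phi(c) = L(\vtheta)$ is constant in $c$, so $\phi'(c) = 0$; on the other hand the chain rule gives $\phi'(c) = \langle \nabla_\vtheta L\big\rvert_{c\vtheta}, \vtheta\rangle$. Hence $\langle \nabla_\vtheta L\big\rvert_{c\vtheta}, \vtheta\rangle = 0$ for all $c > 0$, and specializing to $c = 1$ yields $\langle \nabla_\vtheta L\big\rvert_{\vtheta}, \vtheta\rangle = 0$. (Equivalently, part (1) can be recovered from part (2) by pairing both sides with $\vtheta_0$ and using that $L(c\vtheta_0)$ is constant in $c$, but the one-line differentiation above is the cleanest route.)

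There is no real obstacle here — the statement is essentially Euler's relation for a $0$-homogeneous function, and the only things to be careful about are the differentiability hypothesis and correctly tracking the point at which each gradient is evaluated in the chain-rule steps. This lemma (as noted, it already appears in \citep{arora2018theoretical}) is recorded here only because both identities are used repeatedly in the sequel.
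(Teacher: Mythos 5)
Your proof is correct: differentiating the identity $L(c\vtheta)=L(\vtheta)$ in $c$ (then setting $c=1$) gives (1), and differentiating in $\vtheta$ via the chain rule gives (2), which is exactly the standard Euler-relation argument for $0$-homogeneous functions. The paper itself omits the proof and cites \citet{arora2018theoretical}, whose argument is the same, so there is nothing to add beyond the differentiability caveat you already note.
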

    \vspace{-0.2cm}

\section{Deriving Exponential Learning Rate Schedule}
\label{sec:exp_lr}
As a warm-up in Section~\ref{subsec:fixed_lr} we show that if momentum is turned off then {\em Fixed LR + Fixed WD } can be translated to an equivalent {\em Exponential LR.}  In Section~\ref{subsec:fixed_lr_momentum} we give a more general analysis on the equivalence between {\em Fixed LR + Fixed WD + Fixed Momentum Factor}  and  {\em Exponential LR + Fixed Momentum Factor}. While interesting, this still does completely apply to real-life deep learning where reaching full accuracy usually requires multiple phases in training where LR is fixed within a phase and reduced by some factor from one phase to the next. Section~\ref{subsec:multiphase} shows how to interpret such a multi-phase LR schedule + WD + Momentum as a certain multi-phase exponential LR schedule with Momentum.
    

\subsection{Replacing WD by Exponential LR in Momentum-Free SGD}\label{subsec:fixed_lr}
We use notation of Section~\ref{subsec:notations} and assume LR is fixed over iterations, i.e. $\eta_t=\eta_0$, and $\gamma$ (momentum factor) is set as $0$. We also use $\lambda$ to denote WD factor  and $\vtheta_0$ to denote the initial parameters. 

 The intuition should be clear from Lemma~\ref{lem:scale_invariance}, which says  that shrinking parameter weights by factor $\rho$ (where $\rho <1$) amounts to making the gradient $\rho^{-1}$ times larger without changing its direction. Thus in order to restore the ratio between original parameter and its update (LR$\times$Gradient), the easiest way would be scaling LR by $\rho^{2}$. This suggests that scaling the parameter $\vtheta$ by $\rho$ at each step is equivalent to scaling the LR $\eta$ by $\rho^{-2}$. 
 
 To prove this formally we use the following formalism.  We'll refer to the vector  $(\vtheta,\eta)$ the \emph{state} of a training algorithm and study how this evolves under various combinations of parameter changes. We will think of each step in training as  a {\em mapping} from one state to another. Since mappings can be composed, any finite number of steps also correspond to a mapping. The following are some basic mappings used in the proof.




\begin{enumerate}
    \item \makebox[14em][l]{Run GD with WD for a step:} $\GDW^\rho_t(\vtheta,\eta) =(\rho\vtheta - \eta \nabla L_t(\vtheta),\eta)$;
    \item \makebox[14em][l]{Scale the parameter $\vtheta$:}  $\pa^c(\vtheta,\eta) = (c\vtheta, \eta)$;
    \item \makebox[14em][l]{Scale the LR $\eta$:}  $\pb^c(\vtheta,\eta) = (\vtheta,c \eta)$.
\end{enumerate} 
\vspace{-0.2cm}
For example,  when $\rho = 1$, $\GDW^1_t$ is vanilla GD update without WD, also abbreviated as $\GD_t$. When $\rho = 1-\lambda\eta_0$, $\GDW^{1-\lambda\eta_0}_t$ is GD update with WD $\lambda$ and LR $\eta_0$. Here $L_t$ is the loss function at iteration $t$, which is decided by the batch of the training samples $\gB_t$ in $t$th iteration. Below is the main result of this subsection,
showing our claim that GD + WD $\Leftrightarrow$ GD+ Exp LR (when Momentum is zero). It will be proved after a series of lemmas.


\begin{theorem}[WD $\Leftrightarrow$ Exp LR]\label{thm:main_gd}
For every $\rho <1$ and positive integer $t$ following holds:
\[\GDW^\rho_{t-1}\circ \cdots \circ \GDW^\rho_0\ =  \  \left[\pa^{\rho^t} \circ \pb^{\rho^{2t}}\right]\circ\pb^{\rho^{-1}} \circ\GD_{t-1}\circ \pb^{\rho^{-2}}\circ\cdots \circ \GD_1\circ\pb^{\rho^{-2}} \circ \GD_0 \circ \pb^{\rho^{-1}}. \]
\end{theorem}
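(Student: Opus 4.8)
The plan is to reduce the statement to two elementary identities about the building-block maps and then close by induction on $t$. Throughout I will use freely that $\pa^c$ and $\pb^d$ commute with one another and that $\pa^c\circ\pa^{c'}=\pa^{cc'}$, $\pb^d\circ\pb^{d'}=\pb^{dd'}$, all immediate from the definitions.

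First, for any $\rho\neq 0$ one has the factorization $\GDW^\rho_t=\pb^{\rho}\circ\pa^{\rho}\circ\GD_t\circ\pb^{\rho^{-1}}$. This is a one-line check: $\pb^{\rho^{-1}}$ sends $(\vtheta,\eta)\mapsto(\vtheta,\rho^{-1}\eta)$, the vanilla step then gives $(\vtheta-\rho^{-1}\eta\nabla L_t(\vtheta),\rho^{-1}\eta)$, $\pa^{\rho}$ multiplies the parameter block by $\rho$ to produce $(\rho\vtheta-\eta\nabla L_t(\vtheta),\rho^{-1}\eta)$, and $\pb^{\rho}$ restores the learning rate, matching the definition of $\GDW^\rho_t$; note this step uses no scale invariance. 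Second — and this is where Lemma~\ref{lem:scale_invariance}(2) enters — a vanilla step is equivariant under simultaneously scaling the parameters by $c>0$ and the learning rate by $c^2$: $\GD_t\circ\pb^{c^2}\circ\pa^{c}=\pb^{c^2}\circ\pa^{c}\circ\GD_t$. Indeed, expanding the left side evaluates the gradient at $c\vtheta$, and $\nabla L_t(c\vtheta)=c^{-1}\nabla L_t(\vtheta)$ makes both parameter blocks equal $c\vtheta-c\eta\nabla L_t(\vtheta)$ while both learning-rate blocks equal $c^2\eta$. Rearranging gives the form I will actually apply: $\GD_t\circ\pa^{c}=\pb^{c^2}\circ\pa^{c}\circ\GD_t\circ\pb^{c^{-2}}$.

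Now I induct on $t$. For $t=1$ the right-hand side is $\pa^{\rho}\circ\pb^{\rho^2}\circ\pb^{\rho^{-1}}\circ\GD_0\circ\pb^{\rho^{-1}}=\pa^{\rho}\circ\pb^{\rho}\circ\GD_0\circ\pb^{\rho^{-1}}$, which equals $\GDW^\rho_0$ by the first identity (after commuting $\pa^{\rho}$ past $\pb^{\rho}$). For the inductive step, precompose the hypothesis with $\GDW^\rho_t=\pb^{\rho}\circ\pa^{\rho}\circ\GD_t\circ\pb^{\rho^{-1}}$; slide the freshly introduced $\pb^{\rho^{-1}}$ rightward past $\pa^{\rho^t}$ and merge it into the existing $\pb^{\rho^{2t}}\circ\pb^{\rho^{-1}}$ block, which leaves a tail $\cdots\circ\GD_t\circ\pa^{\rho^t}\circ\pb^{\rho^{2t-2}}\circ\GD_{t-1}\circ\pb^{\rho^{-2}}\circ\cdots$; then apply the equivariance identity to $\GD_t\circ\pa^{\rho^t}$, which pushes $\pa^{\rho^t}$ left past $\GD_t$ at the cost of a $\pb^{\rho^{2t}}$ on the far left and a $\pb^{\rho^{-2t}}$ immediately after $\GD_t$. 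That new $\pb^{\rho^{-2t}}$ combines with the $\pb^{\rho^{2t-2}}$ already sitting there to give exactly $\pb^{\rho^{-2}}$, restoring the pattern, and collecting the front terms gives $\pb^{\rho}\circ\pa^{\rho}\circ\pb^{\rho^{2t}}\circ\pa^{\rho^t}=\pa^{\rho^{t+1}}\circ\pb^{\rho^{2t+1}}=\pa^{\rho^{t+1}}\circ\pb^{\rho^{2(t+1)}}\circ\pb^{\rho^{-1}}$, which is precisely the claimed form for $t+1$.

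I expect the only genuinely non-formal ingredient to be the conceptual step of isolating the equivariance identity for $\GD_t$ and recognizing that it is exactly what scale invariance buys us; everything else is composition bookkeeping of $\pa$, $\pb$, $\GD$. The tracking of $\pb$-exponents in the inductive step is the fiddly part, but it is self-checking: the product of all $\pb$-superscripts appearing on the right-hand side must equal $\rho^{0}=1$, consistent with $\GDW^\rho$ never touching the learning-rate coordinate, which gives a convenient consistency check throughout.
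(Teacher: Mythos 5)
Your proof is correct and follows essentially the same route as the paper: your first identity is exactly Lemma~\ref{lem:gdw}, your equivariance identity $\GD_t\circ\pb^{c^2}\circ\pa^{c}=\pb^{c^2}\circ\pa^{c}\circ\GD_t$ is Lemma~\ref{lem:commute} specialized to $\rho=1$, and your induction is the same argument the paper packages through the ``equivalent maps'' relation $F\osim{c}G$. The only difference is presentational --- you push the scalings $\pa^{\rho^t}\circ\pb^{\rho^{2t}}$ through the composition by hand rather than quotienting by the equivalence, and your exponent bookkeeping checks out.
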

\vspace{-0.4cm}
With WD being $\lambda$, $\rho$ is set as $1-\lambda\eta_0$ and thus the scaling factor of LR per iteration is $\rho^{-2}=(1-\lambda\eta_0)^{-2}$, except for the first iteration it's $\rho^{-1}=(1-\lambda\eta_0)^{-1}$.

We first show how to write GD update with WD as a composition of above defined basic maps. 
\begin{lemma}\label{lem:gdw}
$\GDW^\rho_t =\pb^{\rho}\circ\pa^{\rho} \circ\GD_t\circ \pb^{\rho^{-1}}$.
\end{lemma}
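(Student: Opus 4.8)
The plan is to verify the claimed operator identity by direct computation: apply both sides to an arbitrary state $(\vtheta,\eta)$ and check that the two resulting states agree. Recall that in a composition the rightmost map acts first, and that each of the three building blocks only touches the coordinates it is supposed to — $\pb^c$ rescales $\eta$ and leaves $\vtheta$ untouched, $\pa^c$ rescales $\vtheta$ and leaves $\eta$ untouched, and $\GD_t = \GDW^1_t$ sends $(\vtheta,\eta)$ to $(\vtheta - \eta\nabla L_t(\vtheta),\eta)$. The one point to keep an eye on is that the gradient $\nabla L_t$ appearing inside a $\GD_t$ step is always evaluated at the \emph{current} first coordinate, so one must make sure that no $\pa$-style rescaling of $\vtheta$ has happened before the $\GD_t$ step in the right-hand composition.

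Concretely, starting from $(\vtheta,\eta)$ and reading right to left: $\pb^{\rho^{-1}}$ produces $(\vtheta,\rho^{-1}\eta)$; then $\GD_t$ produces $(\vtheta - \rho^{-1}\eta\,\nabla L_t(\vtheta),\rho^{-1}\eta)$, where crucially the gradient is still taken at the original $\vtheta$; then $\pa^{\rho}$ scales the first coordinate to give $(\rho\vtheta - \eta\,\nabla L_t(\vtheta),\rho^{-1}\eta)$; and finally $\pb^{\rho}$ restores the learning rate to $\eta$, yielding $(\rho\vtheta - \eta\,\nabla L_t(\vtheta),\eta)$. By definition this is exactly $\GDW^{\rho}_t(\vtheta,\eta)$, which proves the identity since $(\vtheta,\eta)$ was arbitrary.

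It is worth noting what makes this work and what does not enter: the identity is a purely algebraic bookkeeping fact and does \emph{not} use scale invariance of $L$ (Lemma~\ref{lem:scale_invariance}) at all — that property is only needed later, when comparing $\GD$ trajectories run at different learning rates. Here the sole content is that multiplying a vanilla GD update by $\rho$ can be realized by first pre-dividing the learning rate by $\rho$ (so that $\eta\,\nabla L_t$ becomes $\rho^{-1}\eta\,\nabla L_t$), then scaling both the parameter and the learning rate by $\rho$, which reproduces the weight-decayed update $\rho\vtheta - \eta\nabla L_t(\vtheta)$ while returning the learning rate to its input value. There is essentially no obstacle; the only thing to be careful about is the composition order and the argument of the gradient, as flagged above. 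This lemma then serves as the atomic step from which Theorem~\ref{thm:main_gd} follows by composing $t$ copies and collapsing the interior $\pb^{\rho}\circ\pb^{\rho^{-1}}$ (and, at the ends, $\pa^{\rho}$) factors.
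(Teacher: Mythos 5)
Your computation is correct and is essentially the same as the paper's proof, which likewise just unfolds the definitions and tracks the state $(\vtheta,\eta)$ through the composition (the paper writes the scalings as $\pa^{\rho}\circ\pb^{\rho}$ rather than $\pb^{\rho}\circ\pa^{\rho}$, but these commute since they act on different coordinates). Your remark that scale invariance plays no role here — it only enters in Lemma~\ref{lem:commute} — is also accurate.
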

 \vspace{-0.2cm}

Below we will define the proper notion of equivalence such that (1). $\pa^{\rho}\sim \pb^{\rho^{-2}}$, which implies $\GDW^\rho_t \sim \pb^{\rho^{-1}}\circ \GD_t \circ \pb^{\rho^{-1}}$; (2) the equivalence is preserved under future GD updates.


We first extend the equivalence between weights (same direction) to that between states, with additional requirement that the ratio between the size of GD update and that of parameter are the same among all equivalent states, which yields the notion of \emph{Equivalent Scaling}.
\begin{definition}[Equivalent States]
$(\vtheta,\eta)$ is \emph{equivalent} to $(\vtheta',\eta')$ iff $\exists c>0$,  $(\vttheta,\teta) = [\pa^c\circ \pb^{c^2}](\vtheta,\eta) =  (c\vtheta,c^2\eta)$,
which is also denoted by $(\vttheta,\teta) \osim{c}  (\vtheta,\eta)$. 
$\pa^c \circ \pb^{c^2}$ is called \emph{Equivalent Scaling} for all $c>0$.
\end{definition}

The following lemma shows that equivalent scaling commutes with GD update with WD, implying that equivalence is preserved under GD update (Lemma~\ref{lem:commute}). This anchors the notion of equivalence --- we could insert equivalent scaling anywhere in a sequence of basic maps(GD update, LR/parameter scaling), without changing the final network.
\begin{lemma}\label{lem:commute}
For any constant $c,\rho>0$ and $t\ge 0$, $\GDW^\rho_t\circ[ \pa^{c}\circ\pb^{c^2}] = [ \pa^{c}\circ\pb^{c^2}]\circ \GDW^\rho_t$. \\
In other words, $(\vtheta,\eta) \osim{c}(\vtheta',\eta') \Longrightarrow \GDW^\rho_t(\vtheta,\eta) \osim{c} \GDW^\rho_t(\vtheta',\eta')$.
\end{lemma}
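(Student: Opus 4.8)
The plan is to prove the operator identity $\GDW^\rho_t \circ [\pa^c\circ\pb^{c^2}] = [\pa^c\circ\pb^{c^2}]\circ\GDW^\rho_t$ by evaluating both sides on an arbitrary state $(\vtheta,\eta)$ and checking that the resulting states agree coordinatewise. The only nontrivial input is the gradient homogeneity from part~(2) of Lemma~\ref{lem:scale_invariance}, namely $\nabla L_t(c\vtheta) = c^{-1}\nabla L_t(\vtheta)$ for every $c>0$; everything else is bookkeeping of the two scalings.

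For the left-hand side, applying $\pa^c\circ\pb^{c^2}$ to $(\vtheta,\eta)$ yields $(c\vtheta, c^2\eta)$, and then $\GDW^\rho_t$ sends this to $(\rho c\vtheta - c^2\eta\,\nabla L_t(c\vtheta),\, c^2\eta)$. Substituting $\nabla L_t(c\vtheta) = c^{-1}\nabla L_t(\vtheta)$ turns the first coordinate into $c\bigl(\rho\vtheta - \eta\,\nabla L_t(\vtheta)\bigr)$, so the output is $\bigl(c(\rho\vtheta - \eta\,\nabla L_t(\vtheta)),\, c^2\eta\bigr)$. For the right-hand side, $\GDW^\rho_t$ acts first, producing $(\rho\vtheta - \eta\,\nabla L_t(\vtheta),\, \eta)$, and then $\pa^c\circ\pb^{c^2}$ multiplies the parameter by $c$ and the learning rate by $c^2$, giving the same state $\bigl(c(\rho\vtheta - \eta\,\nabla L_t(\vtheta)),\, c^2\eta\bigr)$. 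Hence the two compositions agree as maps, and unrolling the definition of $\osim{c}$ gives the restatement $(\vtheta,\eta)\osim{c}(\vtheta',\eta') \Rightarrow \GDW^\rho_t(\vtheta,\eta)\osim{c}\GDW^\rho_t(\vtheta',\eta')$.

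I do not expect a genuine obstacle here; the content of the lemma is exactly that the weight-decayed GD step respects the one-parameter symmetry family $\{\pa^c\circ\pb^{c^2}\}_{c>0}$ of a scale-invariant loss. The one place to stay careful is the exponent arithmetic: it is precisely because the learning rate carries $c^2$ while the gradient scales by $c^{-1}$ that the factor $c^2\eta\cdot c^{-1}$ simplifies to $c\eta$, so that the whole parameter update pulls out a single factor of $c$. I would also note that the identity holds for all $c,\rho>0$ with no relation imposed between them, and for all $t\ge 0$ (needed only so that $L_t$ is defined), so no side conditions enter.
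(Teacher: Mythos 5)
Your proof is correct and follows essentially the same route as the paper: evaluate both compositions on an arbitrary state and use the gradient homogeneity $\nabla L_t(c\vtheta)=c^{-1}\nabla L_t(\vtheta)$ from Lemma~\ref{lem:scale_invariance}(2) to pull the factor $c$ out of the update. Your exponent bookkeeping ($c^2\eta\cdot c^{-1}=c\eta$) is exactly the crux, and if anything your writeup is cleaner than the paper's, which carries a few typographical slips in the same computation.
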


\vspace{-0.25cm}
Now we formally define equivalence relationship between maps using equivalent scalings. 

\begin{definition}[Equivalent Maps]\label{defi:equivalent_maps}
Two maps $F,G$ are \emph{equivalent} iff $\exists c>0$, $F = \pa^c\circ \pb^{c^2}\circ G$, which is also denoted by $F\osim{c}G$.
\end{definition}
\begin{pf}[Proof of Theorem~\ref{thm:main_gd}]
By Lemma~\ref{lem:gdw},, $\GDW^\rho_t \osim{\rho} \pb^{\rho^{-1}} \circ\GD_t\circ \pb^{\rho^{-1}}$. By Lemma~\ref{lem:commute},  GD update preserves map equivalence, i.e. $F\osim{c} G \Rightarrow \GD_t^\rho\circ F \osim{c} \GD_t^\rho\circ G, \forall c,\rho>0$. Thus,
\[\GDW^\rho_{t-1}\circ \cdots \circ \GDW^\rho_0\ \osim{\rho^t} \  \pb^{\rho^{-1}} \circ\GD_{t-1}\circ \pb^{\rho^{-2}}\circ\cdots \circ \GD_1\circ\pb^{\rho^{-2}} \circ \GD_0 \circ \pb^{\rho^{-1}}. \qqed\]
\end{pf}





\subsection{Replacing WD by Exponential LR: Case of constant LR with momentum}\label{subsec:fixed_lr_momentum}

In this subsection the setting is the same to that in Subsection~\ref{subsec:fixed_lr} except that the momentum factor is $\gamma$ instead of 0. Suppose the initial  momentum is $\vv_0$, we set $\vtheta_{-1} = \vtheta_{0}-\vv_0\eta$. Presence of momentum requires representing the state of the algorithm  with four coordinates, $(\vtheta,\eta,\vtheta',\eta')$,  which stand respectively for the current parameters/LR and the buffered parameters/LR (from last iteration) respectively. Similarly, we define the following basic maps and equivalence relationships.
\vspace{-0.2cm}
\begin{enumerate}
    \item \makebox[13em][l]{Run GD with WD for a step:} $\GDW^\rho_t(\vtheta,\eta,\vtheta',\eta') = \left(\rho \vtheta + \eta \left(\gamma\frac{\vtheta-\vtheta'}{\eta'} - \nabla L_t(\vtheta)\right), \eta, \vtheta, \eta\right)$;
    \item \makebox[13em][l]{Scale Current parameter $\vtheta$ } $\pa^c(\vtheta,\eta,\vtheta',\eta') =  (c\vtheta, \eta,\vtheta',\eta')$;
    \item \makebox[13em][l]{Scale Current LR $\eta$:} $\pb^c(\vtheta,\eta,\vtheta',\eta') =(\vtheta,c \eta,\vtheta',\eta')$;
    \item \makebox[13em][l]{Scale Buffered parameter $\vtheta'$:} $\pc^c(\vtheta,\eta,\vtheta',\eta') =(\vtheta,\eta, c \vtheta',\eta')$;
    \item \makebox[13em][l]{Scale Buffered parameter $\eta'$:} $\pd^c(\vtheta,\eta,\vtheta',\eta') =(\vtheta,\eta,\vtheta',c \eta')$.
\end{enumerate}

\begin{definition}[Equivalent States]
$(\vtheta,\eta,\vtheta',\eta')$ is \emph{equivalent} to $ (\vttheta,\teta,\vttheta',\teta')$ iff $\exists c>0, \ (\vtheta,\eta,\vtheta',\eta') = \left[\pa^c\circ \pb^{c^2}\circ \pc^c\circ \pd^{c^2}\right](\vttheta,\teta,\vttheta',\teta') =  (c\vttheta,c^2\teta,c\vttheta',c^2\teta')$, which is also denoted by $(\vtheta,\eta,\vtheta',\eta') \osim{c} (\vttheta,\teta,\vttheta',\teta')$.
We call $\pa^c\circ \pb^{c^2}\circ \pc^c\circ \pd^{c^2}$ \emph{Equivalent Scalings} for all $c>0$.
\end{definition}

Again by expanding the definition, we show equivalent scalings commute with GD update.
\begin{lemma}\label{lem:commute_momentum}
$\forall c,\rho>0$ and $t\ge 0$, $\GDW^\rho_t\circ\left[ \pa^{c}\circ\pb^{c^2}\circ\pc^{c}\circ\pd^{c^2}\right] = \left[ \pa^{c}\circ\pb^{c^2}\circ\pc^{c}\circ\pd^{c^2}\right]\circ \GDW^\rho_t$. 
\end{lemma}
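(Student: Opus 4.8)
The plan is to prove the identity by a direct coordinatewise computation, expanding both composites on a generic state $(\vtheta,\eta,\vtheta',\eta')$ and checking that all four output coordinates agree. First I would record that, since the basic maps $\pa^c,\pb^{c^2},\pc^c,\pd^{c^2}$ each touch a distinct coordinate, they commute among themselves and their composite --- the equivalent scaling --- acts simply as $E^c := \pa^{c}\circ\pb^{c^2}\circ\pc^{c}\circ\pd^{c^2} : (\vtheta,\eta,\vtheta',\eta') \mapsto (c\vtheta,\, c^2\eta,\, c\vtheta',\, c^2\eta')$, i.e. a factor $c$ on the two parameter-type coordinates and $c^2$ on the two LR-type coordinates.

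Next I would compute the right-hand side $E^c\circ\GDW^\rho_t$. Applying $\GDW^\rho_t$ first yields $\bigl(\rho\vtheta + \eta(\gamma\frac{\vtheta-\vtheta'}{\eta'} - \nabla L_t(\vtheta)),\, \eta,\, \vtheta,\, \eta\bigr)$, and then $E^c$ multiplies these coordinates by $c,c^2,c,c^2$, giving $\bigl(c\rho\vtheta + c\eta(\gamma\frac{\vtheta-\vtheta'}{\eta'} - \nabla L_t(\vtheta)),\, c^2\eta,\, c\vtheta,\, c^2\eta\bigr)$. For the left-hand side $\GDW^\rho_t\circ E^c$ I would apply $\GDW^\rho_t$ to $(c\vtheta,\, c^2\eta,\, c\vtheta',\, c^2\eta')$: the last three output coordinates are immediately $c^2\eta,\, c\vtheta,\, c^2\eta$, matching the RHS, while the first coordinate is $\rho c\vtheta + c^2\eta\bigl(\gamma\frac{c\vtheta - c\vtheta'}{c^2\eta'} - \nabla L_t(c\vtheta)\bigr)$. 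The one substantive step is invoking scale invariance of the loss: by Lemma~\ref{lem:scale_invariance}(2), $\nabla L_t(c\vtheta) = c^{-1}\nabla L_t(\vtheta)$. Plugging this in, the momentum fraction $\gamma\frac{c\vtheta-c\vtheta'}{c^2\eta'}$ becomes $c^{-1}\gamma\frac{\vtheta-\vtheta'}{\eta'}$, the gradient term becomes $c^{-1}\nabla L_t(\vtheta)$, and the prefactor $c^2\eta$ collapses to $c\eta$, so the first coordinate equals $c\rho\vtheta + c\eta\bigl(\gamma\frac{\vtheta-\vtheta'}{\eta'} - \nabla L_t(\vtheta)\bigr)$ --- exactly the RHS. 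This establishes $\GDW^\rho_t\circ E^c = E^c\circ\GDW^\rho_t$, and the restatement $(\vtheta,\eta,\vtheta',\eta') \osim{c}(\vttheta,\teta,\vttheta',\teta') \Rightarrow \GDW^\rho_t(\vtheta,\eta,\vtheta',\eta')\osim{c}\GDW^\rho_t(\vttheta,\teta,\vttheta',\teta')$ follows immediately.

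Since the argument reduces to one algebraic identity, there is no genuine obstacle; the only point requiring care is the bookkeeping of the four coordinates together with the exact powers of $c$ ($c$ on the parameter slots, $c^2$ on the LR slots). That asymmetry is precisely what makes the $c^{-1}$ scaling of the gradient (and of the buffered-difference-over-LR momentum term) cancel against the $c^2$ scaling of the learning rate, which is the whole content of the lemma. I would therefore present the proof as: (i) observe the coordinatewise action of $E^c$; (ii) expand both sides; (iii) apply Lemma~\ref{lem:scale_invariance}(2) to the gradient; (iv) match coordinates.
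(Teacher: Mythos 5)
Your proof is correct and follows essentially the same route as the paper's: a direct coordinatewise expansion of both composites, with the only substantive step being Lemma~\ref{lem:scale_invariance}(2) applied to $\nabla L_t(c\vtheta)$ so that the $c^{-1}$ from the gradient and from the buffered-difference-over-LR term cancels against the $c^2$ on the learning rate. If anything, your bookkeeping of the powers of $c$ in the momentum fraction is more careful than the paper's own display.
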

\vspace{-0.25cm}

Similarly, we can rewrite $\GDW^\rho_t$ as a composition of vanilla GD update and other scalings by expanding the definition, when the current and buffered LR are the same in the input of $\GDW^\rho_t$.




\begin{lemma}\label{lem:gdw_momentum}
For any input $(\vtheta,\eta,\vtheta',\eta)$, if $\alpha>0$ is a root of  $\alpha+\gamma\alpha^{-1} =\rho+\gamma$, then \\ $\GDW^\rho_t(\vtheta,\eta,\vtheta',\eta)= \left[\pd^{\alpha}\circ\pb^{\alpha} \circ \pa^{\alpha}\circ \GD_t \circ\pb^{\alpha^{-1}}\circ\pc^{\alpha}\circ \pd^{\alpha}\right](\vtheta,\eta,\vtheta',\eta)$. In other words,
\vspace{-0.25cm}
 \begin{equation}\label{eq:gdw_momentum}
     \GDW^\rho_t(\vtheta,\eta,\vtheta',\eta) \osim{\alpha} 
\left[\pc^{\alpha^{-1}}\circ\pd^{\alpha^{-1}} \circ \pb^{\alpha^{-1}}\circ \GD_t \circ\pb^{\alpha^{-1}}\circ\pc^{\alpha}\circ \pd^{\alpha}\right]
(\vtheta,\eta,\vtheta',\eta).
 \end{equation}
\end{lemma}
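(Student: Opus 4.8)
The plan is to prove Lemma~\ref{lem:gdw_momentum} by direct algebraic expansion of both sides as state transformations on $\reals^n \times \reals^+ \times \reals^n \times \reals^+$, exactly in the spirit of the momentum-free Lemma~\ref{lem:gdw}. First I would fix an input of the special form $(\vtheta,\eta,\vtheta',\eta)$ — crucially with equal current and buffered LR, which is the hypothesis of the lemma — and compute the right-hand composition $\pd^{\alpha}\circ\pb^{\alpha} \circ \pa^{\alpha}\circ \GD_t \circ\pb^{\alpha^{-1}}\circ\pc^{\alpha}\circ \pd^{\alpha}$ applied to it, one map at a time, using the definitions of $\pa,\pb,\pc,\pd$ and $\GD_t = \GDW^1_t$ from the list preceding the lemma. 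Reading right to left: $\pd^{\alpha}$ sends $\eta'$ to $\alpha\eta$; $\pc^\alpha$ sends $\vtheta'$ to $\alpha\vtheta'$; $\pb^{\alpha^{-1}}$ sends $\eta$ to $\alpha^{-1}\eta$; then $\GD_t$ (the $\rho=1$ momentum update) acts, producing new current parameter $\vtheta - \alpha^{-1}\eta\big(\gamma\frac{\vtheta - \alpha\vtheta'}{\alpha\eta} - \nabla L_t(\vtheta)\big)$ and shifting the buffers; and finally $\pa^\alpha,\pb^\alpha,\pd^\alpha$ rescale. I would collect the four resulting coordinates into closed form in terms of $\vtheta,\vtheta',\eta,\alpha,\gamma$.

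Next I would compute the left-hand side, $\GDW^\rho_t(\vtheta,\eta,\vtheta',\eta)$, directly from its definition, obtaining $\big(\rho\vtheta + \eta(\gamma\frac{\vtheta-\vtheta'}{\eta} - \nabla L_t(\vtheta)),\ \eta,\ \vtheta,\ \eta\big) = \big(\rho\vtheta + \gamma(\vtheta - \vtheta') - \eta\nabla L_t(\vtheta),\ \eta,\ \vtheta,\ \eta\big)$. Then I would match this against the RHS computation coordinate-by-coordinate. The buffered coordinates will force the bookkeeping relations on $\alpha$ automatically, and the current-parameter coordinate will produce a term of the form (coefficient)$\cdot\vtheta$ $-$ (coefficient)$\cdot\vtheta'$ $-$ $\eta\nabla L_t(\vtheta)$; equating the $\vtheta$-coefficient to $\rho + \gamma$ (the part not multiplying $\vtheta'$, after the $\gamma\vtheta$ from the momentum term is absorbed) and the $\vtheta'$-coefficient to $\gamma$ gives precisely the two scalar equations $\alpha + \gamma\alpha^{-1} = \rho + \gamma$. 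Since $\alpha$ is assumed to be a positive root of exactly this equation, the two sides agree; this also explains why $\alpha$ must be chosen as a root of that quadratic (equivalently \eqref{eq:quadratic} after substituting $\rho = 1 - \lambda\eta$). For the gradient term one uses that $\GD_t$ is invoked at parameter $\vtheta$ on both sides — the intermediate $\pc,\pd$ scalings do not touch the current parameter, so no scale-invariance rescaling of $\nabla L_t$ is needed here, only the plain chain of substitutions.

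Having established the displayed identity $\GDW^\rho_t(\vtheta,\eta,\vtheta',\eta)= \left[\pd^{\alpha}\circ\pb^{\alpha} \circ \pa^{\alpha}\circ \GD_t \circ\pb^{\alpha^{-1}}\circ\pc^{\alpha}\circ \pd^{\alpha}\right](\vtheta,\eta,\vtheta',\eta)$, the second form \eqref{eq:gdw_momentum} follows immediately: $\pd^{\alpha}\circ\pb^{\alpha}\circ\pa^{\alpha}$ is not itself an equivalent scaling (that would be $\pa^{\alpha}\circ\pb^{\alpha^2}\circ\pc^{\alpha}\circ\pd^{\alpha^2}$), so I would rewrite $\pd^{\alpha}\circ\pb^{\alpha}\circ\pa^{\alpha} = \big(\pa^{\alpha}\circ\pb^{\alpha^2}\circ\pc^{\alpha}\circ\pd^{\alpha^2}\big)\circ\big(\pc^{\alpha^{-1}}\circ\pd^{\alpha^{-1}}\circ\pb^{\alpha^{-1}}\big)$, using that the five scaling maps all commute with one another (each acts on a distinct coordinate). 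Pulling the equivalent scaling $\pa^{\alpha}\circ\pb^{\alpha^2}\circ\pc^{\alpha}\circ\pd^{\alpha^2}$ out to the front converts the equality into the asserted $\osim{\alpha}$ relation, i.e.\ $\GDW^\rho_t(\vtheta,\eta,\vtheta',\eta) \osim{\alpha} \left[\pc^{\alpha^{-1}}\circ\pd^{\alpha^{-1}} \circ \pb^{\alpha^{-1}}\circ \GD_t \circ\pb^{\alpha^{-1}}\circ\pc^{\alpha}\circ \pd^{\alpha}\right](\vtheta,\eta,\vtheta',\eta)$.

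The main obstacle is purely the careful bookkeeping in the middle step: tracking how the buffered LR $\eta'$ (set equal to $\eta$ by hypothesis) flows through $\pd^\alpha$ before $\GD_t$ reads it in the denominator $\frac{\vtheta-\vtheta'}{\eta'}$, and simultaneously how $\vtheta'$ is scaled by $\pc^\alpha$, so that the momentum term inside $\GD_t$ comes out as $\gamma\frac{\vtheta - \alpha\vtheta'}{\alpha\eta}$ — the two factors of $\alpha$ must cancel in just the right way to leave $\frac{\gamma}{\eta}(\vtheta - \vtheta') \cdot$ (something) rather than a spurious $\alpha$-dependence, and it is exactly this cancellation that produces the $\gamma\alpha^{-1}$ on the left of the defining equation for $\alpha$. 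There is no conceptual difficulty, but an off-by-one in which of $\pb^{\alpha^{-1}}$ vs.\ $\pd^{\alpha}$ rescales which LR will silently break the coefficient matching, so I would verify the computation by also checking the degenerate case $\gamma = 0$, where the statement must collapse to Lemma~\ref{lem:gdw} with $\rho = \alpha$.
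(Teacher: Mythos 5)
Your proposal is correct and follows essentially the same route as the paper's proof: a direct coordinate-by-coordinate expansion of the composition on the special input $(\vtheta,\eta,\vtheta',\eta)$, with the defining equation $\alpha+\gamma\alpha^{-1}=\rho+\gamma$ emerging from matching the coefficient of $\vtheta$ in the first coordinate (the $\vtheta'$-coefficients and the gradient terms match automatically), and the $\osim{\alpha}$ form obtained by factoring $\pd^{\alpha}\circ\pb^{\alpha}\circ\pa^{\alpha}$ through the equivalent scaling $\pa^{\alpha}\circ\pb^{\alpha^2}\circ\pc^{\alpha}\circ\pd^{\alpha^2}$ using commutativity of the coordinate scalings. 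One small transcription slip: by the paper's definition, $\GD_t$ updates the current parameter to $\vtheta+\alpha^{-1}\eta\left(\gamma\frac{\vtheta-\alpha\vtheta'}{\alpha\eta}-\nabla L_t(\vtheta)\right)$, not with the leading minus sign you wrote; with that sign corrected, your coefficient matching yields exactly the stated equation.
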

\vspace{-0.25cm}
Though looking complicated, the RHS of Equation~\ref{eq:gdw_momentum} is actually the desired $\pb^{\alpha^{-1}}\circ \GD_t \circ\pb^{\alpha^{-1}}$ conjugated with some scaling  on momentum part $\pc^{\alpha}\circ\pd^{\alpha} $, and $\pc^{\alpha^{-1}}\circ\pd^{\alpha^{-1}} $ in the current update cancels with the $\pc^{\alpha}\circ\pd^{\alpha} $ in the next update. Now we are ready to show the equivalence between WD and Exp LR schedule when momentum is turned on for both. 

  \begin{figure}[]
  \vspace{-1.5cm}
    \centering
    \begin{subfigure}[t]{0.5\textwidth}
        \centering
        \includegraphics[width=0.99\textwidth]{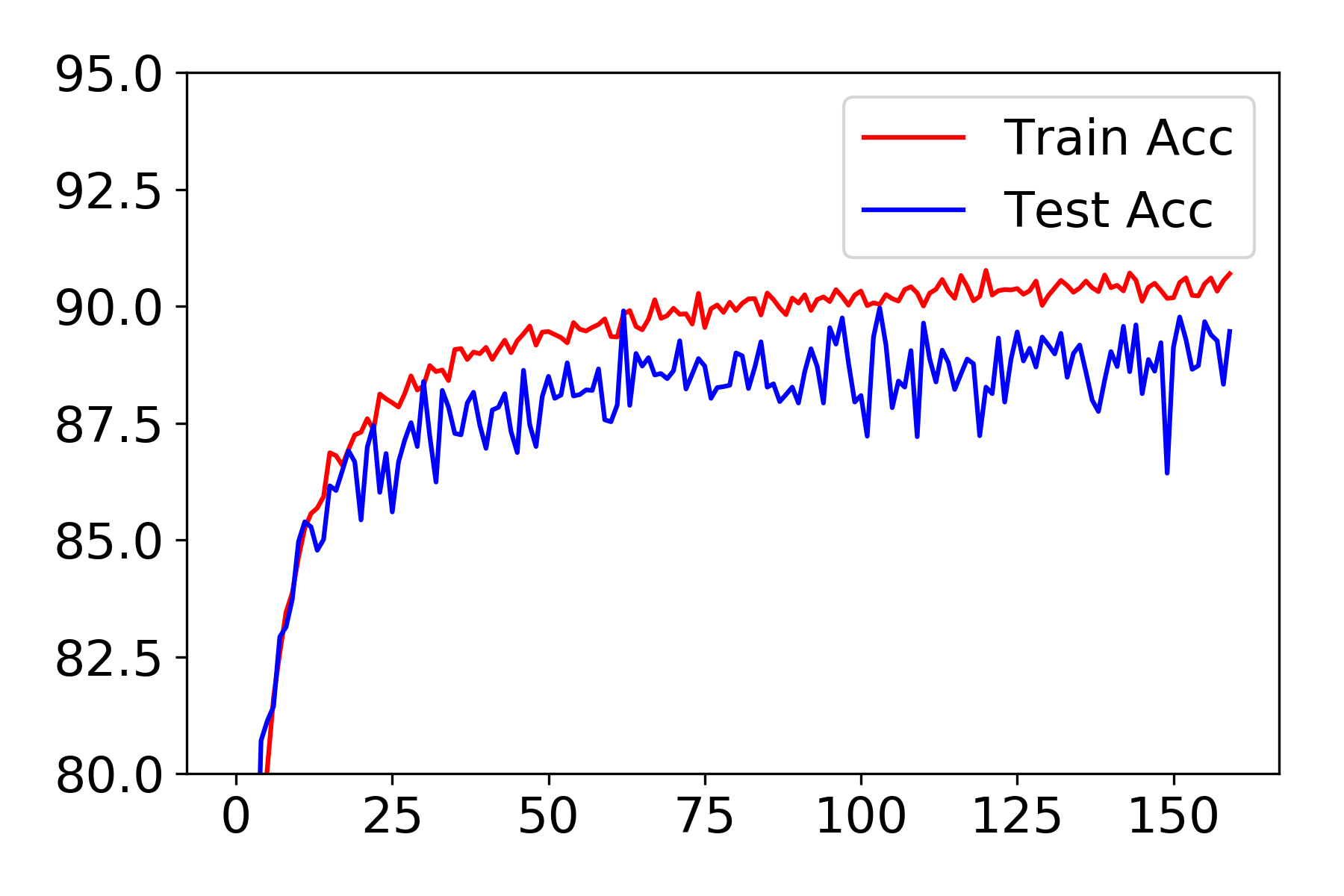}
    \end{subfigure}%
    ~
    \begin{subfigure}[t]{0.5\textwidth}
        \centering
        \includegraphics[width=0.99\textwidth]{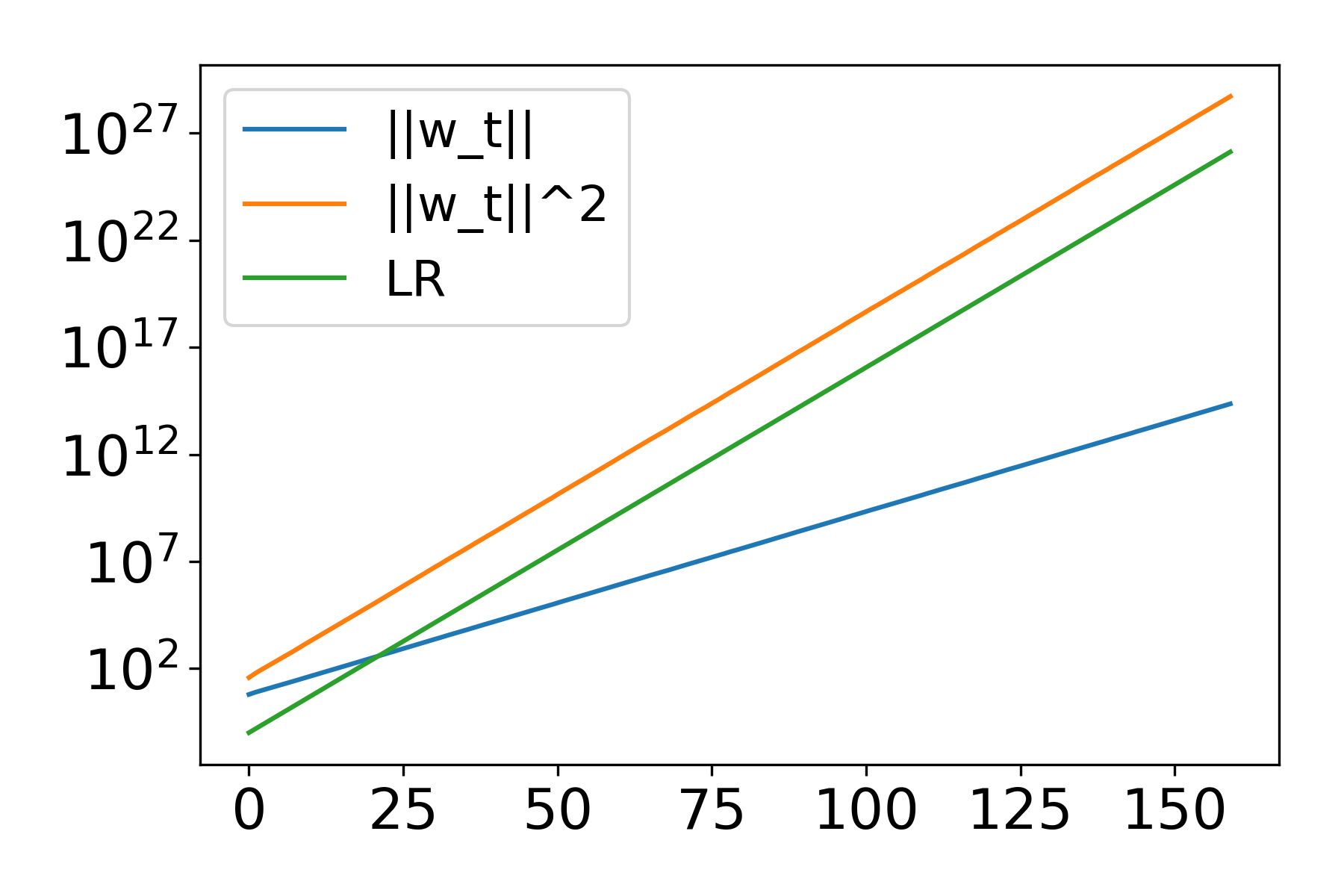}
        
    \end{subfigure}
    \caption{Taking PreResNet32 with standard hyperparameters and replacing WD during first phase (Fixed LR) by exponential LR according to Theorem~\ref{thm:main_gd_momentum} to the schedule $\teta_t = 0.1\times1.481^t$, momentum $0.9$. Plot on right shows  weight norm $\vw$ of the first convolutional layer in the second residual block grows exponentially, satisfying $\frac{\norm{\vw_t}^2}{\teta_t} = $ constant. Reason being that according to the proof it is essentially the norm square of the  weights when trained with Fixed LR + WD + Momentum, and published hyperparameters kept this norm roughly constant during training.
    	}
    \label{fig:exp_lr_norm_acc}
    \vspace{-0.2cm}
\end{figure}

\begin{theorem}[GD + WD $\Leftrightarrow$ GD+ Exp LR; With Momentum]\label{thm:main_gd_momentum}
      The following defined two sequences of parameters ,$\{\vtheta_t\}_{t=0}^\infty$ and $\{\vttheta_t\}_{t=0}^\infty$, satisfy $\vttheta_t = \alpha^{t}\vtheta_t$, thus they correspond to the same networks in function space, i.e. $f_{\vtheta_t} = f_{\vttheta_t}, \ \forall t\in\sN$, given $\vttheta_0 = \vtheta_0$, $\vttheta_{-1} = \vtheta_{-1}\alpha$, and $\teta_t = \eta_0\alpha^{-2t-1}$.

\begin{enumerate}
\vspace{-0.2cm}
    \item $\frac{\vtheta_t - \vtheta_{t-1}}{\eta_0}  = \frac{\gamma(\vtheta_{t-1} - \vtheta_{t-2})}{\eta_0} - \nabla_\vtheta (L(\vtheta_{t-1})+ \frac{\lambda}{2}\norm{\vtheta_{t-1}}_2^2) $
    \item $\frac{\vttheta_t - \vttheta_{t-1}}{\teta_t}  = \frac{\gamma(\vttheta_{t-1} - \vttheta_{t-2})}{\teta_{t-1}} - \nabla_\vtheta L(\vttheta_{t-1}) $
\vspace{-0.2cm}
\end{enumerate}


  where $\alpha$ is a positive root of equation $ x^2-(1+\gamma-\lambda\eta_0)x + \gamma = 0$,
 which is always smaller than 1(See Appendix~\ref{subsec:pf1}). When $\gamma = 0$, $\alpha = 1-\lambda\eta_0$ is the unique non-zero solution.
\end{theorem}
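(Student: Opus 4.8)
The plan is to mirror exactly the strategy of the momentum-free case (Theorem~\ref{thm:main_gd}), but now working in the four-coordinate state space $(\vtheta,\eta,\vtheta',\eta')$ and using the lemmas just established. The target is to show that the "physical" trajectory $\{\vtheta_t\}$ generated by Fixed LR $\eta_0$ + WD $\lambda$ + Momentum $\gamma$ (item 1) and the "exponential" trajectory $\{\vttheta_t\}$ generated by Exp LR $\teta_t = \eta_0\alpha^{-2t-1}$ + Momentum $\gamma$ + no WD (item 2) are related by the pure rescaling $\vttheta_t = \alpha^t\vtheta_t$ at every step, hence represent the same function by scale invariance.

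\textbf{Step 1: Set up the maps.} Writing $\rho = 1-\lambda\eta_0$, the physical process is the iterated map $\GDW^\rho_{t-1}\circ\cdots\circ\GDW^\rho_0$ applied to the initial state $(\vtheta_0,\eta_0,\vtheta_{-1},\eta_0)$ (the buffered LR equals $\eta_0$ since LR is constant, and $\vtheta_{-1} = \vtheta_0 - \vv_0\eta_0$ encodes the initial momentum). Let $\alpha$ be the positive root of $x^2-(1+\gamma-\lambda\eta_0)x+\gamma=0$; note $\alpha + \gamma\alpha^{-1} = 1+\gamma-\lambda\eta_0 = \rho+\gamma$, which is exactly the hypothesis of Lemma~\ref{lem:gdw_momentum}. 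I should first check that this $\alpha$ is real, positive, and $<1$ — real and positive because $\lambda\eta_0 < (1-\sqrt\gamma)^2$ guarantees positive discriminant and both roots positive; $<1$ because the product of the two roots is $\gamma < 1$ and... actually one shows $\alpha<1$ directly by evaluating the quadratic at $x=1$: $1-(1+\gamma-\lambda\eta_0)+\gamma = \lambda\eta_0 > 0$, and since the parabola opens upward with vertex at $x=(1+\gamma-\lambda\eta_0)/2<1$, the smaller root lies below $1$. (This is the content deferred to Appendix~\ref{subsec:pf1}.)

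\textbf{Step 2: Compose the conjugations and telescope.} By Lemma~\ref{lem:gdw_momentum}, on any state whose current and buffered LR agree,
\[
\GDW^\rho_t \osim{\alpha} \pc^{\alpha^{-1}}\circ\pd^{\alpha^{-1}}\circ\pb^{\alpha^{-1}}\circ\GD_t\circ\pb^{\alpha^{-1}}\circ\pc^{\alpha}\circ\pd^{\alpha}.
\]
Here the key structural observation, already flagged in the text, is that the trailing $\pc^{\alpha}\circ\pd^{\alpha}$ of one factor cancels against the leading $\pc^{\alpha^{-1}}\circ\pd^{\alpha^{-1}}$ of the next when we compose $\GDW^\rho_{t-1}\circ\cdots\circ\GDW^\rho_0$; and by Lemma~\ref{lem:commute_momentum} each equivalent scaling $\pa^c\circ\pb^{c^2}\circ\pc^c\circ\pd^{c^2}$ commutes past every $\GDW^\rho$, so we may collect all the accumulated equivalent-scaling factors (one factor of $\alpha$ per step) into a single $\osim{\alpha^t}$ at the front. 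This yields
\[
\GDW^\rho_{t-1}\circ\cdots\circ\GDW^\rho_0 \osim{\alpha^t} \pc^{\alpha^{-1}}\circ\pd^{\alpha^{-1}}\circ\pb^{\alpha^{-1}}\circ\GD_{t-1}\circ\pb^{\alpha^{-2}}\circ\GD_{t-2}\circ\cdots\circ\GD_1\circ\pb^{\alpha^{-2}}\circ\GD_0\circ\pb^{\alpha^{-1}}\circ\pc^{\alpha}\circ\pd^{\alpha},
\]
exactly parallel to Theorem~\ref{thm:main_gd} but with extra buffered-coordinate scalings at the two ends that are harmless (they act only on the momentum bookkeeping and don't change $\vtheta_t$, and in fact the $\pc^\alpha\circ\pd^\alpha$ at the right end precisely accounts for the stated initialization $\vttheta_{-1}=\alpha\vtheta_{-1}$).

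\textbf{Step 3: Read off the RHS as the exponential-LR process.} The right-hand composition, unwound, is a momentum SGD run with no weight decay whose LR is multiplied by $\alpha^{-2}$ each step (and by $\alpha^{-1}$ on the first step), i.e. $\teta_t = \eta_0\alpha^{-2t-1}$; the $\osim{\alpha^t}$ on the left means the physical state equals $\pa^{\alpha^t}\circ(\text{scalings on }\eta,\vtheta',\eta')$ applied to this exponential state, so in particular the parameter coordinate satisfies $\vtheta_t = \alpha^t\vttheta_t$, equivalently $\vttheta_t = \alpha^{-t}\vtheta_t$ — wait, I need to be careful with the direction: equivalent scaling $\osim{c}$ maps the exponential process to the physical one by multiplying $\vtheta$ by $c$, so $\vtheta_t = \alpha^t\vttheta_t$, which rearranges to the claimed $\vttheta_t = \alpha^{-t}\vtheta_t$; I'll double-check against the $\gamma=0$ specialization where $\alpha=\rho=1-\lambda\eta_0<1$, so $\vttheta_t$ blows up while $\vtheta_t$ stays bounded — consistent with "WD keeps weights small, exp-LR lets them grow." Finally, scale invariance of $L$ (Lemma~\ref{lem:scale_invariance}) gives $f_{\vtheta_t}=f_{\vttheta_t}$ for all $t$. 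Translating the statement about maps into the recursions in items 1–2 is then just unpacking definitions: item 1 is the Definition~\ref{defi:sgd_momentum_wd} recursion with constant $\eta_0$, and item 2 is the same recursion with no WD and LR $\teta_t$, with the buffered LR in the $\gamma$-term being $\teta_{t-1}$ as written.

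\textbf{Main obstacle.} The conceptual core is entirely in Lemma~\ref{lem:gdw_momentum} and Lemma~\ref{lem:commute_momentum}, which are assumed; given those, the only real work is the bookkeeping of the four boundary scalings $\pc,\pd$ at the ends of the telescoped composition and verifying they are exactly absorbed by the prescribed initialization $\vttheta_0=\vtheta_0$, $\vttheta_{-1}=\alpha\vtheta_{-1}$. So the hard part is not any one calculation but getting the indexing and the left-vs-right direction of $\osim{\alpha^t}$ consistent throughout, and confirming that the "extra" $\pc^{\alpha}\circ\pd^{\alpha}$ really does reproduce $\vttheta_{-1}=\alpha\vtheta_{-1}$ rather than some off-by-one variant; a careful induction on $t$ (base case $t=0$ trivial, inductive step one application of Lemma~\ref{lem:gdw_momentum} plus Lemma~\ref{lem:commute_momentum}) is the cleanest way to make this airtight and is the route I would actually write up.
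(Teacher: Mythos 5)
Your proof is correct and follows essentially the paper's own route: telescope Lemma~\ref{lem:gdw_momentum} across the $t$ steps, cancel the interior $\pc^{\alpha}\circ\pd^{\alpha}$ conjugations against the $\pc^{\alpha^{-1}}\circ\pd^{\alpha^{-1}}$ of the next factor, commute the accumulated equivalent scalings to the front via Lemma~\ref{lem:commute_momentum}, absorb the boundary scalings into the stated initialization, and conclude by induction plus scale invariance. Your careful resolution of the scaling direction, giving $\vtheta_t=\alpha^{t}\vttheta_t$ (equivalently $\vttheta_t=\alpha^{-t}\vtheta_t$ with $\alpha<1$), is the consistent one --- it matches Theorem~\ref{thm:main_gd} at $\gamma=0$ and the exponential norm growth in Figure~\ref{fig:exp_lr_norm_acc} --- whereas the exponent sign in the theorem statement (and the direction of $\osim{\alpha^t}$ in the paper's displayed equivalence) is written the other way round.
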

\begin{remark}
Above we implicitly assume that  $\lambda\eta_0\le(1-\sqrt{\gamma})^2$ such that the roots are real and this is always true in practice. For instance of standard hyper-parameters where $\gamma =0.9, \eta_0 = 0.1, \lambda = 0.0005$,  $\frac{\lambda\eta_0}{(1-\sqrt{\gamma})^2} \approx 0.019 \ll 1$.
\end{remark}
\vspace{-0.3cm}
\begin{proof}
Note that $(\vttheta_0,\teta_{0},\vttheta_{-1},\teta_{-1}) =  \left[\pb^{\alpha^{-1}}\circ \pc^{\alpha}\circ \pd^{\alpha}\right](\vtheta_0,\eta_{0},\vtheta_{0},\eta_0)$, it suffices to show that 
\[\begin{split}
&\left[\pc^{\alpha^{-1}}\circ\pd^{\alpha^{-1}} \circ \pb^{\alpha^{-1}} \circ\GD_{t-1}\circ \pb^{\alpha^{-2}}\circ\cdots \circ \GD_1\circ\pb^{\alpha^{-2}} \circ \GD_0 \circ \pb^{\alpha^{-1}}\circ \pc^{\alpha}\circ \pd^{\alpha}\right](\vtheta_0,\eta_{0},\vtheta_{0},\eta_0)\\
 &\osim{\alpha^t} \GDW^{1-\lambda\eta_0}_{t-1}\circ \cdots \circ \GDW^{1-\lambda\eta_0}_0 (\vtheta_0,\eta_{0},\vtheta_{0},\eta_0),\quad \forall t\ge 0.
 \end{split}\]
 
 which follows immediately from Lemma~\ref{lem:commute_momentum} and Lemma~\ref{lem:gdw_momentum} by induction.
\end{proof} 

 \subsection{Replacing WD by Exponential LR: Case of multiple LR phases} 
    \label{subsec:multiphase}
    Usual practice in  deep learning shows that reaching full training accuracy requires reducing the learning rate a few times. 
    \begin{definition}\label{defi:step_decay} \emph{Step Decay} is the (standard) learning rate schedule, where training has  $K$ phases $I = 0,1,\ldots, K-1$, where phase $I$ starts at  iteration $T_I$ ($T_0 = 0$), and all iterations in phase $I$ use a fixed learning rate of $\eta^*_I$. 
    \end{definition}

    
    The algorithm state in Section~\ref{subsec:fixed_lr_momentum}, consists of 4 components including buffered and current LR. When LR changes, the buffered and current LR are not equal, and thus Lemma~\ref{lem:gdw_momentum} cannot be applied any more. In this section we show how to fix this issue by adding extra momentum correction. In detail, we show
     the below defined Exp LR schedule leads the same trajectory of networks in function space, with one-time momentum correction at the start of each phase. We empirically find  on  CIFAR10  that  ignoring  the  correction  term  does  not change performance much.

\begin{figure}[]
\vspace{-1.5cm}
    \centering
    \begin{subfigure}[t]{0.49\textwidth}
        \centering
        \includegraphics[width=0.99\textwidth]{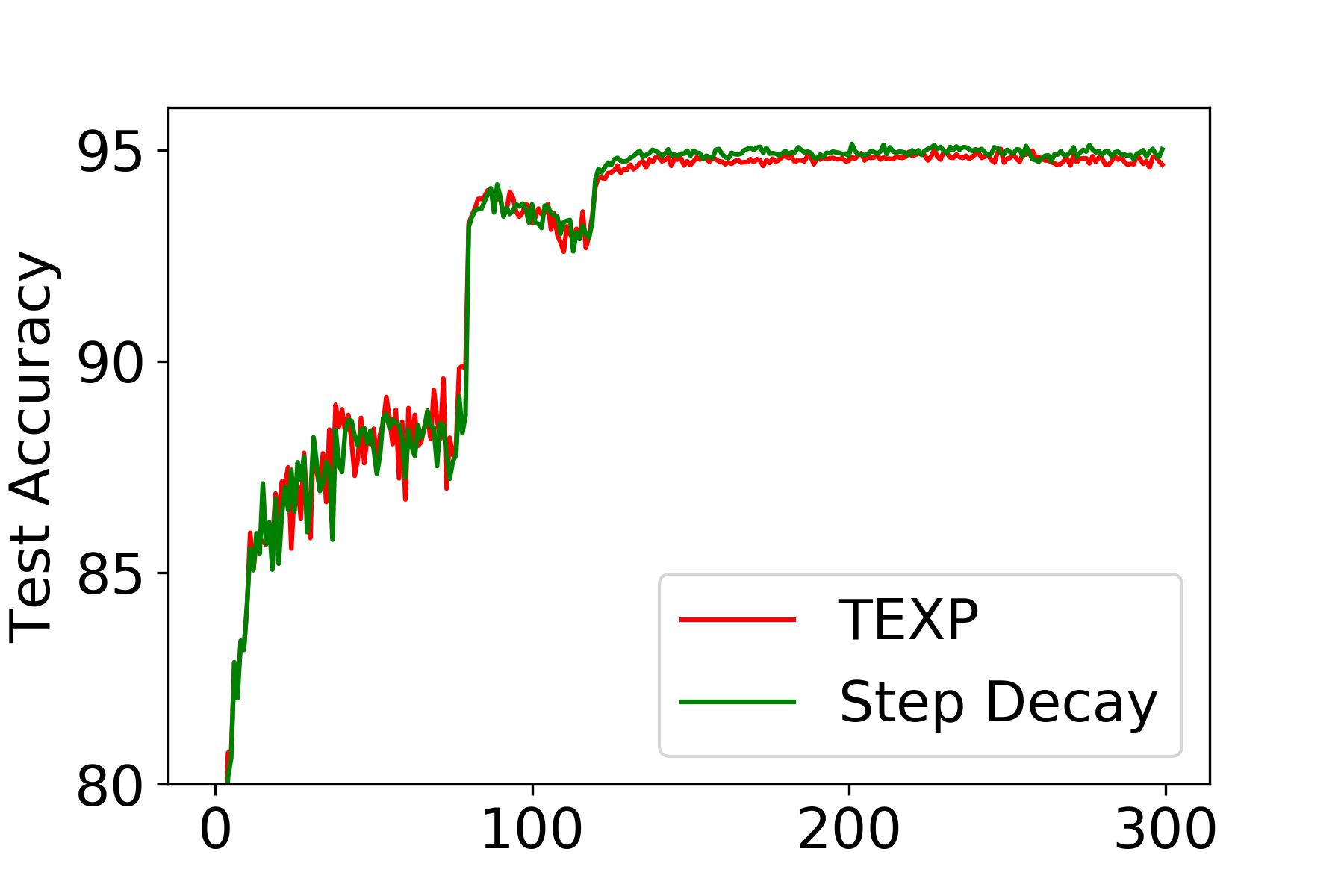}
    \end{subfigure}
    ~
    \begin{subfigure}[t]{0.49\textwidth}
        \centering
        \includegraphics[width=0.99\textwidth]{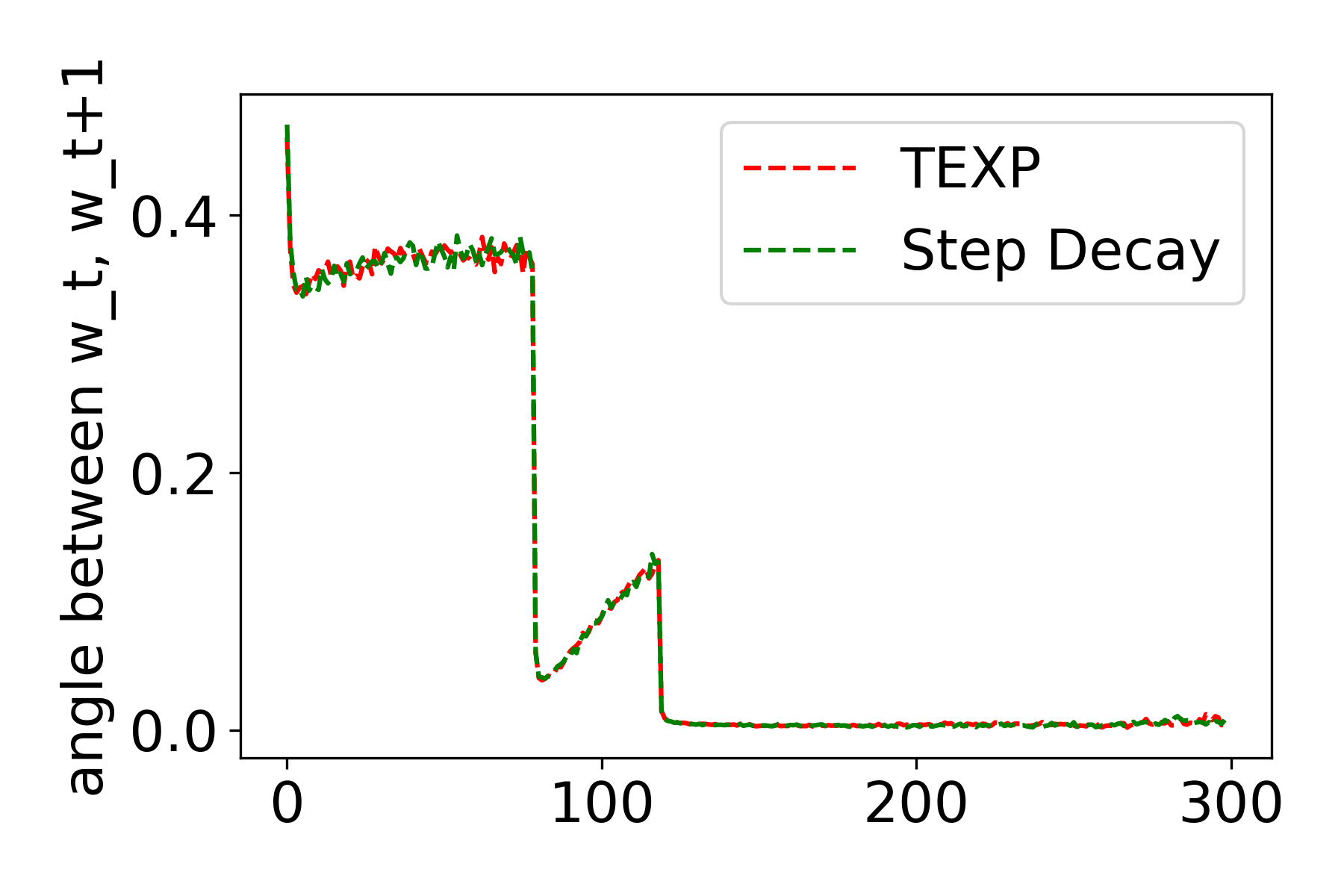}
        
    \end{subfigure}
    \caption{PreResNet32 trained with standard Step Decay and its corresponding Tapered-Exponential LR schedule. As predicted by Theorem~\ref{thm:multi_exp}, they have similar trajectories and performances. 
    	}
    \label{fig:dwd_equiva_angle_norm}
    \vspace{-0.2cm}
\end{figure}

  \begin{theorem}[Tapered-Exponential LR Schedule]\label{thm:multi_exp}
  There exists a way to correct the momentum only at the first iteration of each phase, such that the following Tapered-Exponential LR schedule (TEXP) $\{\teta_t\}$ with momentum factor $\gamma$ and no WD, leads the same sequence networks in function space as that of Step Decay LR schedule(Definition~\ref{defi:step_decay}) with momentum factor $\gamma$ and WD $\lambda$.
    \begin{equation}\label{eq:lr_calculation}
        \teta_{t} = \begin{cases} 
        \teta_{t-1}\times (\alpha^*_{I-1})^{-2} \quad &\mbox{ if $T_{I-1}+1 \le t \le T_I-1$}, I\ge 1;\\
        \teta_{t-1} \times  \frac{\eta^*_{I}}{\eta^*_{I-1}}\times (\alpha^*_I)^{-1} (\alpha^*_{I-1})^{-1} &\mbox{ if $t=T_I, I\ge 1$},
        \end{cases}
    \end{equation}
\vspace{-0.15cm}
    \textrm{where } $\alpha^*_I = \frac{1+\gamma-\lambda\eta^*_I + \sqrt{\left(1+\gamma-\lambda\eta^*_I\right)^2-4\gamma }}{2}$, $\teta_0 = \eta_0 \cdot(\alpha_0^*)^{-1} =  \eta^*_0\cdot(\alpha_0^*)^{-1}$.
   \end{theorem}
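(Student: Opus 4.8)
The plan is to extend the map-composition framework of Section~\ref{subsec:fixed_lr_momentum} to the multi-phase setting. Model the Step-Decay $+$ WD $+$ momentum run as a composition of the four-coordinate maps $\GDW^{1-\lambda\eta^*_I}_t$ of Section~\ref{subsec:fixed_lr_momentum}, one per iteration, with an extra explicit ``change current LR'' map $\pb^{\eta^*_I/\eta^*_{I-1}}$ inserted at the start of each phase $I\ge 1$ (this is exactly where $\eta_{t}\neq\eta_{t-1}$ occurs in Definition~\ref{defi:sgd_momentum_wd}). The goal is to prove by induction on the number of iterations that this composition, applied to the starting state $(\vtheta_0,\eta^*_0,\vtheta_0,\eta^*_0)$, is \emph{equivalent} (in the sense of Section~\ref{subsec:fixed_lr_momentum}) to a composition of pure vanilla updates $\GD_t$ interleaved with $\pb$-maps whose successive LR values are exactly the TEXP sequence $\{\teta_t\}$ of \eqref{eq:lr_calculation}; since equivalent states differ only by a positive scalar on the parameter coordinate, Lemma~\ref{lem:scale_invariance} then gives $f_{\vtheta_t}=f_{\vttheta_t}$ at every $t$.

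Within a single phase this is essentially Theorem~\ref{thm:main_gd_momentum}: after its first iteration, a phase keeps equal current and buffered LR (because $\GDW^\rho_t$ overwrites the buffered slot with the input current pair), so Lemma~\ref{lem:gdw_momentum} applies with $\alpha=\alpha^*_I$, the root of $x+\gamma x^{-1}=(1-\lambda\eta^*_I)+\gamma$, i.e.\ of $x^2-(1+\gamma-\lambda\eta^*_I)x+\gamma=0$. The one obstruction is the first iteration $t=T_I$ of each phase $I\ge 1$: there the input state has current LR $\eta^*_I$ but buffered LR $\eta^*_{I-1}$, so Lemma~\ref{lem:gdw_momentum} does not apply verbatim. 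This is resolved by the exact identity
\[
\GDW^\rho_{T_I}(\vtheta,\eta^*_I,\vtheta',\eta^*_{I-1})=\GDW^\rho_{T_I}\!\left(\vtheta,\eta^*_I,\ \vtheta-\tfrac{\eta^*_I}{\eta^*_{I-1}}(\vtheta-\vtheta'),\ \eta^*_I\right),
\]
which holds because $\GDW^\rho_t$ depends on the buffered coordinates only through $\gamma\frac{\vtheta-\vtheta'}{\eta'}$ and always overwrites the buffered slot with $(\vtheta,\eta)$. Re-representing the phase-boundary state this way puts it back into the matched-LR form required by Lemma~\ref{lem:gdw_momentum}, and carrying this re-representation through the chain of equivalent scalings produces precisely the one-time momentum correction that TEXP applies at $t=T_I$ (a rescaled copy of $\vtheta-\tfrac{\eta^*_I}{\eta^*_{I-1}}(\vtheta-\vtheta')$).

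The induction then proceeds phase by phase. Inside phase $I$, substitute Lemma~\ref{lem:gdw_momentum} for each $\GDW^{1-\lambda\eta^*_I}_t$; the trailing $\pc^{\alpha^*_I},\pd^{\alpha^*_I}$ produced by one step cancel the leading $\pc^{(\alpha^*_I)^{-1}},\pd^{(\alpha^*_I)^{-1}}$ of the next (telescoping, as in the discussion after \eqref{eq:gdw_momentum}), and Lemma~\ref{lem:commute_momentum} lets one push all accumulated equivalent scalings $\pa^c\circ\pb^{c^2}\circ\pc^c\circ\pd^{c^2}$ to the front without changing the underlying network. What survives is a string of $\GD_t$'s separated by $\pb$-maps, whose exponents give the LR schedule: each interior step of phase $I$ leaves a factor $(\alpha^*_I)^{-2}$; at a boundary $t=T_I$ the leftover $\pb^{(\alpha^*_{I-1})^{-1}}$ from the last step of phase $I-1$, the explicit $\pb^{\eta^*_I/\eta^*_{I-1}}$, and the leading $\pb^{(\alpha^*_I)^{-1}}$ of phase $I$ multiply to $\frac{\eta^*_I}{\eta^*_{I-1}}(\alpha^*_{I-1})^{-1}(\alpha^*_I)^{-1}$; and the very first $\pb^{(\alpha^*_0)^{-1}}$ gives $\teta_0=\eta^*_0(\alpha^*_0)^{-1}$. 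These are exactly the three cases of \eqref{eq:lr_calculation}, so the surviving $\GD$-only process is the TEXP run. Finally, equivalence of the full compositions at every iteration means the corresponding states differ by an equivalent scaling, so $\vttheta_t=c_t\vtheta_t$ for some $c_t>0$ (a product of the relevant $\alpha^*_J$), and Lemma~\ref{lem:scale_invariance} yields $f_{\vtheta_t}=f_{\vttheta_t}$ for all $t$. The main obstacle is the phase-boundary analysis: pinning down the correct momentum correction, checking the displayed identity is exact on the WD side and does restore the matched-LR form demanded by Lemma~\ref{lem:gdw_momentum}, and verifying that the $\alpha^*_I$ roots accumulate to precisely the TEXP multipliers --- in particular the mixed boundary term mixing two different roots $\alpha^*_{I-1},\alpha^*_I$ with the raw ratio $\eta^*_I/\eta^*_{I-1}$. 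The per-phase telescoping and collection of equivalent scalings are routine once Section~\ref{subsec:fixed_lr_momentum} is in hand.
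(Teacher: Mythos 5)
Your proposal is correct and follows essentially the same route as the paper: your ``exact identity'' re-representing the phase-boundary state with matched current/buffered LR is precisely the paper's canonicalization map $\can(\vtheta,\eta,\vtheta',\eta')=(\vtheta,\eta,\vtheta-\frac{\eta}{\eta'}(\vtheta-\vtheta'),\eta)$ (Lemma~\ref{lem:canonicalization}), and the subsequent per-phase telescoping of the $\pc,\pd$ conjugations via Lemma~\ref{lem:gdw_momentum} and Lemma~\ref{lem:commute_momentum}, with the leftover maps at $t=T_I$ absorbed into a one-time momentum correction, is exactly how the paper collects the TEXP multipliers $(\alpha^*_I)^{-2}$ and $\frac{\eta^*_I}{\eta^*_{I-1}}(\alpha^*_I)^{-1}(\alpha^*_{I-1})^{-1}$.
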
 

\vspace{-0.15cm}
The analysis in previous subsection  give the equivalence within each phase, where the same LR is used throughout the phase. To deal with the difference between  buffered LR and current LR when entering new phases, the idea is to pretend $\eta_{t-1}=\eta_t$ and $\vtheta_{t-1}$ becomes whatever it needs to maintain $\frac{\vtheta_t-\vtheta_{t-1}}{\eta_{t-1}}$ such that we can again apply Lemma~\ref{lem:gdw_momentum}, which requires the current LR of the input state is equal to its buffered LR. Because scaling $\alpha$ in RHS of Equation~\ref{eq:gdw_momentum} is different in different phases, so unlike what happens within each phase, they don't cancel with each other at phase transitions, thus remaining as a correction of the momentum.
The proofs are delayed to Appendix~\ref{sec:omitted_proofs}, where we proves a more general statement allowing phase-dependent WD, $\{\lambda_I\}_{I=0}^{K-1}$.

\textbf{Alternative interpretation of Step Decay to exponential LR schedule:}Below we present a new LR schedule, \emph{TEXP++}, which is exactly equivalent to \emph{Step Decay} without the need of one-time correction of momentum when entering each phase.
We further show in Appendix~\ref{subsec:pf1} that when translating from \emph{Step Decay}, the \emph{TEXP++} we get is very close to the original \emph{TEXP}(Equation~\ref{eq:lr_calculation}), i.e. the ratio between the LR growth per round, $\frac{\teta_{t+1}}{\teta_t} / \frac{\teta'_{t+1}}{\teta'_t}$ converges to 1 exponentially each phase. For example, with WD 0.0005, max LR 0.1, momentum factor 0.9, the ratio is within $1\pm0.0015*0.9^{t-T_I}$,  meaning TEXP and TEXP++ are very close  for Step Decay with standard hyperparameters.

 \begin{theorem}\label{thm:main}
     The following two sequences of parameters ,$\{\vtheta_t\}_{t=0}^\infty$ and $\{\vttheta_t\}_{t=0}^\infty$, define the same sequence of network functions, i.e. $f_{\vtheta_t} = f_{\vttheta_t}, \ \forall t\in\sN$, given the initial conditions, $\vttheta_0 = P_0\vtheta_0$, $\vttheta_{-1} = P_{-1}\vtheta_{-1}$. 
    \begin{enumerate}
    \vspace{-0.3cm}
        \item $\frac{\vtheta_t - \vtheta_{t-1}}{\eta_{t-1}}  = \gamma\frac{\vtheta_{t-1} - \vtheta_{t-2}}{\eta_{t-2}} - \nabla_\vtheta \left((L(\vtheta_{t-1})+ \frac{\lambda_{t-1}}{2}\norm{\vtheta_{t-1}}_2^2\right) $, for $t=1,2,\ldots$;\vspace{-0.2cm}
        \item $\frac{\vttheta_t - \vttheta_{t-1}}{\teta_{t-1}}  = \gamma\frac{\vttheta_{t-1} - \vttheta_{t-2}}{\teta_{t-2}} -\nabla_\vtheta L(\vttheta_{t-1}) $,  for $t=1,2,\ldots$,
    \vspace{-0.3cm}
    \end{enumerate}
    where $\teta_t = P_t P_{t+1}\eta_t$, $P_t = \prod\limits_{i=-1}^t \alpha_i^{-1}$, $\forall t\geq -1$ and $\alpha_t$ recursively defined  as 
    \vspace{-0.2cm}
    \begin{equation}\label{eq:alpha_defi}
    \alpha_t = -\eta_{t-1} \lambda_{t-1} +1+\frac{\eta_{t-1}}{\eta_{t-2}}\gamma(1-\alpha_{t-1}^{-1}), \forall t\geq 1. \end{equation}
    The LR schedule $\{\teta_{t}\}_{t=0}^\infty$ is called \emph{Tapered Exponential ++}, or \emph{TEXP++}.
\end{theorem}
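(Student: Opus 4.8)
The plan is to convert the claim about equality of network functions into the purely algebraic claim
\[
\vttheta_t = P_t\,\vtheta_t \qquad \text{for all } t\ge -1,
\]
and then prove this by induction on $t$. Since the loss is scale invariant, $f_{c\vtheta}=f_\vtheta$ for every $c>0$, so once the displayed identity is established the equality $f_{\vttheta_t}=f_{\vtheta_t}$ follows immediately, provided each $P_t=\prod_{i=-1}^{t}\alpha_i^{-1}$ is a positive real. This positivity (equivalently $\alpha_i>0$ for all $i$) should be recorded as a preliminary observation; it holds under the standing smallness hypothesis $\eta_t\lambda_t\le(1-\sqrt\gamma)^2$ on the products of LR and WD, and is the content of the estimates in Appendix~\ref{subsec:pf1}. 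So the reduction step is routine; the work is in the induction.

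The base cases $t=-1$ and $t=0$ are exactly the assumed initial conditions $\vttheta_{-1}=P_{-1}\vtheta_{-1}$ and $\vttheta_0=P_0\vtheta_0$. For the inductive step, fix $t\ge 1$ and assume $\vttheta_{t-1}=P_{t-1}\vtheta_{t-1}$, $\vttheta_{t-2}=P_{t-2}\vtheta_{t-2}$. Solving recursion (1) for $\vtheta_t$ gives
\[
\vtheta_t=(1-\eta_{t-1}\lambda_{t-1})\vtheta_{t-1}+\tfrac{\eta_{t-1}}{\eta_{t-2}}\gamma(\vtheta_{t-1}-\vtheta_{t-2})-\eta_{t-1}\nabla_\vtheta L(\vtheta_{t-1}),
\]
and solving recursion (2) gives the analogous expression for $\vttheta_t$ in terms of $\vttheta_{t-1},\vttheta_{t-2},\nabla_\vtheta L(\vttheta_{t-1})$ with coefficients built from $\teta_{t-1},\teta_{t-2}$. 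Now substitute the inductive hypothesis into the expression for $\vttheta_t$, and rewrite the gradient term using Lemma~\ref{lem:scale_invariance}(2): $\nabla_\vtheta L(\vttheta_{t-1})=\nabla_\vtheta L(P_{t-1}\vtheta_{t-1})=P_{t-1}^{-1}\nabla_\vtheta L(\vtheta_{t-1})$. It then suffices to match $\vttheta_t$ against $P_t\vtheta_t$ coefficient-by-coefficient in the three independent quantities $\vtheta_{t-1}$, $\vtheta_{t-2}$, and $\nabla_\vtheta L(\vtheta_{t-1})$.

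Two of the three matches are forced by the definition $\teta_t=P_tP_{t+1}\eta_t$ together with the telescoping relations $P_{t-1}=\alpha_tP_t$: the coefficient of $\nabla_\vtheta L(\vtheta_{t-1})$ matches because $\teta_{t-1}P_{t-1}^{-1}=P_t\eta_{t-1}$, and the coefficient of $\vtheta_{t-2}$ matches because $\tfrac{\teta_{t-1}}{\teta_{t-2}}\gamma P_{t-2}=P_t\tfrac{\eta_{t-1}}{\eta_{t-2}}\gamma$. The remaining match, that of the coefficient of $\vtheta_{t-1}$, reads $P_{t-1}\bigl(1+\tfrac{\teta_{t-1}}{\teta_{t-2}}\gamma\bigr)=P_t\bigl(1-\eta_{t-1}\lambda_{t-1}+\tfrac{\eta_{t-1}}{\eta_{t-2}}\gamma\bigr)$; dividing by $P_t$ and using $P_{t-1}/P_t=\alpha_t$ and $\teta_{t-1}/\teta_{t-2}=\alpha_{t-1}^{-1}\alpha_t^{-1}\,\eta_{t-1}/\eta_{t-2}$ collapses it to precisely $\alpha_t=-\eta_{t-1}\lambda_{t-1}+1+\tfrac{\eta_{t-1}}{\eta_{t-2}}\gamma(1-\alpha_{t-1}^{-1})$, which is the defining recursion \eqref{eq:alpha_defi}. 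This closes the induction and hence proves the theorem. The only genuine care needed anywhere is the bookkeeping of the $P_t$'s in these telescoping identities and the side check that all $\alpha_i>0$; there is no real obstacle. (Equivalently, one could package the same computation in the map-composition language of Sections~\ref{subsec:fixed_lr}--\ref{subsec:multiphase}, writing each $\GDW^{1-\eta_{t-1}\lambda_{t-1}}_{t-1}$ as $\GD_{t-1}$ conjugated by LR-scalings and commuting equivalent scalings through, but the direct inductive computation above is the most transparent route.)
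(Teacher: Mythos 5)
Your proposal is correct and follows essentially the same route as the paper's proof: induction on the identity $\vttheta_t=P_t\vtheta_t$, using Lemma~\ref{lem:scale_invariance}(2) to rescale the gradient, with the coefficient of $\vtheta_{t-1}$ being the only nontrivial match and collapsing exactly to the recursion \eqref{eq:alpha_defi} for $\alpha_t$. The paper organizes the computation as a chain of rewrites of recursion (1) into the form of recursion (2) rather than solving for $\vtheta_t$ and comparing three coefficients, but this is only a difference in bookkeeping; your side remark on the positivity of the $\alpha_i$ matches the paper's Lemma~\ref{lem:init_condition}.
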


\section{Example illustrating interplay of WD and BN}
\label{sec:role_of_wd}
\vspace{-0.1cm}
The paper so far has shown that effects of different hyperparameters in training are not easily separated, since their combined effect on the trajectory is complicated. We give a simple example to illustrate this, where  convergence is guaranteed if we use either BatchNorm  or weight decay in isolation, but convergence fails if both are used. (Momentum is turned off for clarity of presentation)



\textbf{Setting:} Suppose we are fine-tuning the last linear layer of the network, where the input of the last layer is assumed to follow a standard Gaussian distribution $\gN(0,I_m)$, and $m$ is the input dimension of last layer. We also assume this is a binary classification task with logistic loss, $l(u,y)=\ln(1+\exp(-uy))$, where label $y\in\{-1,1\}$ and $u\in\reals$ is the output of the neural network. The training algorithm is SGD with constant LR and WD, and without momentum. For simplicity we assume the batch size $B$ is very large so we could assume the covariance of each batch $\gB_t$ concentrates and is approximately equal to identity, namely  $\frac{1}{B}\sum_{i=1}^B \vx_{t,b}\vx_{t,b}^\top\approx I_m$. We also assume the the input of the last layer are already separable, and w.l.o.g. we assume the label is equal to the sign of the first coordinate of $\vx\in\reals^m$, namely $\sgn{x_1}.$  Thus the training loss and training error are simply 
\[
L(\vw) = \Exp{\vx\sim \gN(0,I_m), y=\sgn{x_1}}{\ln(1+\exp(-\vx^\top \vw y))},
\quad \Prob{\vx\sim \gN(0,I_m), y=\sgn{x_1}}{\vx^\top \vw y\leq 0} =\frac{1}{\pi} \arccos{\frac{w_1}{\norm{\vw}}} 
\]
\noindent{\em Case 1: WD alone:} Since both the above objective with L2 regularization is strongly convex and smooth in $\vw$, vanilla GD \emph{with suitably small learning rate} could get arbitrarily close to the global minimum for this regularized objective. In our case, large batch SGD behaves similarly to GD and can achieve $O(\sqrt{\frac{\eta\lambda}{B}})$ test error following the standard analysis of convex optimization.

\noindent{\em Case 2: BN alone:}  Add a BN layer after the linear layer, and fix scalar and bias term to 1 and 0. The objective becomes 
\[L_{BN}(\vw) = \Exp{\vx\sim \gN(0,I_m), y=\sgn{x_1}}{L_{BN}(\vw,\vx)} = \Exp{\vx\sim \gN(0,I_m), y=\sgn{x_1}}{\ln(1+\exp(-\vx^\top \frac{\vw}{\norm{\vw}} y))}.\vspace{-0.1cm}\] 

From Appendix~\ref{subsec:app_role_of_wd},  there's some constant $C$, such that $\forall \vw\in\reals^{m}$ with constant probability, 
$\norm{\nabla_{\vw} L_{BN}(\vw,\vx)} \geq \frac{C}{\norm{\vw}}.$
By Pythagorean Theorem, $\norm{\vw_{t+1}}^4 = (\norm{\vw_t}^2 + \eta^2 \norm{\nabla_{\vw} L_{BN}(\vw_t,\vx)}^2)^2 \ge \norm{\vw_t}^4 +2 \eta^2\norm{\vw_t}^2 \norm{\nabla_{\vw} L_{BN}(\vw_t,\vx)}^2 $. As a result, for any \emph{fixed learning rate}, $\norm{\vw_{t+1}}^4 \ge 2 \sum_{i=1}^t\eta^2\norm{\vw}^2 \norm{\nabla_{\vw} L_{BN}(\vw_i,\vx)}^2$ grows at least linearly with high probability.  Following the analysis of \cite{arora2018theoretical}, this is like reducing the effective learning rate, and when $\norm{\vw_{t}}$ is large enough, the effective learning rate is small  enough, and thus SGD can find the local minimum, which is the unique global minimum. 

\noindent{\em Case 3: Both BN and WD:} When BN and WD are used together, no matter how small the noise is, which comes from the large batch size, the following theorem shows that SGD will not converge to any solution with error smaller than $O(\sqrt{\eta \lambda})$, which is independent of the batch size (noise level).

\begin{theorem}\label{thm:escape}[Nonconvergence]
 Starting from iteration  any $T_0$, with probability $1-\delta$ over the randomness of samples, the training error will be larger than $\frac{\varepsilon}{\pi}$  at least once for the following consecutive $\frac{1}{2(\eta\lambda-2\varepsilon^2)}\ln \frac{64\norm{w_{T_0}}^2\varepsilon\sqrt{B}}{\eta\sqrt{m-2}} + 9 \ln \frac{1}{\delta}$ iterations.
\end{theorem}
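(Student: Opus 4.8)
The plan is a proof by contradiction. Fix $T_0$, let $N$ denote the claimed iteration count, and suppose the training error is at most $\varepsilon/\pi$ at \emph{every} iteration $t\in\{T_0,\dots,T_0+N\}$; I will show this event has probability at most $\delta$. By the stated closed form for the error, ``error $\le\varepsilon/\pi$ at $t$'' is exactly ``$\phi_t\le\varepsilon$'', where $\phi_t:=\arccos\frac{w_{t,1}}{\norm{\vw_t}}$ is the angle between $\vw_t$ and $e_1$. Momentum is off, so the single SGD$+$WD step is $\vw_{t+1}=(1-\eta\lambda)\vw_t-\eta\vg_t$ with $\vg_t:=\nabla_\vw L_{BN}(\vw_t,\gB_t)$, and since $L_{BN}$ is scale invariant, Lemma~\ref{lem:scale_invariance}(1) gives $\vg_t\perp\vw_t$; hence the Pythagorean identity $\norm{\vw_{t+1}}^2=(1-\eta\lambda)^2\norm{\vw_t}^2+\eta^2\norm{\vg_t}^2$ holds at every step and is the workhorse of the argument.

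\textbf{Step 1 (deterministic geometric shrinkage of the norm while the error stays small).} Write $\vw_t=a_te_1+\vb_t$ with $\vb_t\perp e_1$, so $\phi_t\le\varepsilon$ means $\norm{\vb_t}\le(\tan\varepsilon)a_t$, and let $\vg_t^\perp$ be the part of $\vg_t$ orthogonal to $e_1$. Since $\vg_t\perp\vw_t$ we get $|(\vg_t)_1|=|\langle\vg_t^\perp,\vb_t\rangle|/a_t\le(\tan\varepsilon)\norm{\vg_t^\perp}$, so $\norm{\vg_t}\le(\sec\varepsilon)\norm{\vg_t^\perp}$; and from $\vb_{t+1}=(1-\eta\lambda)\vb_t-\eta\vg_t^\perp$ together with $\norm{\vb_{t+1}}\le(\tan\varepsilon)a_{t+1}$ and $a_{t+1}\le(1-\eta\lambda)a_t+\eta(\tan\varepsilon)\norm{\vg_t^\perp}$, a short trigonometric manipulation (using $\tfrac{2\tan\varepsilon}{1-\tan^2\varepsilon}=\tan 2\varepsilon$) yields $\eta\norm{\vg_t^\perp}\le(\tan 2\varepsilon)(1-\eta\lambda)a_t$. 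Substituting into the Pythagorean identity gives $\norm{\vw_{t+1}}^2\le(1-\eta\lambda)^2\bigl(1+\sec^2\!\varepsilon\,\tan^2\!2\varepsilon\bigr)\norm{\vw_t}^2\le e^{-2(\eta\lambda-2\varepsilon^2)}\norm{\vw_t}^2$ (bounding $1-\eta\lambda\le e^{-\eta\lambda}$ and $1+\sec^2\!\varepsilon\tan^2\!2\varepsilon\le e^{4\varepsilon^2}$ up to lower-order terms absorbed by the ``$\ge$'' in the theorem, and this is where the exponent $\eta\lambda-2\varepsilon^2$ comes from). Iterating, $\norm{\vw_t}^2\le\norm{\vw_{T_0}}^2e^{-2(\eta\lambda-2\varepsilon^2)(t-T_0)}$, so after $N_1:=\frac{1}{2(\eta\lambda-2\varepsilon^2)}\ln\frac{64\norm{w_{T_0}}^2\varepsilon\sqrt B}{\eta\sqrt{m-2}}$ steps the norm has dropped below $\theta^*:=\frac{\eta\sqrt{m-2}}{64\varepsilon\sqrt B}$; moreover, once $\norm{\vw_t}^2\le\theta^*$ the same identity keeps it $O(\theta^*)$ on any later step on which the error still has not exceeded $\varepsilon/\pi$, since then $\eta^2\norm{\vg_t}^2$ is itself small relative to $\theta^*$.

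\textbf{Step 2 (escape once the norm is small).} When $\phi_t\le\varepsilon$, $\hat\vw_t\approx e_1$, so $\norm{\vw_t}\vg_t$ is the batch average of $\sigma(-\vx^\top\hat\vw_t y)\,y\,(I-\hat\vw_t\hat\vw_t^\top)\vx$, whose conditional mean is the population gradient of $L_{BN}$ at $\hat\vw_t$ --- of magnitude $O(\phi_t)=O(\varepsilon)$ by the bounded curvature of $L_{BN}$ at its minimizer $e_1$ --- and whose fluctuation lives essentially in the $(m-2)$-dimensional space orthogonal to $e_1$ and $\hat\vw_t$. By the large-batch assumption on $\gB_t$ (concentration of $\tfrac1B\sum_b\vx_{t,b}\vx_{t,b}^\top$) together with a Paley--Zygmund / concentration lower bound of the kind already used for the per-example gradient in Appendix~\ref{subsec:app_role_of_wd}, we get $\norm{\vw_t}\norm{\vg_t}\ge c_1\sqrt{(m-2)/B}$ with probability at least an absolute constant $p_0>0$, conditionally on everything before iteration $t$. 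On any step with $\norm{\vw_t}^2\le\theta^*$, plugging this and $\norm{\vw_t}^2\le\theta^*$ into $\sin\phi_{t+1}\ge\frac{\eta\norm{\vg_t^\perp}-(1-\eta\lambda)\norm{\vb_t}}{\norm{\vw_{t+1}}}$ and using $\norm{\vw_{t+1}}^2=(1-\eta\lambda)^2\norm{\vw_t}^2+\eta^2\norm{\vg_t}^2$ forces $\sin\phi_{t+1}\ge\min\{\Omega(1),\,\Omega(\varepsilon)\}>\sin\varepsilon$ (the constant $64$ in $\theta^*$ leaves room for $c_1$), i.e.\ $\phi_{t+1}>\varepsilon$, contradicting the standing assumption. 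Since the batches are independent, each of the $\ge 9\ln(1/\delta)$ iterations after the first time the norm drops below $\theta^*$ is an independent trial succeeding with probability $\ge p_0$, so the assumed event has probability at most $(1-p_0)^{9\ln(1/\delta)}\le\delta$ for the appropriate absolute constant. Taking $N=N_1+9\ln(1/\delta)$ completes the proof.

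\textbf{Main obstacle.} The delicate part is Step~2's bookkeeping: one needs the angular increment to be small when $\norm{\vw_t}^2$ is large (to get the clean contraction rate $\eta\lambda-2\varepsilon^2$ rather than something worse) and \emph{simultaneously} a matching high-probability lower bound $\norm{\vw_t}\norm{\vg_t}=\Omega(\sqrt{(m-2)/B})$ when $\norm{\vw_t}^2$ is small, both valid uniformly over all iterations on which $\phi_t\le\varepsilon$, with every constant tracked so that the threshold is exactly $\theta^*=\frac{\eta\sqrt{m-2}}{64\varepsilon\sqrt B}$ and the product of the $\ge 9\ln(1/\delta)$ per-trial failure probabilities closes at $\delta$. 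Two smaller technical points: justifying the stated large-batch concentration of $\tfrac1B\sum_b\vx_{t,b}\vx_{t,b}^\top\approx I_m$ uniformly over all $O(N)$ iterations (so the gradient estimates above hold throughout), and checking that a failed escape trial does not push $\norm{\vw_t}^2$ back above $\theta^*$, so that the $9\ln(1/\delta)$ trials are genuinely consecutive.
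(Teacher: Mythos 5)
Your proposal follows essentially the same route as the paper's proof: (i) while the error stays below $\varepsilon/\pi$, consecutive iterates lie within angle $2\varepsilon$ of each other, so the Pythagorean identity $\norm{\vw_{t+1}}^2=(1-\eta\lambda)^2\norm{\vw_t}^2+\eta^2\norm{\vg_t}^2$ forces the squared norm to contract by $e^{-2(\eta\lambda-2\varepsilon^2)}$ per step down to the threshold $\frac{\eta}{64\varepsilon}\sqrt{(m-2)/B}$; (ii) a constant-probability ($1/9$ in the paper) lower bound $\norm{\vg_t}\gtrsim\frac{\eta}{\norm{\vw_t}}\sqrt{(m-2)/B}$ via chi-square concentration in the $(m-2)$-dimensional complement of $\mathrm{span}(\vw_t,\ve_1)$ plus a lower bound on the logistic factor for a constant fraction of the batch; (iii) once below threshold, each step independently kicks the iterate out of the cone with constant probability, giving failure probability $(8/9)^{9\ln(1/\delta)}\le\delta$. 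Your residual worries are handled exactly as you suspect: the norm is monotone decreasing under the standing small-error assumption (so trials stay below threshold), and your Step-1 coordinate computation is just a slightly more laborious version of the paper's ``angle between $\vw_{t+1}$ and $(1-\eta\lambda)\vw_t$ is at most $2\varepsilon$, hence $\norm{\vw_{t+1}}\le(1-\eta\lambda)\norm{\vw_t}/\cos 2\varepsilon$'' argument.
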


\vspace{-0.3cm}
\begin{proof}[Sketch]\label{proof:escape_sketch}(See full proof in Appendix~\ref{sec:omitted_proofs}.)
The high level idea of this proof is that if the test error is low, the weight is restricted in a small cone around the global minimum, and thus the amount of the gradient update is bounded by the size of the cone. In this case, the growth of the norm of the weight by Pythagorean Theorem is not large enough to cancel the shrinkage brought by weight decay. As a result, the norm of the weight converges to 0 geometrically. 
Again we need to use the lower bound for size of the gradient, that $\norm{\nabla_\vw L_t} =\Theta( \frac{\eta}{\norm{\vw_t}}\sqrt{\frac{m}{B}})$ holds with constant probability. Thus the size of the gradient will grow along with the shrinkage of $\norm{\vw_t}$ until they're comparable, forcing the weight to leave the cone in next iteration.
\end{proof}



\section{Viewing EXP LR via Canonical Optimization Framework}
\label{sec:black_box}

This section tries to explain why the efficacy of exponential LR in deep learning is mysterious to us, at least as  viewed in the canonical framework of  optimization theory.

\noindent{\em  Canonical framework for analysing 1st order methods}
This focuses on proving that each ---or most---steps of GD noticeably reduce the objective, by relying on some assumption about the spectrum norm of the hessian of the loss, and most frequently, the {\em smoothness}, denoted by $\beta$.
Specifically, for GD update $\vtheta_{t+1} = \vtheta_{t} - \eta\nabla L(\vtheta_t)$, we have 
\[L(\vtheta_{t+1}) - L(\vtheta_t) \le  (\vtheta_{t+1}-\vtheta_t)^\top \nabla L(\vtheta_t)+ \frac{\beta}{2} \norm{\vtheta_{t+1}-\vtheta_t}^2 = -\eta(1-\frac{\beta\eta}{2})\norm{\nabla L(\vtheta_t)}^2.\]
When $\beta<\frac{2}{\eta}$, the first order term is larger than the second order one, guaranteeing the loss value decreases.
Since the analysis framework treats the loss as a black box (apart from the assumed bounds on the derivative norms), and the loss is non-convex, the best one can hope for is to prove speedy convergence to a stationary point (where gradient is close to $0$). An increasing body of work proves such results. 

Now we turn to difficulties in understanding the exponential LR in context of the above framework and with scale-invariance in the network. 

\begin{enumerate} 
\item  Since loss is same for $\vtheta$ and $c\cdot \vtheta$ for all $c > 0$ a simple calculation shows that along any straight line through the origin, smoothness is a decreasing function of $c$, and is very high close to origin. (Note: it is also possible to one can show the following related fact: In any ball containing the origin, the loss is nonconvex.)

Thus if one were trying to apply the canonical framework to argue convergence to a stationary point, the natural idea would be to try to grow the norm of the parameters until smoothness drops enough that the above-mentioned {\em Canonical Framework} starts to apply.  \citet{arora2018theoretical} showed this happens in GD with fixed LR (WD turned off), and furthermore the resulting convergence rate to stationary point is asymptotically similar to analyses of nonconvex optimization with learning rate set as in the {\em Canonical framework}. \citet{santurkar2018does} observed  similar phenomenon in experiments, which they described as a smoothening of the objective due to BN. 

\item The {\em Canonical Framework} can be thought of as a discretization of continuous gradient descent (i.e., gradient flow): in principle it is possible to use arbitrarily small learning rate, but one uses finite learning rate merely to keep the number  of iterations small. The discrete process approximates the continuous process due to smoothness being small.

 In  case of gradient flow with weight decay (equivalently, with exponential LR schedule) the discrete process cannot track the continuous process for very  long, which suggests that any explanation of the benefits of exponential LR may need to rely on discrete process being somehow better. 
 The reason being that for gradient flow one can decouple the speed of the $\vtheta_t$ into the tangential and the radial components, where the former one has no effect on the norm and the latter one has no effect on the objective but scales the tangential gradient exponentially. Thus the Gradient Flow with WD gives exactly the same trajectory as vanilla Gradient Flow does, excepting a exponential reparametrization with respect to  time $t$.
 

\item \label{item:4} It can be shown that if the local smoothness is upperbounded by  $\frac{2}{\eta}$ (as stipulated in {\em Canonical Framework}) during a sequence $\vtheta_t$ ($t =1,2,\ldots$) of GD updates with WD and constant LR  then such sequence satisfies $\vtheta_t \rightarrow \bm{0}$. This contrasts with the usual experimental observation that $\vtheta_t$ stays bounded away from $\bm{0}$. One should thus conclude that in practice, with constant LR and WD, smoothness doesn't always stay small (unlike the above analyses where WD is turned off).
\end{enumerate}


\section{Experiments}

\label{sec:experiments}
  The translation to exponential LR schedule is exact except for one-time momentum correction term entering new phases. The experiments explore the effect of this correction term.  The Tapered Exponential(TEXP) LR schedule contains two parts when entering a new phase I: an instant LR decay ($\frac{\eta_I}{\eta_{I-1}}$) and an adjustment of the growth factor ($\alpha^*_{I-1}\to \alpha^*_{I}$). The first part is relative small compared to the huge exponential growing. Thus a natural question arises: \emph{Can we simplify TEXP LR schedule by dropping the part of instant LR decay?}
    
     Also, previously we have only verified our equivalence theorem in Step Decay LR schedules. But it's not sure how would the Exponential LR schedule behave on more rapid time-varying LR schedules such as Cosine LR schedule.  

\textbf{Settings:} We train PreResNet32 on CIFAR10. The initial learning rate is 0.1 and the momentum is 0.9 in all settings. We fix all the scalar and bias of BN, because otherwise they together with the following conv layer grow exponentially, sometimes exceeding the range of Float32 when trained with large growth rate for a long time. We fix the parameters in the last fully connected layer for scale invariance of the objective.
\vspace{-0.3cm}
\subsection{The benefit of instant LR decay }

    We tried the following LR schedule (we call it \emph{TEXP-{}-}). Interestingly, up to correction of momentum when entering a new phase, this schedule is equivalent to a constant LR schedule, but with the weight decay coefficient reduced correspondingly at the start of each phase. (See Theorem~\ref{thm:multi_exp_app} and Figure~\ref{fig:decay_weight_decay_equiv})
    \vspace{-0.2cm}
    \begin{equation}\label{eq:lr_calculation_TEXP--}
        \vspace{-0.2cm}
        \textrm{TEXP-{}-: }\quad \teta_{t+1} = \begin{cases} 
        \teta_{t}\times (\alpha^*_{I-1})^{-2} \quad &\mbox{ if $T_{I-1}+1 \le t \le T_I-1$}, I\ge 1;\\
        \teta_{t} \times (\alpha^*_I)^{-1} (\alpha^*_{I-1})^{-1} &\mbox{ if $t=T_I, I\ge 1$},
        \end{cases}
\vspace{0.2cm}
    \end{equation}
    \textrm{where } $\alpha^*_I = \frac{1+\gamma-\lambda\eta^*_I + \sqrt{\left(1+\gamma-\lambda\eta^*_I\right)^2-4\gamma }}{2}$, $\teta_0 = \eta_0 \cdot(\alpha_0^*)^{-1} =  \eta^*_0\cdot(\alpha_0^*)^{-1}$.

\begin{figure}[!htbp]
\vspace{-1.5cm}
    \centering
    \begin{subfigure}[t]{0.5\textwidth}
        \centering
        \includegraphics[width=0.99\textwidth]{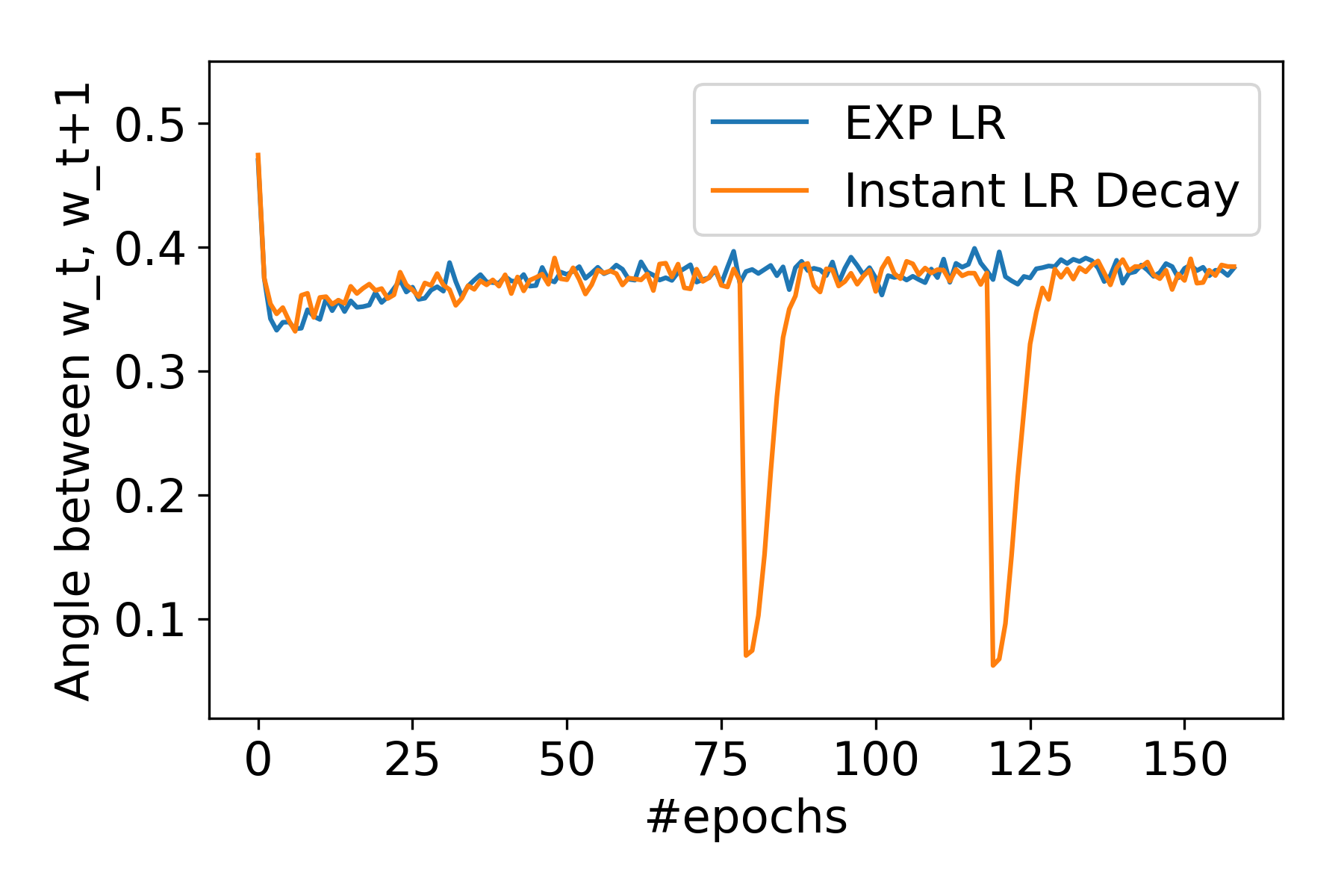}
    \end{subfigure}%
    ~
    \begin{subfigure}[t]{0.5\textwidth}
        \centering
        \includegraphics[width=0.99\textwidth]{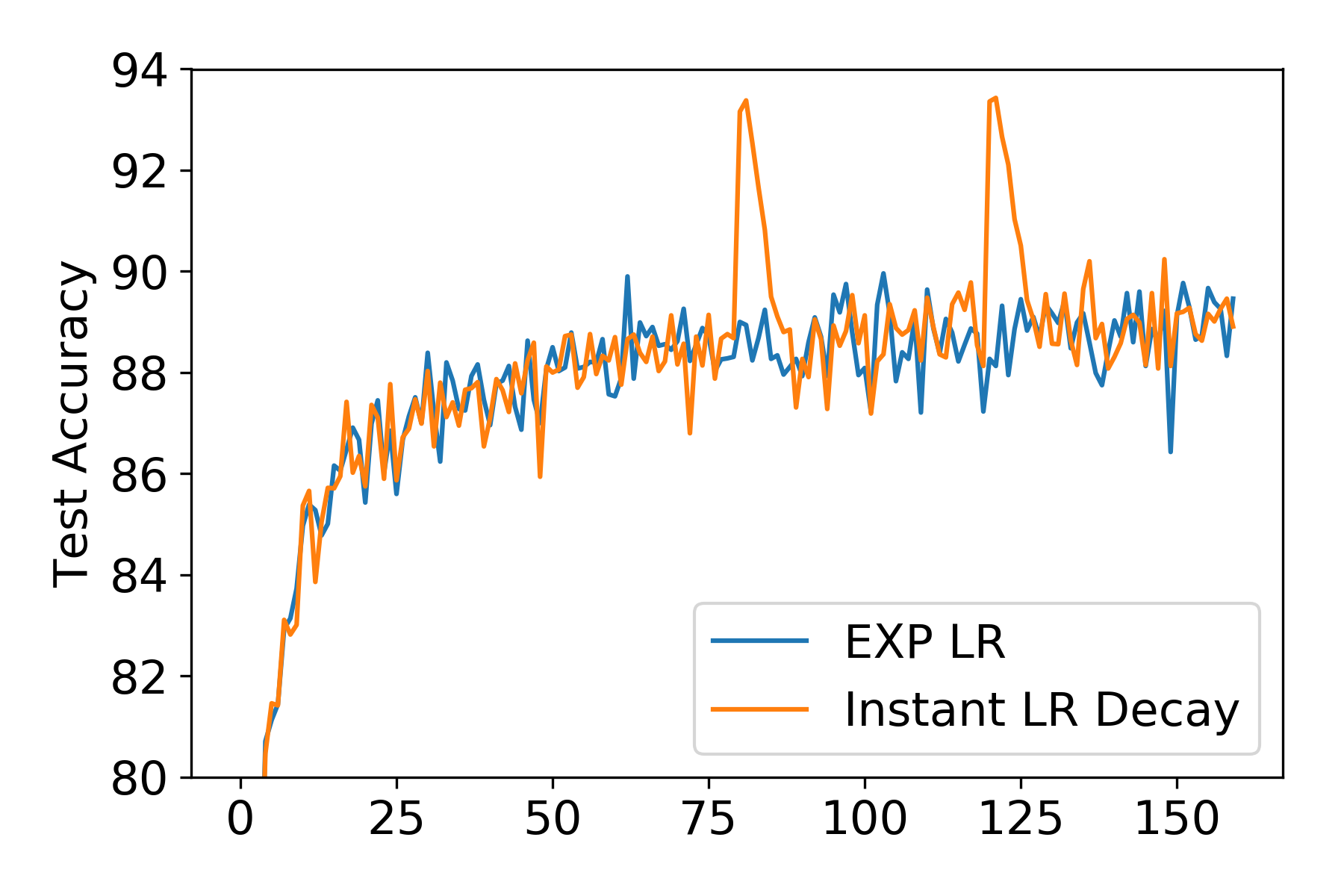}
        
    \end{subfigure}
    \caption{ Instant LR decay has only temporary effect when  LR growth 
    $\teta_t/\teta_{t-1}-1$
    is large. The blue line uses an exponential LR schedule with constant exponent. The orange line multiplies its LR by the same constant each iteration, but also divide LR by 10 at the start of epoch 80 and 120. The instant LR decay only allows the parameter to stay at good local minimum for 1 epoch and then diverges, behaving similarly to the trajectories without no instant LR decay.}
    \label{fig:instant_lr_decay_large_exponent}
\end{figure}

\begin{figure}[]
\vspace{-1.5cm}
    \centering
    \begin{subfigure}[t]{0.5\textwidth}
        \centering
        \includegraphics[width=0.99\textwidth]{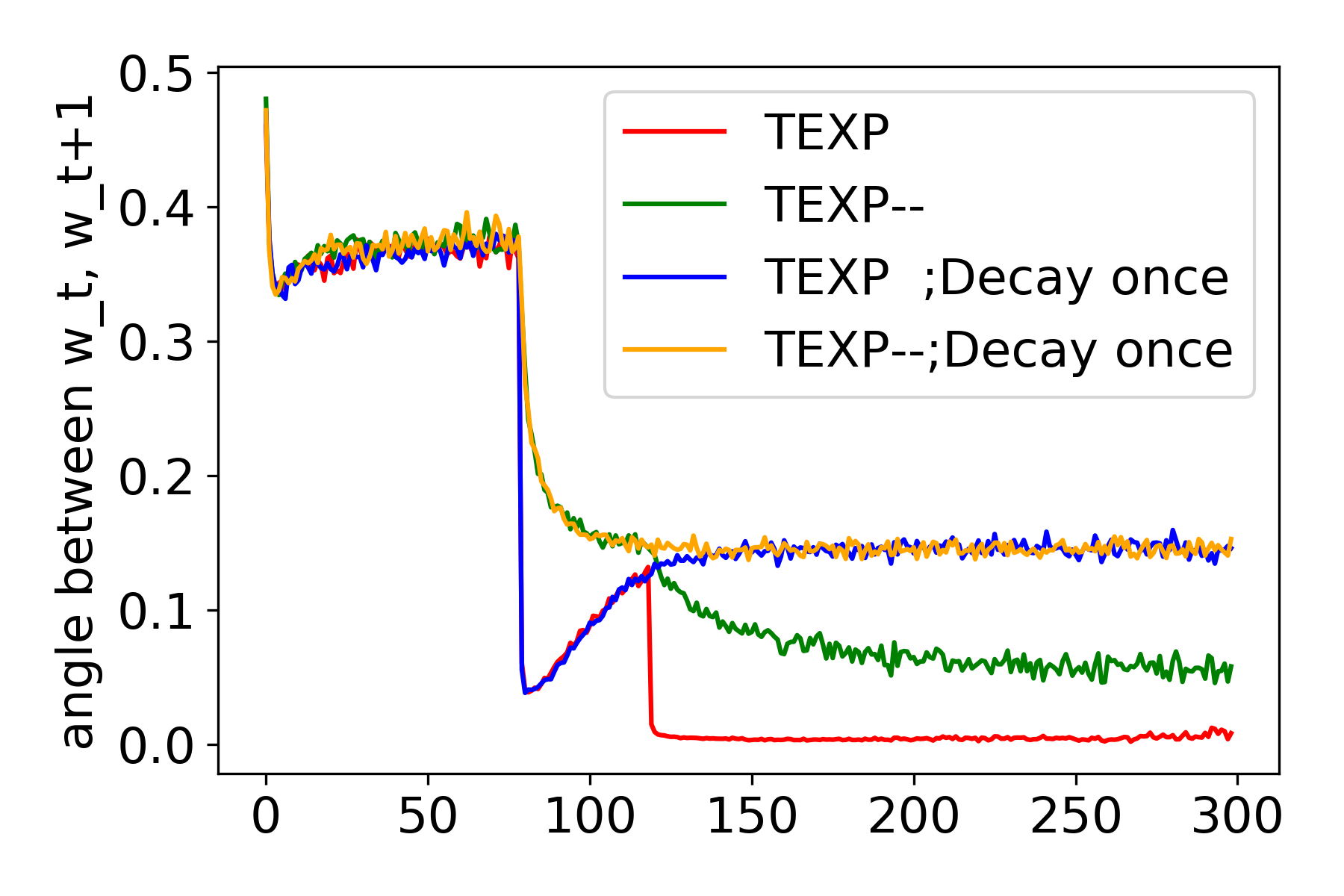}
    \end{subfigure}%
    ~
    \begin{subfigure}[t]{0.5\textwidth}
        \centering
        \includegraphics[width=0.99\textwidth]{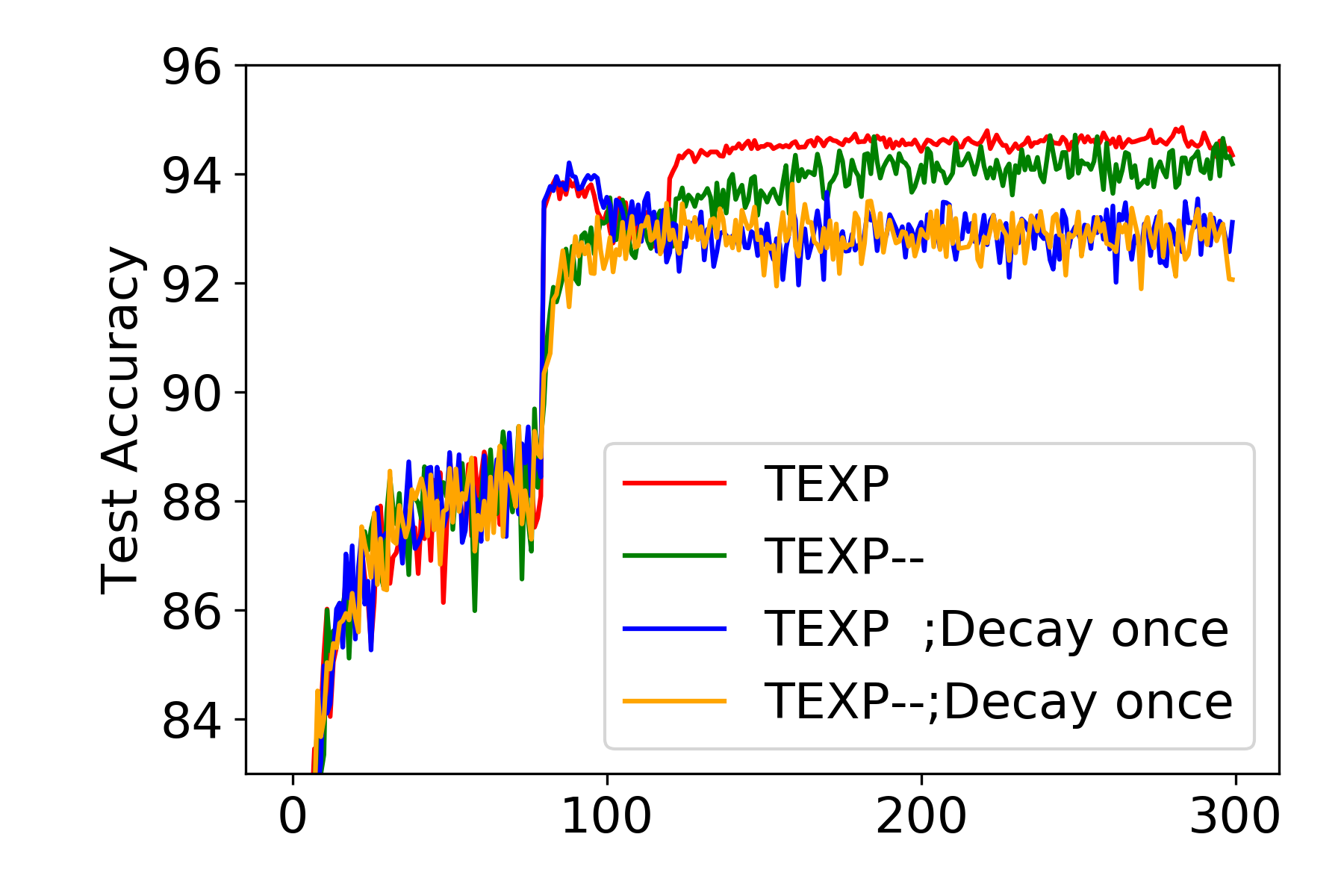}
    \end{subfigure}
    \caption{Instant LR decay is crucial when  LR growth 
    $\teta_t/\teta_{t-1}-1$
    is very small. The original LR of Step Decay is decayed by 10 at epoch $80,120$ respectively. In the third phase, LR growth 
    $\teta_t/\teta_{t-1}-1$
    is approximately 100 times smaller than that in the third phase, it would take TEXP-{}- hundreds of epochs to reach its equilibrium. As a result, TEXP achieves better test accuracy than TEXP-{}-. As a comparison, in the second phase, 
    $\teta_t/\teta_{t-1}-1$
    is only 10 times smaller than that in the first phase and it only takes 70 epochs to return to equilibrium. 
    	}
    \label{fig:TEXP--}
\end{figure}

\begin{figure}[]
\vspace{-0.5cm}
    \centering
    \begin{subfigure}[t]{0.5\textwidth}
        \centering
        \includegraphics[width=0.99\textwidth]{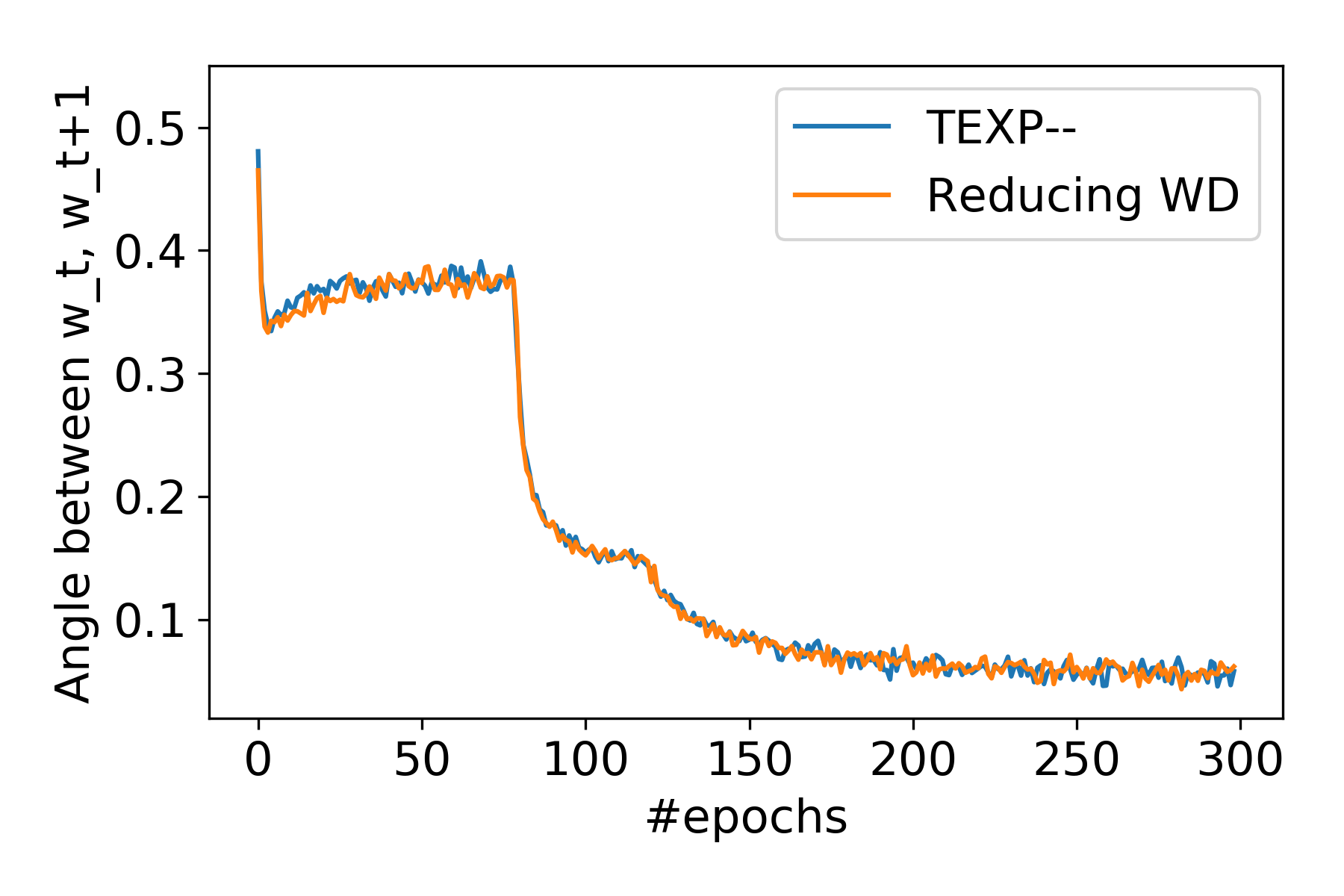}
    \end{subfigure}%
    ~
    \begin{subfigure}[t]{0.5\textwidth}
        \centering
        \includegraphics[width=0.99\textwidth]{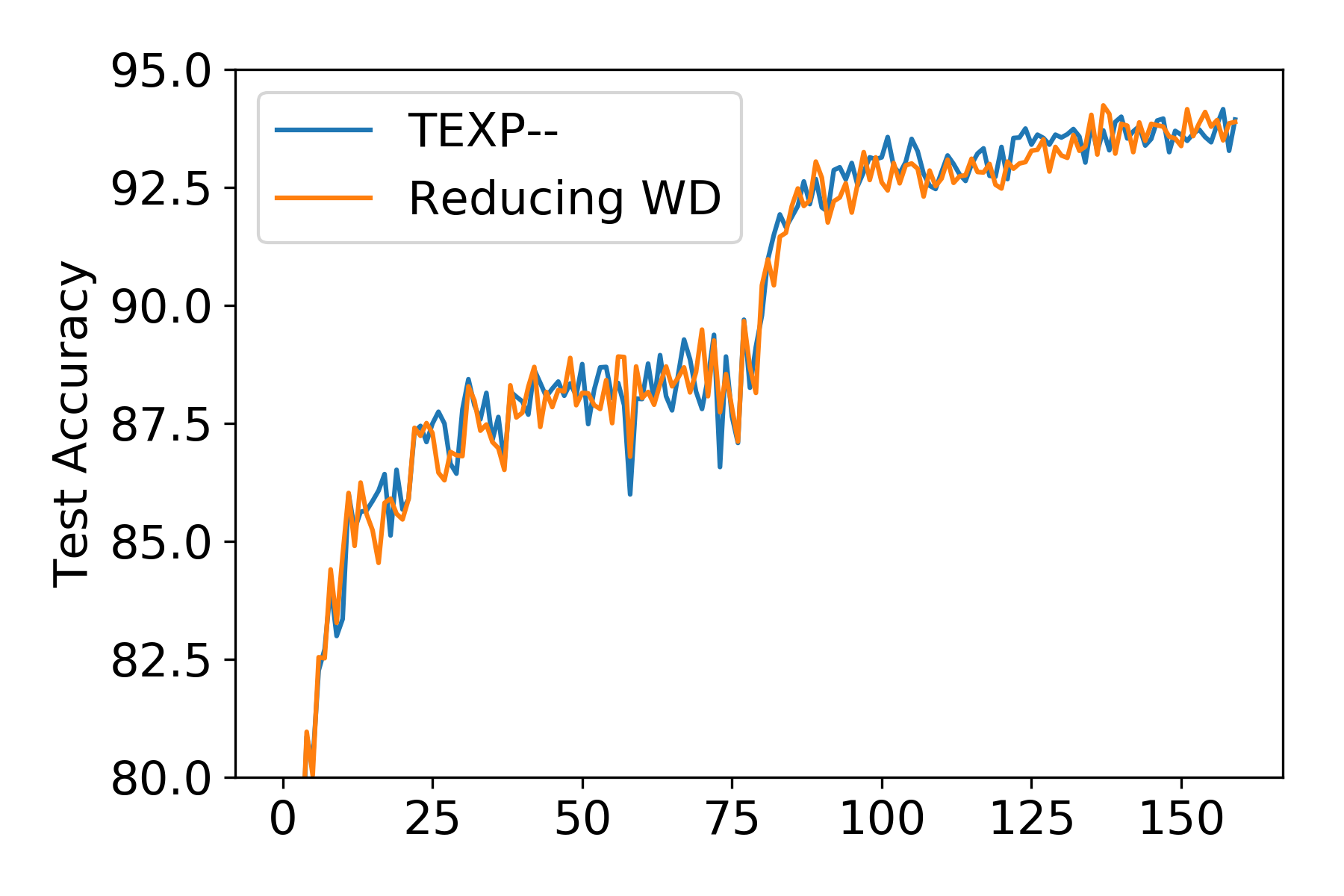}
        
    \end{subfigure}
    \caption{The orange line corresponds to PreResNet32 trained with constant LR and WD divided by 10 at epoch 80 and 120. The blue line is TEXP-{}- corresponding to Step Decay schedule which divides LR by 10 at epoch 80 and 120. 
    They have similar trajectories and performances by a similar argument to Theorem~\ref{thm:multi_exp}.(See Theorem~\ref{thm:multi_exp_app} and its proof in Appendix~\ref{sec:omitted_proofs})}
    \label{fig:decay_weight_decay_equiv}
\end{figure}

\subsection{Better Exponential LR Schedule with Cosine LR}
We  applied the TEXP LR schedule (Theorem~\ref{thm:multi_exp}) on the Cosine LR schedule \citep{cosine_lr}, where the learning rate changes every epoch, and thus correction terms cannot be ignored. The LR at epoch $t \leq T$ is defined as: $\eta_t = \eta_0 \frac{1+ \cos(\frac{t}{T}\pi)}{2}$.
Our experiments show this hybrid schedule with Cosine LR performs better on CIFAR10 than Step Decay, but this finding needs to be verified on other datasets.

\begin{figure}[]
    \vspace{-0.5cm}
    \centering
    \begin{subfigure}[t]{0.49\textwidth}
        \centering
        \includegraphics[width=0.99\textwidth]{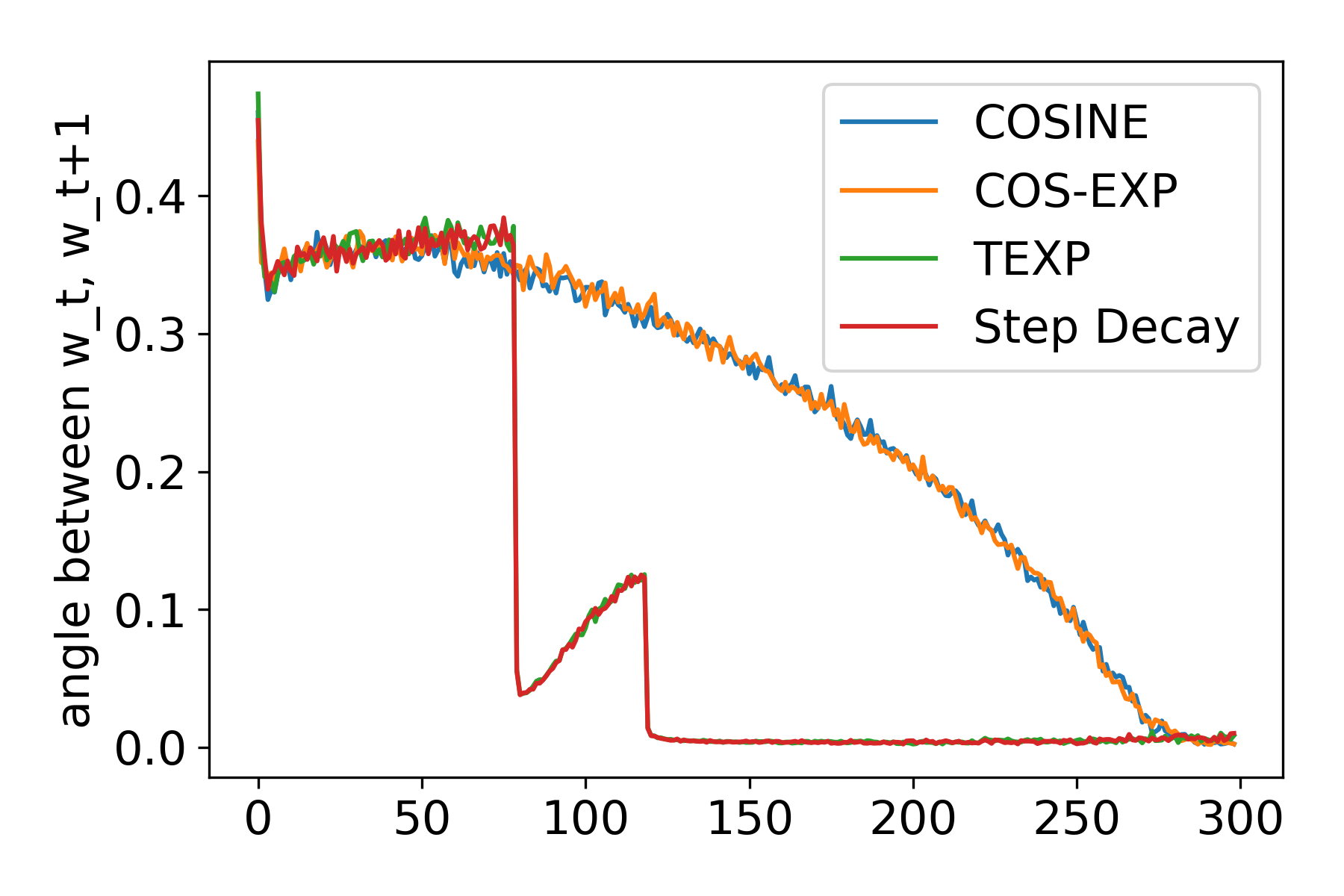}
    \end{subfigure}
    ~
    \begin{subfigure}[t]{0.49\textwidth}
        \centering
        \includegraphics[width=0.99\textwidth]{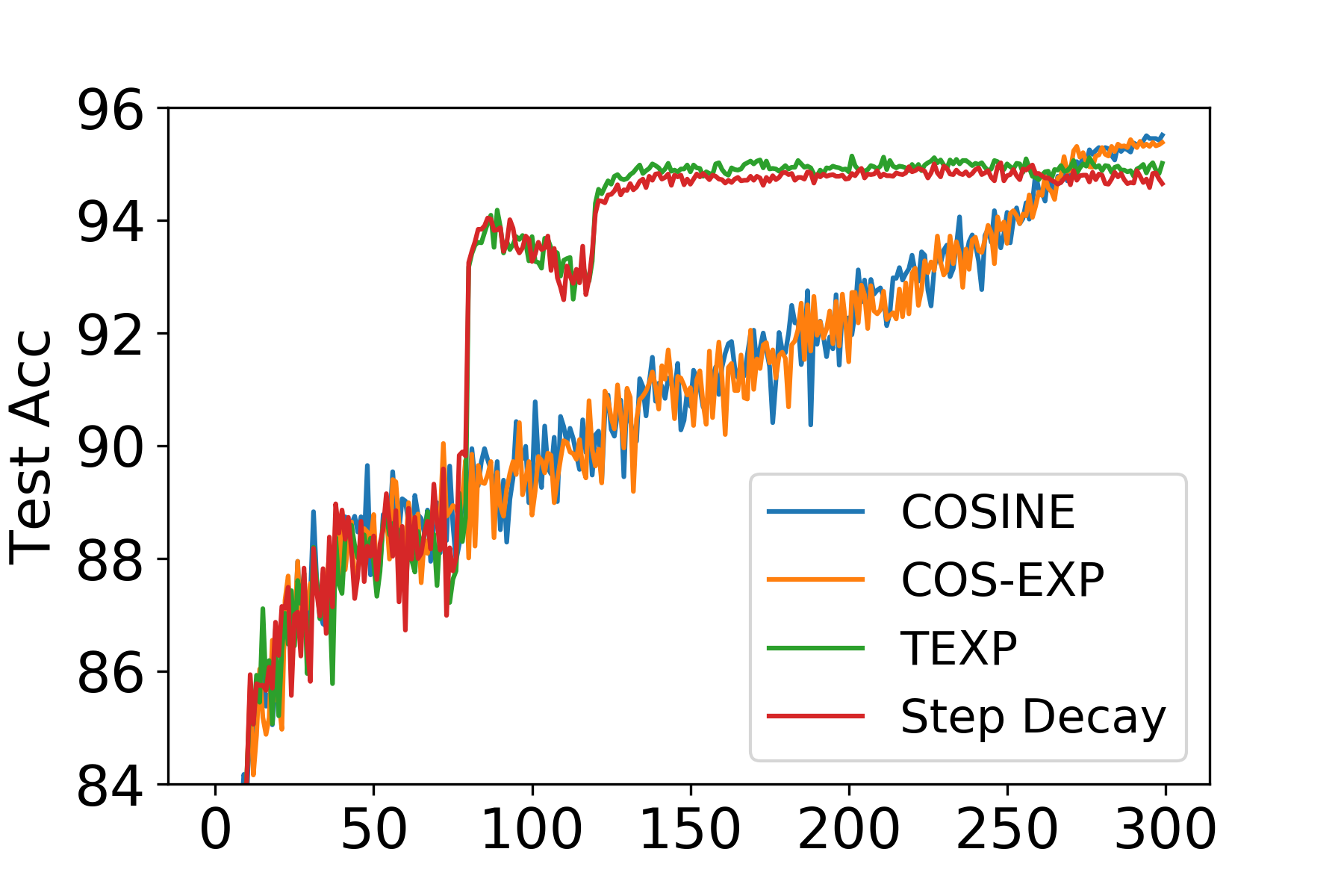}
    \end{subfigure}%
    \caption{Both Cosine and Step Decay schedule behaves almost the same as their exponential counterpart, as predicted by our equivalence theorem. The (exponential) Cosine LR schedule achieves better test accuracy,  with a entirely different trajectory.}
    \label{fig:dwd_equiva_angle_norm}
    \vspace{-0.4cm}
\end{figure}

\section{Conclusions}
The paper shows rigorously how BN allows a host of very exotic learning rate schedules in deep learning, and verifies these effects in experiments. The lr increases exponentially in almost every iteration during training. The exponential increase derives from use of weight decay, but the precise expression involves momentum as well. We suggest that the efficacy of this rule may be hard to explain with canonical frameworks in optimization. 

Our analyses of BN is a substantial improvement over earlier theoretical analyses, since it accounts for weight decay and momentum, which are always combined in practice.

Our tantalising experiments with a hybrid of exponential and cosine rates suggest that more surprises may lie out there. Our theoretical analysis of interrelatedness of hyperparameters could also lead to faster hyperparameter search.
     

\bibliography{zhiyuanref}
\bibliographystyle{iclr2020_conference}

\appendix
\newpage

\section{Omitted Proofs}
\label{sec:omitted_proofs}

\subsection{Omitted Proof in Section~\ref{sec:exp_lr}}\label{subsec:pf1}

\begin{lemma}[Some Facts about Equation~\ref{eq:quadratic}]\label{lem:fact}
Suppose $\za,\zb(\za\ge \zb)$ are the two real roots of the the following equation, we have 
    \begin{equation*}
        x^2-(1+\gamma-\lambda\eta)x + \gamma = 0
    \end{equation*}

\begin{enumerate}
    \item $\za = \frac{1+\gamma-\lambda\eta+\sqrt{(1-\gamma)^2-2(1+\gamma)\lambda\eta+\lambda^2\eta^2}}{2},\ z^2  = \frac{1+\gamma-\lambda\eta-\sqrt{(1-\gamma)^2-2(1+\gamma)\lambda\eta+\lambda^2\eta^2}}{2} $
    \item $\za,\zb$ are real $\Longleftrightarrow \lambda \eta \le (1-\sqrt{\gamma})^2$;   
    \item $\za\zb = \gamma, \za+\zb = (1+\gamma -\lambda\eta)$;
    \item $\gamma\le \zb\le \za \le 1$;
    \item Let $t=\frac{\lambda\eta}{1-\gamma}$, we have $\za \ge\frac{1}{1+t}\ge 1-t = 1-\frac{\lambda \eta}{1-\gamma}$.
    \item if we view $\za(\lambda\eta),\zb(\lambda\eta)$ as functions of $\lambda\eta$, then $\za(\lambda\eta)$ is monotone decreasing, $\zb(\eta)$ is monotone increasing. 
\end{enumerate}
\end{lemma}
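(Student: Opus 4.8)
The plan is to work throughout with $p(x) := x^2-(1+\gamma-\lambda\eta)x+\gamma$, writing $b := 1+\gamma-\lambda\eta$, and to use that $\gamma$ is a momentum factor, so $\gamma\in[0,1)$ and $\lambda\eta\ge 0$. Items 1 and 3 require essentially no work: item 1 is the quadratic formula applied to $p$ together with the purely algebraic identity $b^2-4\gamma = (1+\gamma-\lambda\eta)^2-4\gamma = (1-\gamma)^2-2(1+\gamma)\lambda\eta+\lambda^2\eta^2$, obtained by expanding; item 3 is Vieta's formulas, $\za\zb=\gamma$ and $\za+\zb=b$.

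For item 2, I would regard the discriminant $\Delta := b^2-4\gamma = (\lambda\eta)^2-2(1+\gamma)(\lambda\eta)+(1-\gamma)^2$ as a quadratic in the single quantity $\lambda\eta$. Its own discriminant is $4(1+\gamma)^2-4(1-\gamma)^2 = 16\gamma\ge 0$, so its roots in $\lambda\eta$ are $(1+\gamma)\pm 2\sqrt\gamma = (1\pm\sqrt\gamma)^2$; hence $\Delta\ge 0$ exactly when $\lambda\eta\le(1-\sqrt\gamma)^2$ or $\lambda\eta\ge(1+\sqrt\gamma)^2$, and the second alternative is excluded in the regime of interest ($\lambda\eta$ small, certainly $<1$), leaving $\lambda\eta\le(1-\sqrt\gamma)^2$. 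For item 4, assume $\Delta\ge 0$. Evaluate $p$ at the endpoints: $p(0)=\gamma\ge 0$ and $p(1)=1-b+\gamma=\lambda\eta\ge 0$. Also $b=1+\gamma-\lambda\eta\ge 1+\gamma-(1-\sqrt\gamma)^2=2\sqrt\gamma\ge 0$ and $b\le 1+\gamma\le 2$, so the vertex $b/2$ lies in $[0,1]$. An upward-opening parabola that is $\ge 0$ at both $0$ and $1$ and whose vertex lies in $[0,1]$ has both real roots in $[0,1]$, so $0\le\zb\le\za\le 1$; then $\zb=\gamma/\za\ge\gamma/1=\gamma$ gives the full chain $\gamma\le\zb\le\za\le 1$.

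For item 6, view $b=b(\lambda\eta)=1+\gamma-\lambda\eta$ on $[0,(1-\sqrt\gamma)^2]$: it is decreasing and stays $\ge 2\sqrt\gamma>0$, so $\Delta(\lambda\eta)=b(\lambda\eta)^2-4\gamma$ and hence $\sqrt{\Delta(\lambda\eta)}$ are decreasing; therefore $\za=\tfrac12(b+\sqrt\Delta)$ is decreasing as a sum of two decreasing functions, and since $\za\zb=\gamma$ is constant and $\za>0$, $\zb=\gamma/\za$ is increasing. For item 5, the elementary inequality $\tfrac1{1+t}\ge 1-t$ for $t\ge 0$ is just $(1-t)(1+t)=1-t^2\le 1$; the substantive claim about $\za$ I would obtain by evaluating $p$ at the candidate point $r$ (with $t=\tfrac{\lambda\eta}{1-\gamma}$, so $1-t=\tfrac{1-\gamma-\lambda\eta}{1-\gamma}$ and $\tfrac1{1+t}=\tfrac{1-\gamma}{1-\gamma+\lambda\eta}$) and then reading off the position of $r$ relative to the two roots from the sign of $p(r)$ together with whether $r$ lies left or right of the vertex $b/2$. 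The simplifications are clean: using $b=1+\gamma-\lambda\eta$ one finds $p(1-t)=\tfrac{\gamma(\lambda\eta)^2}{(1-\gamma)^2}$ and $p\big(\tfrac1{1+t}\big)=\tfrac{(\lambda\eta)^2}{(1-\gamma+\lambda\eta)^2}$, and the comparison of $1-t$ (resp.\ $\tfrac1{1+t}$) with $b/2$ reduces to comparing $\lambda\eta$ with $\tfrac{(1-\gamma)^2}{1+\gamma}$.

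The routine parts are items 1--4 and 6. The one place that needs genuine care is item 5: since $p$ turns out to be nonnegative at the candidate point, that point lies outside the open interval between the two roots, and to conclude which of $\zb,\za$ it sits beside one must correctly locate it relative to the vertex $b/2$ — an argument sensitive both to the direction of the resulting inequality and to the precise smallness hypothesis imposed on $\lambda\eta$. Nailing down these signs (and, if necessary, reconciling them with the exact form in which the bound is stated) is the main obstacle; everything else is bookkeeping with the quadratic formula and Vieta's relations.
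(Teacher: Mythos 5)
Your handling of items 1--4 and 6 is correct. Items 1--3 are not proved in the paper at all (its proof covers only 4--6), and your arguments for them are the standard ones. For item 4 your route is essentially the paper's (it evaluates the quadratic at $\gamma$ and $1$, you at $0$ and $1$), but your extra step $\zb=\gamma/\za\ge\gamma$ is genuinely needed: the paper's version only places both roots in $[0,1]$ and never actually justifies $\zb\ge\gamma$. For item 6 you are cleaner than the paper, whose stated justification (``since $\za+\zb$ is constant'') is wrong --- it is the product $\za\zb=\gamma$, not the sum, that is constant --- whereas your argument ($b$ and $\sqrt{\Delta}$ both decreasing, hence $\za$ decreasing, hence $\zb=\gamma/\za$ increasing) is the correct repair.

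Item 5 is where you stopped short, and your caution was warranted: \textbf{the inequality in item 5 is false as stated; it holds in the reverse direction.} Your own computations show this once pushed through. You correctly obtain $p(1-t)=\gamma t^2\ge 0$ and $p\bigl(\tfrac{1}{1+t}\bigr)=\tfrac{(\lambda\eta)^2}{(1-\gamma+\lambda\eta)^2}\ge 0$, and the vertex comparison you set up reduces to $\lambda\eta\le \tfrac{(1-\gamma)^2}{1+\gamma}$, which is implied by the realness condition $\lambda\eta\le(1-\sqrt{\gamma})^2$ because $(1+\sqrt{\gamma})^2\ge 1+\gamma$. Hence both candidate points lie to the \emph{right} of the vertex $b/2$, therefore to the right of $\za$, giving $\za\le 1-t$ and $\za\le\tfrac{1}{1+t}$ rather than $\ge$. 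The quickest confirmation: $(1-\za)(1-\zb)=p(1)=\lambda\eta$ together with $1-\zb\le 1-\gamma$ (item 4) yields $1-\za\ge\tfrac{\lambda\eta}{1-\gamma}=t$. Numerically, $\gamma=0.9$ and $\lambda\eta=5\times 10^{-5}$ give $\za\approx 0.9994977<0.9995=1-t$. The paper's own proof of item 5 contains the corresponding sign error: in its final step it replaces the denominator $2\bigl(1+t+\sqrt{\cdots}\bigr)$ by $4(1+t)$, which is \emph{larger} (since $\sqrt{\cdots}\le 1+t$), so the fraction increases and the claimed ``$\le$'' fails. So you should not try to reconcile your signs with the stated bound; your method is sound, and it is the lemma's item 5 (and its use downstream, e.g.\ the bound $z_{min}^{-1}\le 1+\tfrac{\eta_{max}\lambda_{max}}{1-\gamma}$ in Theorem~\ref{thm:multi_exp_full}) that needs correcting to $\za\le 1-t\le\tfrac{1}{1+t}$.
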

\begin{proof}
 \ 
\begin{itemize}
    \item[4.] Let $f(x)=  x^2-(1+\gamma-\lambda\eta)x + \gamma$, we have $f(1)= f(\gamma) = \lambda \eta\ge 0$. Note the minimum of $f$ is taken at $x= \frac{1+\gamma-\lambda\eta}{2}\in [0,1]$, the both roots of $f(x)=0$ must lie between $0$ and $1$, if exists.
    
    \item[5] \[\begin{split}
    1-\za  &= \frac{1-\gamma+\lambda\eta-\sqrt{(1-\gamma)^2-2(1+\gamma)\lambda\eta+\lambda^2\eta^2}}{2} \\
    &= (1-\gamma)\frac{1+t - \sqrt{1-\frac{1+\gamma}{1-\gamma}t+t^2}}{2}\\
    &= (1-\gamma)\frac{2t + 2\frac{1+\gamma}{1-\gamma}t}{2(1+t +\sqrt{1-\frac{1+\gamma}{1-\gamma}t+t^2})}\\
    & \le (1-\gamma)\frac{\frac{4}{1-\gamma}t}{4(1+t)} \\
    & = \frac{t}{(1+t)}
\end{split}\]

    \item[6.] Note that $(\za-\zb)^2 = (\za+\zb)^2 -4\za\zb = (1+\gamma-\lambda\eta)^2-4\gamma$ is monotone decreasing, since $\za(\lambda\eta)+\zb(\lambda\eta)$ is constant, $\za(\lambda\eta)\ge \zb(\lambda\eta)$, $\za(\lambda\eta)$ must be decreasing and $\zb(\lambda\eta)$ must be increasing.
\end{itemize}

\end{proof}

\subsection{Omitted proofs in Section~\ref{subsec:fixed_lr}}

\begin{pf}[Proof of Lemma~\ref{lem:gdw}]  For any $(\vtheta,\eta)$, we have
\[\GDW^\rho_t(\vtheta,\eta) =  (\rho \vtheta - \eta \nabla L_t(\vtheta), \eta)  = [\pa^{\rho}\circ \pb^{\rho}\circ\GD_t](\vtheta, \frac{\eta}{\rho}) =[\pa^{\rho}\circ \pb^{\rho}\circ\GD_t\circ \pb^{\rho^{-1}}](\vtheta, \eta).\]
\end{pf}

\begin{pf}[Proof of Lemma~\ref{lem:commute}]
For any $(\vtheta,\eta)$, 
we have 
\begin{align*}
&\left[\GD_t\circ \pa^{c}\circ\pb^{c^2}\right](\vtheta,\eta) = \GD_t(c\vtheta,c^2\eta) = (c\vtheta - c^2\vtheta \nabla L_t(c\vtheta),c^2\eta) \overset{*}{=} (c(\vtheta- \nabla L_t(\vtheta)),c^2\eta)\\
= &\left[\pa^{c}\circ\pb^{c^2}\circ \GD_t\right](\vtheta,\eta).
\qquad\qquad\qquad\qquad\qquad (\textrm{ $\overset{*}{=}$: Scale Invariance, Lemma~\ref{lem:scale_invariance}})  \qqed
\end{align*}
\end{pf}

\subsection{Omitted proofs in Section~\ref{subsec:fixed_lr_momentum}}

\begin{pf}[Proof of Lemma~\ref{lem:commute_momentum}]
For any input $(\vtheta,\eta,\vtheta',\eta')$, it's easy to check both composed maps have the same outputs on the 2,3,4th coordinates, namely $(c^2\eta, c\vtheta,c^2\eta')$. For the first coordinate, we have
\begin{align*}
&\left[\GDW^\rho(c\vtheta,c^2\eta,c\vtheta',c^2\eta)\right]_1 = \rho c\vtheta + c^2 \eta \left(\gamma\frac{\vtheta-\vtheta'}{\eta'} - \nabla L_t\left(c\vtheta\right)\right) \overset{*}{=} c\left(\vtheta + \eta \left( \gamma\frac{\vtheta-\vtheta'}{\eta'} - \nabla L_t(\vtheta)\right)\right)\\
= &c\left[\GDW^\rho(\vtheta,\eta,\vtheta',\eta)\right]_1.
\qquad \textrm{ $\overset{*}{=}$: Scale Invariance, Lemma~\ref{lem:scale_invariance}}  \qqed
\end{align*}
\end{pf}

\begin{pf}[Proof of Lemma~\ref{lem:gdw_momentum}]
For any input $(\vtheta,\eta,\vtheta',\eta')$, it's easy to check both composed maps have the same outputs on the 2,3,4th coordinates, namely $(\eta, \vtheta,\eta)$. For the first coordinate, we have
\begin{align*}
&\left[\left[\pc^{\alpha}\circ\pd^{\alpha} \circ \pb^{\alpha}\circ \GD_t \circ\pb^{\alpha^{-1}}\circ\pc^{\alpha}\circ \pd^{\alpha}\right](\vtheta,\eta,\vtheta',\eta)\right]_1 
= \alpha \left[\GD_t (\vtheta, \alpha^{-1}\eta,\alpha \vtheta',\alpha \eta)\right]_1 \\
= & \alpha\left(\vtheta + \alpha^{-1}\eta \left(\gamma\frac{\vtheta-\vtheta'}{\eta} - \nabla L_t(\vtheta)\right)\right)\\
= &\left(\alpha+\gamma\alpha^{-1}\right) \vtheta -\eta  \nabla L_t(\vtheta) - \eta \gamma\frac{\vtheta'}{\eta}\\
= & \left(\rho+\gamma\right)\vtheta  -\eta  \nabla L_t(\vtheta) - \gamma \vtheta'
\quad= \left[\GDW^\rho_t(\vtheta,\eta,\vtheta',\eta) \right]_1 \qqed
\end{align*}
\end{pf}

\subsection{Omitted proofs of Theorem~\ref{thm:multi_exp}}
In this subsection we will prove a stronger version of Theorem~\ref{thm:multi_exp}(restated below), allowing the WD,$\lambda_I$ changing each phase.

  \begin{theorem}[A stronger version of Theorem~\ref{thm:multi_exp}]\label{thm:multi_exp_app}
  There exists a way to correct the momentum only at the first iteration of each phase, such that the following Tapered-Exponential LR schedule (TEXP) $\{\teta_t\}$ with momentum factor $\gamma$ and no WD, leads the same sequence networks in function space compared to that of Step Decay LR schedule(Definition~\ref{defi:step_decay}) with momentum factor $\gamma$ and \emph{phase-dependent} WD $\lambda^*_I$ in phase $I$, where phase $I$ lasts from iteration $T_I$ to iteration $T_{I+1}$, $T_0 = 0$.
    \begin{equation}\label{eq:lr_calculation}
        \teta_{t+1} = \begin{cases} 
        \teta_{t}\times (\alpha^*_{I-1})^{-2} \quad &\mbox{ if $T_{I-1}+1 \le t \le T_I-1$, $I\ge 1$}\\
        \teta_{t} \times  \frac{\eta^*_{I}}{\eta^*_{I-1}}\times (\alpha^*_I)^{-1} (\alpha^*_{I-1})^{-1} &\mbox{ if $t=T_I, I\ge 1$}
        \end{cases},
    \end{equation}
    
\vspace{-0.15cm}
    \textrm{where } $\alpha^*_I = \frac{1+\gamma-\lambda^*_I\eta^*_I + \sqrt{\left(1+\gamma-\lambda^*_I\eta^*_I\right)^2-4\gamma }}{2}$, $\teta_0 = \eta_0 (\alpha_0^*)^{-1} =  \eta^*_0(\alpha_0^*)^{-1}$.
   \end{theorem}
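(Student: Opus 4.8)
The plan is to carry the full Step Decay trajectory through the four--coordinate state formalism of Section~\ref{subsec:fixed_lr_momentum} — a state $(\vtheta,\eta,\vtheta',\eta')$ recording current parameters/LR and buffered parameters/LR — and rewrite it, phase by phase and up to one overall equivalent scaling, as the TEXP composition with a single momentum correction inserted at the start of each phase. Write $\rho^*_I := 1-\lambda^*_I\eta^*_I$, so that inside phase $I$ every Step Decay iteration is the map $\GDW^{\rho^*_I}_t$; by Lemma~\ref{lem:fact} the number $\alpha^*_I$ in the statement is precisely the larger root of $x+\gamma x^{-1}=\rho^*_I+\gamma$, which is the hypothesis needed to invoke Lemma~\ref{lem:gdw_momentum}. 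The whole argument is an induction on the phase index, the base case being Theorem~\ref{thm:main_gd_momentum} applied verbatim to phase $0$; this also fixes the initial condition, since $(\vttheta_0,\teta_0,\vttheta_{-1},\teta_{-1}) = [\pb^{(\alpha^*_0)^{-1}}\circ\pc^{\alpha^*_0}\circ\pd^{\alpha^*_0}](\vtheta_0,\eta^*_0,\vtheta_0,\eta^*_0)$, giving $\teta_0=\eta^*_0(\alpha^*_0)^{-1}$.

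Within a fixed phase $I$ I would argue exactly as in the proof of Theorem~\ref{thm:main_gd_momentum}. After the first iteration of the phase, the state's current and buffered LR agree, so Lemma~\ref{lem:gdw_momentum} gives $\GDW^{\rho^*_I}_t \osim{\alpha^*_I} \pc^{(\alpha^*_I)^{-1}}\circ\pd^{(\alpha^*_I)^{-1}}\circ\pb^{(\alpha^*_I)^{-1}}\circ\GD_t\circ\pb^{(\alpha^*_I)^{-1}}\circ\pc^{\alpha^*_I}\circ\pd^{\alpha^*_I}$; composing consecutive iterations and using Lemma~\ref{lem:commute_momentum} to slide the equivalent scalings to the left, the trailing $\pc^{\alpha^*_I}\pd^{\alpha^*_I}$ of one factor cancels the leading $\pc^{(\alpha^*_I)^{-1}}\pd^{(\alpha^*_I)^{-1}}$ of the next, leaving a pure GD process whose LR is multiplied by $(\alpha^*_I)^{-2}$ each iteration — the first branch of Equation~\ref{eq:lr_calculation}.

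The only genuinely new step is the phase boundary, where the momentum correction is inserted. When phase $I$ begins, the state reached at the end of phase $I-1$ has current and buffered LR that disagree once the current LR is switched to $\eta^*_I$, so Lemma~\ref{lem:gdw_momentum} does not apply to the first iteration of phase $I$. I would fix this by replacing the buffered parameter $\vtheta'$ with $\vtheta'':=\vtheta-\frac{\eta}{\eta'}(\vtheta-\vtheta')$, so that $\frac{\vtheta-\vtheta''}{\eta}=\frac{\vtheta-\vtheta'}{\eta'}$: since momentum enters $\GDW$ only through $\gamma\frac{\vtheta-\vtheta'}{\eta'}$, this changes neither the parameter vector produced at that iteration nor the corresponding network, but it equalizes the current and buffered LR so that Lemma~\ref{lem:gdw_momentum} applies with $\alpha=\alpha^*_I$. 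Now sliding scalings leftward, the leading $\pc^{(\alpha^*_I)^{-1}}\pd^{(\alpha^*_I)^{-1}}$ of this first iteration meets the trailing $\pc^{\alpha^*_{I-1}}\pd^{\alpha^*_{I-1}}$ left over from closing phase $I-1$; because in general $\alpha^*_{I-1}\ne\alpha^*_I$ these do not cancel, and their product survives precisely as the one-time momentum correction asserted in the statement. On the LR coordinate, the incoming $\teta$ picks up the honest LR ratio $\frac{\eta^*_I}{\eta^*_{I-1}}$ together with the two uncancelled $\pb^{-1}$ factors $(\alpha^*_{I-1})^{-1}(\alpha^*_I)^{-1}$, one from each phase, which is exactly the second branch of Equation~\ref{eq:lr_calculation}.

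Assembling the phases by induction, the Step Decay trajectory equals the TEXP trajectory with these corrections, precomposed with a single equivalent scaling $\pa^{c}\circ\pb^{c^2}\circ\pc^{c}\circ\pd^{c^2}$ whose exponent $c$ is the product over completed phases $I$ of $(\alpha^*_I)$ raised to the number of iterations in phase $I$; hence $\vttheta_t$ is a positive multiple of $\vtheta_t$ for every $t$ and $f_{\vttheta_t}=f_{\vtheta_t}$ by scale invariance. I expect the main obstacle to be precisely this boundary bookkeeping: checking that the ``pretend the two learning rates coincide'' substitution is well defined and network preserving, that it leaves exactly the residual $\pc/\pd$ scaling claimed rather than some other factor, and that the uncancelled $\pb^{-1}$'s combine with the honest LR ratio to reproduce Equation~\ref{eq:lr_calculation} with the stated exponents (one must also be careful with the index convention for which iteration uses which learning rate, and treat phases of length one separately). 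The within-phase telescoping, and the algebraic identity relating $\rho^*_I$ to $\alpha^*_I$, are routine once Lemmas~\ref{lem:commute_momentum}, \ref{lem:gdw_momentum}, and \ref{lem:fact} are in hand.
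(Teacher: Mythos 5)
Your proposal is correct and follows essentially the same route as the paper: the ``pretend the two learning rates coincide'' substitution $\vtheta''=\vtheta-\frac{\eta}{\eta'}(\vtheta-\vtheta')$ is exactly the paper's Canonicalization map $\can$ (Lemma~\ref{lem:canonicalization}), the within-phase telescoping via Lemmas~\ref{lem:commute_momentum} and~\ref{lem:gdw_momentum} is identical, and the uncancelled $\pc/\pd$ and $\pb$ factors at phase boundaries are what the paper packages into the maps $H_t$. The only cosmetic difference is that you organize the argument as an induction over phases while the paper writes one long composition and then shows $H_t=\Id$ inside each phase.
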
 

Towards proving Theorem~\ref{thm:multi_exp}, we need the following lemma which holds by expanding the definition, and we omit its proof.
\begin{lemma}[Canonicalization]\label{lem:canonicalization}
We define the \emph{Canonicalization} map as $N(\vtheta,\eta,\vtheta',\eta') =(\vtheta,\eta,\vtheta- \frac{\eta}{\eta'}(\vtheta-\vtheta'), \eta)$, and it holds that
\begin{enumerate}
\item $\GDW^\rho_t\circ \can=\GDW^\rho_t$, $\forall \rho>0,t\ge 0$.
\item $\can\circ \left[ \pa^{c}\circ\pb^{c^2}\circ\pc^{c}\circ\pd^{c^2}\right] = \left[ \pa^{c}\circ\pb^{c^2}\circ\pc^{c}\circ\pd^{c^2}\right] \circ \can$,  $\forall c>0$.
\end{enumerate}
\end{lemma}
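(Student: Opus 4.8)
The plan is to verify both identities by direct substitution, since each reduces to the single algebraic observation that the map $\can$ is designed precisely to preserve the momentum-relevant ratio $\frac{\vtheta-\vtheta'}{\eta'}$ while forcing the buffered learning rate to agree with the current one. No limiting or inductive argument is needed; everything is an equality of explicit four-coordinate maps checked coordinatewise.

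First I would establish property 1. Writing $\can(\vtheta,\eta,\vtheta',\eta')=(\vtheta,\eta,\vtheta'',\eta)$ with $\vtheta''=\vtheta-\frac{\eta}{\eta'}(\vtheta-\vtheta')$, the key computation is $\frac{\vtheta-\vtheta''}{\eta}=\frac{\vtheta-\vtheta'}{\eta'}$, which follows immediately by cancelling the factor $\eta$. Now recall that $\GDW^\rho_t(\vtheta,\eta,\vtheta',\eta')$ has second through fourth output coordinates $(\eta,\vtheta,\eta)$, none of which involve $\vtheta'$ or $\eta'$, and its first coordinate depends on the buffered state only through the combination $\gamma\frac{\vtheta-\vtheta'}{\eta'}$. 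Feeding the canonicalized state into $\GDW^\rho_t$ therefore reproduces exactly the same first coordinate (because the momentum ratio is unchanged) and exactly the same remaining coordinates (because they ignore the buffered slots), giving $\GDW^\rho_t\circ\can=\GDW^\rho_t$.

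Next I would prove property 2 by evaluating both composite maps on an arbitrary state. Abbreviating the equivalent scaling as $S^c:=\pa^c\circ\pb^{c^2}\circ\pc^c\circ\pd^{c^2}$, which sends $(\vtheta,\eta,\vtheta',\eta')\mapsto(c\vtheta,c^2\eta,c\vtheta',c^2\eta')$, the only coordinate requiring attention is the third (the buffered parameter). Applying $\can$ after $S^c$ yields a third coordinate $c\vtheta-\frac{c^2\eta}{c^2\eta'}(c\vtheta-c\vtheta')$; since the ratio $\frac{c^2\eta}{c^2\eta'}=\frac{\eta}{\eta'}$ is scale-invariant, this simplifies to $c\bigl(\vtheta-\frac{\eta}{\eta'}(\vtheta-\vtheta')\bigr)$. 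Applying $S^c$ after $\can$ instead scales the canonicalized third coordinate $\vtheta-\frac{\eta}{\eta'}(\vtheta-\vtheta')$ by $c$, producing the identical expression. The remaining coordinates are $(c\vtheta,c^2\eta,c^2\eta)$ on both sides, so the two composites agree.

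There is no genuine obstacle here: the entire content is the scale-invariance of the ratio $\eta/\eta'$ under the equivalent scaling together with the linearity of the buffered-parameter adjustment in $\vtheta-\vtheta'$. The one subtlety worth stating explicitly is why $\GDW^\rho_t$ depends on the buffered coordinates only through $\frac{\vtheta-\vtheta'}{\eta'}$ and not on $\vtheta'$ and $\eta'$ separately; this is exactly what upgrades property 1 from an equality up to equivalence into an equality of maps on the nose, and it is the structural fact that the multi-phase analysis of Theorem~\ref{thm:multi_exp_app} will later exploit.
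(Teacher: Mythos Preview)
Your proof is correct and follows exactly the approach the paper indicates (``holds by expanding the definition'') but chose to omit: you verify both identities by direct coordinatewise computation, using for part~1 that $\GDW^\rho_t$ depends on the buffered slot only through $\frac{\vtheta-\vtheta'}{\eta'}$, which $\can$ preserves, and for part~2 that the ratio $\eta/\eta'$ is unchanged by the equivalent scaling $S^c$. There is nothing to add.
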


Similar to the case of momentum-free SGD, we define the notion of equivalent map below
\begin{definition}[Equivalent Maps]
For two maps $F$ and $G$, we say $F$ is equivalent to $G$ iff $\exists c>0$, $F = \left[\pa^c\circ \pb^{c^2}\circ \pc^c\circ \pd^{c^2}\right]\circ G$, which is also denoted  by $F\osim{c} G$. 
\end{definition}

Note that for any $(\vtheta,\eta,\vtheta',\eta')$,  $\left[\can(\vtheta,\eta,\vtheta',\eta')\right]_2 = \left[\can(\vtheta,\eta,\vtheta',\eta')\right]_4$. Thus as a direct consequence of Lemma~\ref{lem:gdw_momentum}, the following lemma holds.

\begin{lemma}\label{lem:canonical_equi_map}
$\forall \rho, \alpha>0$,\  $\GDW^\rho_t \circ \can \osim{\alpha} \pc^{\alpha^{-1}}\circ\pd^{\alpha^{-1}} \circ \pb^{\alpha^{-1}}\circ \GD_t \circ\pb^{\alpha^{-1}}\circ\pc^{\alpha}\circ \pd^{\alpha}\circ \can$.
\end{lemma}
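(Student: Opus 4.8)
The plan is to obtain this lemma as an immediate corollary of Lemma~\ref{lem:gdw_momentum}, using only one structural fact about the canonicalization map $\can$: for every state $(\vtheta,\eta,\vtheta',\eta')$ with $\eta,\eta'>0$ (so that $\can$ is well defined), the state $\can(\vtheta,\eta,\vtheta',\eta')$ has its second and fourth coordinates equal, both being $\eta$. Thus $\can$ sends every state into the set of \emph{canonical} states of the form $(\bar\vtheta,\bar\eta,\bar\vtheta',\bar\eta)$, which is exactly the family of inputs on which Equation~\ref{eq:gdw_momentum} of Lemma~\ref{lem:gdw_momentum} is asserted.

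First I would fix $\alpha>0$ and set $\rho = \alpha + \gamma\alpha^{-1} - \gamma$, so that $\alpha$ is a positive root of $\alpha+\gamma\alpha^{-1}=\rho+\gamma$ as Lemma~\ref{lem:gdw_momentum} requires; this is how the quantifier $\forall\rho,\alpha>0$ in the statement should be read (equivalently, $\rho$ ranges over $\reals^+$ and $\alpha$ is the matching root of the quadratic). Then, for an arbitrary input $s=(\vtheta,\eta,\vtheta',\eta')$, I would apply Equation~\ref{eq:gdw_momentum} to the canonical state $\can(s)$ in place of $(\vtheta,\eta,\vtheta',\eta)$, obtaining
\[
\GDW^\rho_t(\can(s)) \osim{\alpha} \left[\pc^{\alpha^{-1}}\circ\pd^{\alpha^{-1}} \circ \pb^{\alpha^{-1}}\circ \GD_t \circ\pb^{\alpha^{-1}}\circ\pc^{\alpha}\circ \pd^{\alpha}\right](\can(s)).
\]
The equivalence constant produced by Lemma~\ref{lem:gdw_momentum} is $\alpha$, which depends only on $\rho$ and $\gamma$ and not on $s$, so this pointwise relation holds with the same $c=\alpha$ for every input $s$. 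By the definition of equivalent maps (the momentum version stated just above), this is precisely the claimed equivalence $\GDW^\rho_t\circ\can \osim{\alpha} \pc^{\alpha^{-1}}\circ\pd^{\alpha^{-1}} \circ \pb^{\alpha^{-1}}\circ \GD_t \circ\pb^{\alpha^{-1}}\circ\pc^{\alpha}\circ \pd^{\alpha}\circ\can$.

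I do not anticipate any real difficulty: the lemma genuinely is a direct consequence of Lemma~\ref{lem:gdw_momentum} once one observes that precomposing with $\can$ restricts attention exactly to canonical states. The only two points that warrant an explicit line are (i) that $\can$ indeed lands in the canonical set, which is immediate from its definition in Lemma~\ref{lem:canonicalization}, and (ii) that a family of pointwise $\osim{\alpha}$ relations with a common constant can be upgraded to a single equivalence of maps, which is automatic because $\alpha$ is independent of the input. If useful, one may additionally record via Lemma~\ref{lem:canonicalization}(1) that $\GDW^\rho_t\circ\can=\GDW^\rho_t$ on inputs whose current and buffered learning rates already coincide, which is what makes this lemma convenient to chain across phases in the subsequent induction.
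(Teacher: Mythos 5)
Your proposal is correct and matches the paper's own argument: the paper likewise observes that $\can$ forces the second and fourth coordinates of the state to coincide, so that Lemma~\ref{lem:gdw_momentum} applies directly to $\can(s)$, and concludes. Your added remark that the quantifier $\forall\rho,\alpha>0$ must be read with $\alpha$ as the positive root of $\alpha+\gamma\alpha^{-1}=\rho+\gamma$ is a worthwhile clarification of an imprecision in the statement, but does not change the route.
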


\begin{proof}[Proof of Theorem\ref{thm:multi_exp}]
Starting with initial state $(\vtheta_0,\eta_0,\vtheta_{-1},\eta_{-1})$ where $\eta_{-1}=\eta_0$ and a given LR schedule $\{\eta_t\}_{t\ge0}$, the parameters generated by GD with WD and momentum satisfies the following relationship:

\[(\vtheta_{t+1},\eta_{t+1},\vtheta_t,\eta_t) = \left[\pb^{\frac{\eta_{t+1}}{\eta_t}}\circ \GDW^{1-\eta_t\lambda_{t}}_t\right](\vtheta_{t},\eta_{t},\vtheta_{t-1},\eta_{t-1}).\]

Define $\bigc_{t=a}^b F_t  = F_b\circ F_{b-1}\circ\ldots\circ F_{a}$, for $a\le b$. By Lemma~\ref{lem:canonicalization} and Lemma~\ref{lem:canonical_equi_map}, letting $\alpha_t$ be  the root of $x^2-(\gamma+1-\eta_{t-1}\lambda_{t-1})x+\gamma = 0$,  we have 


\begin{equation}\label{eq:multi_phase_main}\begin{split}
&\bigc_{t=0}^{T-1} \left[\pb^{\frac{\eta_{t+1}}{\eta_t}}\circ \GDW^{1-\eta_t\lambda_t}_t  \right]\\
= &\bigc_{t=0}^{T-1} \left[\pb^{\frac{\eta_{t+1}}{\eta_t}}\circ \GDW^{1-\eta_t\lambda_{t}}_t \circ \can \right]\\
\osim[2.5]{\tiny\prod\limits_{i=0}^{T-1}\alpha_i} & \bigc_{t=0}^{T-1}\left[\pb^{\frac{\eta_{t+1}}{\eta_t}}\circ \pc^{\alpha_{t+1}^{-1}}\circ\pd^{\alpha_{t+1}^{-1}} \circ \pb^{\alpha_{t+1}^{-1}}\circ \GD_t \circ\pb^{\alpha_{t+1}^{-1}}\circ\pc^{\alpha_{t+1}}\circ \pd^{\alpha_{t+1}} \circ \can\right]\\
= & \pb^{\frac{\eta_{T}}{\eta_{T-1}}}\circ \pc^{\alpha_{T-1}^{-1}}\circ\pd^{\alpha_{T-1}^{-1}} \circ \pb^{\alpha_{T}^{-1}}\circ
\GD_{T-1}\circ \left( \bigc_{t=1}^{T-1}\left[ \pb^{\alpha_{t+1}^{-1}\alpha_{t}^{-1}}\circ  H_t \circ  \GD_{t-1}\right] \right)\circ\pb^{\alpha_1^{-1}}\circ\pc^{\alpha_1}\circ \pd^{\alpha_1} \circ N,
\end{split}
\end{equation}
where $\osim[2.5]{\tiny\prod\limits_{i=0}^{T-1}\alpha_i}$ is because of Lemma~\ref{lem:canonical_equi_map}, and $H_t$ is defined as 
\[ H_t = \pb^{\alpha_{t}}\circ \pb^{\frac{\eta_{t-1}}{\eta_t}}\circ\pc^{\alpha_{t+1}}\circ \pd^{\alpha_{t+1}} \circ \can\circ \pc^{\alpha_t^{-1}}\circ\pd^{\alpha_t^{-1}} \circ \pb^{\alpha_{t}^{-1}} \circ\pb^{ \frac{\eta_{t}}{\eta_{t-1}}}.\]
Since the canonicalization map  $\can$ only changes the momentum part of the state, it's easy to check that  $H_t$ doesn't touch the current parameter $\vtheta$  and the current LR $\eta$.  Thus $H_t$ only changes the momentum part of the input state. Now we claim that $H_t\circ \GD_{t-1} =\GD_{t-1}$ whenever $\eta_t =\eta_{t-1}$.  This is because when $\eta_t =\eta_{t-1}$, $\alpha_t=\alpha_{t+1}$ , thus $H_t\circ \GD_{t-1} = \GD_{t-1}$. In detail, 
\[\begin{split} 
& H_t\circ \GD_{t-1} \\
= &\pb^{\alpha_{t}}\circ\pc^{\alpha_{t}}\circ \pd^{\alpha_{t}} \circ \can\circ \pc^{\alpha_t^{-1}}\circ\pd^{\alpha_t^{-1}} \circ \pb^{\alpha_{t}^{-1}} \circ \GD_{t-1}\\
\overset{*}{=} &\pb^{\alpha_{t}}\circ\pc^{\alpha_{t}}\circ \pd^{\alpha_{t}} \circ \pc^{\alpha_t^{-1}}\circ\pd^{\alpha_t^{-1}} \circ \pb^{\alpha_{t}^{-1}} \circ \GD_{t-1}\\
= & \GD_{t-1},
\end{split}\] 
where $\overset{*}{=}$ is because GD update $\GD_t$ sets  $\eta'$ the same as $\eta$, and thus ensures the input of $\can$ has the same momentum factor in buffer as its current momentum factor, which makes $\can$ an identity map.

Thus we could rewrite Equation~\ref{eq:multi_phase_main} with a ``sloppy''version of $H_t$, $H'_t=\begin{cases}H_t & \mbox{$\eta_t\neq\eta_{t-1}$;}\\ Id & \mbox{o.w.} \end{cases}$:

\begin{equation}\label{eq:multi_phase_main2}\begin{split}
&\bigc_{t=0}^{T-1} \left[\pb^{\frac{\eta_{t+1}}{\eta_t}}\circ \GDW^{1-\eta_t\lambda_t}_t  \right]\\
= & \pb^{\frac{\eta_{T}}{\eta_{T-1}}}\circ \pc^{\alpha_{T-1}^{-1}}\circ\pd^{\alpha_{T-1}^{-1}} \circ \pb^{\alpha_{T}^{-1}}\circ
\GD_{T-1}\circ \left( \bigc_{t=1}^{T-1}\left[ \pb^{\alpha_{t+1}^{-1}\alpha_{t}^{-1}}\circ  H'_t \circ  \GD_{t-1}\right] \right)\circ\pb^{\alpha_1^{-1}}\circ\pc^{\alpha_1}\circ \pd^{\alpha_1} \circ N\\
= & \pb^{\frac{\eta_{T}}{\eta_{T-1}}}\circ \pc^{\alpha_{T-1}^{-1}}\circ\pd^{\alpha_{T-1}^{-1}} \circ \pb^{\alpha_{T}^{-1}}
\circ \left( \bigc_{t=1}^{T-1}\left[ \GD_{t} \circ \pb^{\alpha_{t+1}^{-1}\alpha_{t}^{-1}}\circ  H'_t \right] \right) \circ\GD_0\circ\pb^{\alpha_1^{-1}}\circ\pc^{\alpha_1}\circ \pd^{\alpha_1} \circ N,
\end{split}
\end{equation}


Now we construct the desired sequence of parameters achieved by using the Tapered Exp LR schedule  \ref{eq:lr_calculation} and the additional one-time momentum correction per phase. Let $(\vttheta_0,\teta_0,\vttheta_{-1},\teta_{-1}) = (\vtheta_0,\eta_0,\vtheta_{-1}, \eta_0)$, and 

\begin{align*}
(\vttheta_{1},\teta_{1},\vttheta_0,\teta_0) = &\left[ \GD_0 \circ\pb^{\alpha_1^{-1}}\circ\pc^{\alpha_1}\circ \pd^{\alpha_1} \circ N \right] (\vttheta_{0},\teta_{0},\vttheta_{-1},\teta_{-1}) \\
=&  \left[ \GD_0 \circ\pb^{\alpha_1^{-1}}\circ\pc^{\alpha_1}\circ \pd^{\alpha_1} \right] (\vttheta_{0},\teta_{0},\vttheta_{-1},\teta_{-1})  ;\\
(\vttheta_{t+1},\teta_{t+1},\vttheta_t,\teta_t) =& \left[ \GD_{t} \circ \pb^{\alpha_{t+1}^{-1}\alpha_{t}^{-1}}\circ  H'_t\right] (\vttheta_{t},\teta_{t},\vttheta_{t-1},\teta_{t-1}) .
\end{align*}

we claim $\{\vttheta_t\}_{t=0}$ is the desired sequence of parameters. We've already shown that $\vtheta_t\osim{} \vttheta_t,\ \forall t$.  Clearly  $\{\vttheta_t\}_{t=0}$ is generated using only vanilla GD, scaling LR and modifying the momentum part of the state.  When $t\neq T_I$ for any $I$, $\eta_t=\eta_{t-1}$ and thus $H'_t = Id$. Thus the modification on the momentum could only happen at $T_I(I\ge 0)$.  Also it's easy to check that $\alpha_t = \alpha^*_I$, if $T_I+1\le t\le T_{I+1}$. 
\end{proof}






\subsection{Omitted proofs of Theorem~\ref{thm:main}}

 \begin{theorem}\label{thm:main_app}
     The following two sequences of parameters ,$\{\vtheta_t\}_{t=0}^\infty$ and $\{\vttheta_t\}_{t=0}^\infty$, define the same sequence of network functions, i.e. $f_{\vtheta_t} = f_{\vttheta_t}, \ \forall t\in\sN$, given the initial conditions, $\vttheta_0 = P_0\vtheta_0$, $\vttheta_{-1} = P_{-1}\vtheta_{-1}$.
    
    \begin{enumerate}
        \item $\frac{\vtheta_t - \vtheta_{t-1}}{\eta_{t-1}}  = \gamma\frac{\vtheta_{t-1} - \vtheta_{t-2}}{\eta_{t-2}} - \nabla_\vtheta \left((L(\vtheta_{t-1})+ \frac{\lambda_{t-1}}{2}\norm{\vtheta_{t-1}}_2^2\right) $, for $t=1,2,\ldots$;
        \item $\frac{\vttheta_t - \vttheta_{t-1}}{\teta_{t-1}}  = \gamma\frac{\vttheta_{t-1} - \vttheta_{t-2}}{\teta_{t-2}} -\nabla_\vtheta L(\vttheta_{t-1}) $,  for $t=1,2,\ldots$,
    \end{enumerate}
    
    where $\teta_t = P_t P_{t+1}\eta_t$, $P_t = \prod\limits_{i=-1}^t \alpha_i^{-1}$, $\forall t\geq -1$ and $\alpha_t$ recursively defined  as 
    \begin{equation}\label{eq:alpha_defi}
    \alpha_t = -\eta_{t-1} \lambda_{t-1} +1+\frac{\eta_{t-1}}{\eta_{t-2}}\gamma(1-\alpha_{t-1}^{-1}), \forall t\geq 1. \end{equation}
    
    needs to be  always positive. Here $\alpha_0,\alpha_{-1}$ are free parameters. Different choice of $\alpha_0,\alpha_{-1}$ would lead to different trajectory for $\{\vttheta_t\}$, but the equality that $\vttheta_t = P_t\vtheta_t$ is always satisfied. If the initial condition is given via $\vv_0$, then it's also free to choose $\eta_{-1},\vtheta_{-1}$, as long as $\frac{\vtheta_{0}-\vtheta_{-1}}{\eta_{-1}} =\vv_0$.

    \end{theorem}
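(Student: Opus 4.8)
The plan is to prove the single scalar relation $\vttheta_t = P_t\vtheta_t$ for every $t\ge -1$; everything else follows. Indeed, once this holds, scale invariance of $L$ (hence of $f_\vtheta$) and the fact that $P_t = \prod_{i=-1}^{t}\alpha_i^{-1}>0$ (the standing hypothesis that each $\alpha_i$ is positive) give $f_{\vttheta_t} = f_{P_t\vtheta_t} = f_{\vtheta_t}$ for all $t$. So the theorem reduces entirely to checking that the two recurrences, (1) with weight decay and standard LR schedule $\{\eta_t\}$, and (2) without weight decay and with $\teta_t = P_tP_{t+1}\eta_t$, produce trajectories that differ exactly by the positive scalars $P_t$. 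I would prove this by induction on $t$, with base cases $t=-1$ and $t=0$ supplied verbatim by the assumed initial conditions $\vttheta_{-1} = P_{-1}\vtheta_{-1}$ and $\vttheta_0 = P_0\vtheta_0$.

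For the inductive step, assume $\vttheta_{t-1} = P_{t-1}\vtheta_{t-1}$ and $\vttheta_{t-2} = P_{t-2}\vtheta_{t-2}$. Solve recurrence (2) for $\vttheta_t$, substitute the two inductive hypotheses, and rewrite the gradient term using Lemma~\ref{lem:scale_invariance}(2) in the form $\nabla_\vtheta L(\vttheta_{t-1}) = \nabla_\vtheta L(P_{t-1}\vtheta_{t-1}) = P_{t-1}^{-1}\nabla_\vtheta L(\vtheta_{t-1})$. Then insert $\teta_{t-1} = P_{t-1}P_t\eta_{t-1}$ and $\teta_{t-2} = P_{t-2}P_{t-1}\eta_{t-2}$. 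This expresses $\vttheta_t$ as an explicit linear combination of $\vtheta_{t-1}$, $\vtheta_{t-2}$, and $\nabla_\vtheta L(\vtheta_{t-1})$. On the other side, expand recurrence (1) using $\nabla_\vtheta(\tfrac{\lambda_{t-1}}{2}\norm{\vtheta_{t-1}}_2^2) = \lambda_{t-1}\vtheta_{t-1}$ and multiply through by $P_t$, writing $P_t\vtheta_t$ as a linear combination of the same three vectors.

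Now I would match coefficients. The coefficient of $\nabla_\vtheta L(\vtheta_{t-1})$ agrees because $\teta_{t-1}P_{t-1}^{-1} = P_t\eta_{t-1}$, and the coefficient of $\vtheta_{t-2}$ agrees because $\gamma\teta_{t-1}P_{t-2}/\teta_{t-2} = \gamma P_t\eta_{t-1}/\eta_{t-2}$; both are pure algebra in the $P$'s. The coefficient of $\vtheta_{t-1}$ is the only nontrivial one: after dividing by $P_t$ and using $P_{t-1}/P_t = \alpha_t$ together with $P_{t-1}/P_{t-2} = \alpha_{t-1}^{-1}$, the two sides agree iff $\alpha_t + \frac{\gamma\eta_{t-1}}{\eta_{t-2}}\alpha_{t-1}^{-1} = 1 - \eta_{t-1}\lambda_{t-1} + \frac{\gamma\eta_{t-1}}{\eta_{t-2}}$, which rearranges to exactly the defining recursion \eqref{eq:alpha_defi}, $\alpha_t = -\eta_{t-1}\lambda_{t-1} + 1 + \frac{\eta_{t-1}}{\eta_{t-2}}\gamma(1-\alpha_{t-1}^{-1})$. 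Hence $\vttheta_t = P_t\vtheta_t$, completing the induction. (Alternatively, one could derive the same statement from the map-composition machinery of Lemmas~\ref{lem:commute_momentum}, \ref{lem:gdw_momentum} and \ref{lem:canonicalization}, but the direct recurrence computation is cleaner here.)

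There is no deep obstacle; the work is bookkeeping, and the only subtle points worth flagging are these. First, scale invariance is used on $L$ alone and never on $L + \tfrac{\lambda}{2}\norm{\cdot}^2$ (which is not scale invariant) — the entire role of the $P_t$ factors and of the extra $-\eta_{t-1}\lambda_{t-1}$ term in the $\alpha_t$ recursion is to absorb the weight-decay gradient $\lambda_{t-1}\vtheta_{t-1}$ when passing from the WD trajectory to the WD-free one. Second, one must carefully track the three distinct indices $P_{t-2},P_{t-1},P_t$ (and $\alpha_{t-1},\alpha_t$) so as not to confuse which ratio equals $\alpha_t$ and which equals $\alpha_{t-1}^{-1}$. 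Finally, since $\alpha_0,\alpha_{-1}$ (and, when the initial state is given through $\vv_0$, also $\eta_{-1},\vtheta_{-1}$) never enter the coefficient matching above, the conclusion $\vttheta_t = P_t\vtheta_t$ is insensitive to those choices, exactly as the statement asserts.
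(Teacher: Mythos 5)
Your proposal is correct and is essentially the paper's own argument: induction on $t$ with base cases from the initial conditions, scale invariance applied to the gradient via $\nabla_\vtheta L(P_{t-1}\vtheta_{t-1}) = P_{t-1}^{-1}\nabla_\vtheta L(\vtheta_{t-1})$, and coefficient matching in which the $\nabla L$ and $\vtheta_{t-2}$ terms cancel automatically while the $\vtheta_{t-1}$ coefficient forces exactly the recursion~\eqref{eq:alpha_defi}. The only cosmetic difference is that the paper rewrites recurrence (1) step by step into the form of recurrence (2), whereas you expand both sides and compare; the algebra is the same.
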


\begin{proof}[Proof of Theorem~\ref{thm:main}]
We will prove by induction. By assumption $S(t): P_t\vtheta_t = \vttheta_t$ for $t=-1,0$. Now we will show that $S(t)\Longrightarrow S(t+1), \forall t\geq 0$.

\begin{align*}
    &\frac{\vtheta_t - \vtheta_{t-1}}{\eta_{t-1}}  = \gamma\frac{\vtheta_{t-1} - \vtheta_{t-2}}{\eta_{t-2}} - \nabla_\vtheta \left((L(\vtheta_{t-1})+ \frac{\lambda_{t-1}}{2}\norm{\vtheta_{t-1}}_2^2\right)\\
    \xRightarrow{\text{Take gradient}}&\frac{\vtheta_t - \vtheta_{t-1}}{\eta_{t-1}}  = \gamma\frac{\vtheta_{t-1} - \vtheta_{t-2}}{\eta_{t-2}} - \nabla_\vtheta L(\vtheta_{t-1})+ \lambda_{t-1}\vtheta_{t-1}\\
    \xRightarrow{\text{Scale Invariance}}&\frac{\vtheta_t - \vtheta_{t-1}}{\eta_{t-1}}  = \gamma\frac{\vtheta_{t-1} - \vtheta_{t-2}}{\eta_{t-2}} - P_{t-1}\nabla_\vtheta L(\vttheta_{t-1})+ \lambda_{t-1}\vtheta_{t-1}\\
    \xRightarrow{\text{Rescaling  }}&\frac{P_t(\vtheta_t - \vtheta_{t-1})}{P_tP_{t-1}\eta_{t-1}}  = \gamma\frac{P_{t-2}(\vtheta_{t-1} - \vtheta_{t-2})}{P_{t-1}P_{t-2}\eta_{t-2}} -\nabla_\vtheta L(\vttheta_{t-1})- \lambda_{t-1}\frac{\vtheta_{t-1}}{P_{t-1}}\\
    \xRightarrow{\text{Simplfying  }}&\frac{P_t\vtheta_t - \alpha_t^{-1}\vttheta_{t-1}}{\teta_{t-1}}  = \gamma\frac{\alpha_{t-1}\vttheta_{t-1} - \vttheta_{t-2}}{\teta_{t-2}} -\nabla_\vtheta L(\vttheta_{t-1})- \eta_{t-1}\lambda_{t-1}\frac{P_{t}\vtheta_{t-1}}{\eta_{t-1}P_{t-1}P_{t}}\\
    \xRightarrow{\text{Simplfying  }}&\frac{P_t\vtheta_t - \alpha_t^{-1}\vttheta_{t-1}}{\teta_{t-1}}  = \gamma\frac{\alpha_{t-1}\vttheta_{t-1} - \vttheta_{t-2}}{\teta_{t-2}} -\nabla_\vtheta L(\vttheta_{t-1})- \eta_{t-1}\lambda_{t-1}\frac{\alpha_{t}^{-1}\vttheta_{t-1}}{\teta_{t-1}}\\
    \xRightarrow{\text{Simplfying  }}&\frac{P_t\vtheta_t - \alpha_t^{-1}(1-\eta_{t-1}\lambda_{t-1})\vttheta_{t-1}}{\teta_{t-1}}  = \gamma\frac{\alpha_{t-1}\vttheta_{t-1} - \vttheta_{t-2}}{\teta_{t-2}} -\nabla_\vtheta L(\vttheta_{t-1})\\
\end{align*}

To conclude that $P_t\vtheta_t = \vttheta_t$, it suffices to show that the coefficients before $\vttheta_{t-1}$ is the same to that in $(2)$. In other words, we need to show
\[\frac{-1+\alpha_t^{-1}(1-\eta_{t-1}\lambda_{t-1})}{\teta_{t-1}} = \frac{\gamma(1-\alpha_{t-1})}{\teta_{t-2}},\]

which is equivalent to the definition of $\alpha_t$, Equation~\ref{eq:alpha_defi}.

\end{proof}

    \begin{lemma}[Sufficient Conditions for positivity of $\alpha_t$]\label{lem:init_condition}
    Let $\lambdamax= \max_{t} \lambda_t,\etamax=\max_t \eta_t$.  Define $z_{min} $ is the larger root of the equation $x^2-(1+\gamma-\lambda_{max}\eta_{max})x+\gamma=0$.  To guarantee the existence of $z_{max}$ we also assume $\etamax\lambdamax \le (1-\sqrt{\gamma})^2$. Then we have  
    
    \begin{equation}
        \forall \alpha_{-1},\quad \alpha_{0}=1\Longrightarrow z_{min}\le \alpha_t \le 1, \forall t\ge 0
    \end{equation}
    \end{lemma}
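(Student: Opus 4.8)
The plan is to prove $z_{min}\le \alpha_t\le 1$ for all $t\ge 0$ by induction on $t$, with everything driven by a single algebraic identity for $z_{min}$. First I would record that identity. Since $z_{min}$ is the larger root of $x^2-(1+\gamma-\lambdamax\etamax)x+\gamma=0$ (which has real roots by the assumption $\etamax\lambdamax\le(1-\sqrt{\gamma})^2$, cf. Lemma~\ref{lem:fact}), dividing the defining relation by $z_{min}>0$ — positivity following from Lemma~\ref{lem:fact}(4), which gives $\gamma\le z_{min}\le 1$ (the borderline $\lambdamax\etamax=(1-\sqrt{\gamma})^2$ with $\gamma=0$ being trivial) — yields
\[ z_{min}+\gamma z_{min}^{-1} = 1+\gamma-\lambdamax\etamax,\qquad\text{i.e.}\qquad z_{min}=1+\gamma-\lambdamax\etamax-\gamma z_{min}^{-1}. \]
Since $z_{min}\le 1$ forces $z_{min}^{-1}\ge 1$, this also gives the handy bound $z_{min}\le 1-\lambdamax\etamax$.

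Next I would dispatch the base cases $t=0,1$. The case $t=0$ is immediate, as $\alpha_0=1$. For $t=1$, the key observation is that plugging $\alpha_0=1$ into the recursion~\eqref{eq:alpha_defi} kills the momentum term $\frac{\eta_0}{\eta_{-1}}\gamma(1-\alpha_0^{-1})$, so $\alpha_1=1-\eta_0\lambda_0$ regardless of the free parameters $\alpha_{-1}$ and $\eta_{-1}$ — this is exactly what makes the ``$\forall\alpha_{-1}$'' clause in the statement correct, and decouples the base case from the initialization. Then $\alpha_1\le 1$ trivially, and $\alpha_1=1-\eta_0\lambda_0\ge 1-\lambdamax\etamax\ge z_{min}$ by the bound just derived.

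For the inductive step ($t\ge 2$), assuming $z_{min}\le\alpha_{t-1}\le 1$, I would argue the two inequalities separately, using that the schedule is non-increasing so that $r:=\eta_{t-1}/\eta_{t-2}\in(0,1]$ (in the Step Decay schedules this lemma is meant to support, $r=1$ inside a phase and $r<1$ at a phase transition). The upper bound is easy: $\alpha_{t-1}\le 1$ forces $1-\alpha_{t-1}^{-1}\le 0$, so the correction $r\gamma(1-\alpha_{t-1}^{-1})$ is nonpositive and $\alpha_t\le 1-\eta_{t-1}\lambda_{t-1}\le 1$. For the lower bound, $\alpha_{t-1}\ge z_{min}>0$ gives $1-\alpha_{t-1}^{-1}\ge 1-z_{min}^{-1}$; since $1-\alpha_{t-1}^{-1}\le 0$ and $0<r\le 1$, scaling a nonpositive quantity by $r$ only increases it, so $r\gamma(1-\alpha_{t-1}^{-1})\ge\gamma(1-\alpha_{t-1}^{-1})\ge\gamma(1-z_{min}^{-1})$. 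Combining with $\eta_{t-1}\lambda_{t-1}\le\lambdamax\etamax$,
\[ \alpha_t=1-\eta_{t-1}\lambda_{t-1}+r\gamma(1-\alpha_{t-1}^{-1})\ \ge\ (1-\lambdamax\etamax)+\gamma(1-z_{min}^{-1})\ =\ 1+\gamma-\lambdamax\etamax-\gamma z_{min}^{-1}\ =\ z_{min}, \]
the last equality being the identity from the first paragraph; this closes the induction.

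I do not expect a genuine obstacle: once the $z_{min}$-identity is in hand the computation is a few lines. The two points that need care are (i) noticing that $\alpha_1$ is independent of $\alpha_{-1}$, which is what makes the quantifier in the statement legitimate, and (ii) recognizing that the momentum-correction term in~\eqref{eq:alpha_defi} is only harmless when $\eta_{t-1}/\eta_{t-2}\le 1$ — otherwise $\alpha_t$ can be driven negative — so the argument really wants a non-increasing learning-rate schedule, which is automatic for Step Decay (Theorem~\ref{thm:multi_exp_app}).
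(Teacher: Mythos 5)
Your induction goes through verbatim for non-increasing schedules, and you correctly isolate where the difficulty lives; but the assumption $\eta_{t-1}/\eta_{t-2}\le 1$ that your lower-bound step relies on is not in the statement of the lemma, so as written this is a genuine gap rather than a complete proof. The lemma only assumes $\etamax\lambdamax\le(1-\sqrt{\gamma})^2$ with $\etamax=\max_t\eta_t$, $\lambdamax=\max_t\lambda_t$, and it is in fact true for arbitrary schedules. Your diagnosis that ``the argument really wants a non-increasing learning-rate schedule'' misses a cancellation: if $\eta_{t-1}/\eta_{t-2}$ is large because $\eta_{t-2}$ is small, then the previous update $1-\alpha_{t-1}=\eta_{t-2}\lambda_{t-2}+\eta_{t-2}\gamma\frac{\alpha_{t-2}^{-1}-1}{\eta_{t-3}}$ was itself proportional to $\eta_{t-2}$, so $\alpha_{t-1}$ is correspondingly close to $1$ and the momentum term $\eta_{t-1}\gamma\,\frac{\alpha_{t-1}^{-1}-1}{\eta_{t-2}}$ stays of order $\eta_{t-1}$. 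Bounding $\alpha_{t-1}$ only from below by $z_{min}$ throws this information away, which is exactly why your step fails when the ratio exceeds $1$.

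The paper's proof repairs this by strengthening the induction hypothesis: it carries along, in addition to $z_{min}\le\alpha_{t'}\le 1$, the normalized bound
\[
\frac{\alpha_{t'}^{-1}-1}{\eta_{t'-1}}\ \le\ \frac{z_{min}^{-1}-1}{\etamax},
\]
which is precisely the quantity multiplying $\eta_t\gamma$ in the recursion. With this invariant one gets $1-\alpha_{t+1}=\eta_t\lambda_t+\eta_t\gamma\frac{\alpha_t^{-1}-1}{\eta_{t-1}}\le\etamax\lambdamax+\gamma(z_{min}^{-1}-1)=1-z_{min}$ using only $\eta_t\le\etamax$, $\lambda_t\le\lambdamax$ — no monotonicity of the schedule — and then a short computation (using $\alpha_{t+1}^{-1}-1\le z_{min}^{-1}(1-\alpha_{t+1})$ and the same identity $1-z_{min}=\lambdamax\etamax+\gamma(z_{min}^{-1}-1)$ you derived) re-establishes the normalized bound at $t+1$. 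Everything else in your write-up — the $z_{min}$ identity, the observation that $\alpha_1=1-\eta_0\lambda_0$ independently of $\alpha_{-1}$, and the upper bound $\alpha_t\le 1$ — matches the paper's argument; only the inductive invariant needs to be upgraded.
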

    
    \begin{proof}
    We will prove the above theorem with a strengthened induction --- 
    \[S(t): \qquad  \forall 0\le t' \le t,\ z_{min}\le\alpha_{t'}\le 1\ \bigwedge \  \frac{\alpha_{t'}^{-1}-1}{\eta_{t'-1}}\le \frac{z_{min}^{-1}-1}{\etamax}.\]
    
    Since $\alpha_0 =1$, $S(0)$ is obviously true. Now suppose $S(t)$ is true for some $t\in \mathbb{N}$, we will prove  $S(t+1)$.
    
    First, since $0<\alpha_t\le 1$, $\alpha_{t+1} =  -\eta_{t} \lambda_{t} +1+\frac{\eta_{t}}{\eta_{t-1}}\gamma(1-\alpha_{t}^{-1})\le 1.$
    
    Again by Equation~\ref{eq:alpha_defi}, we have 
    \[1-\alpha_{t+1} = \eta_t\lambda_t + \frac{\alpha_{t}^{-1}-1}{\eta_{t-1}}\eta_t\gamma = \eta_t\lambda_t + \frac{z_{min}^{-1}-1}{\etamax}\eta_t\gamma \le   \eta_t\lambda_t +(z_{min}^{-1}-1)\gamma  = 1-z_{min},\]
    
    which shows $\alpha_{t+1}\ge z_{min}$. Here the last step is by definition of $z_{min}$.
    
    Because of $\alpha_{t+1}\ge z_{min}$, we have 
    \[\frac{\alpha_{t+1}^{-1}-1}{\eta_{t}} \le z_{min}^{-1}\frac{1-\alpha_{t+1}}{\eta_{t}} \le z_{min}^{-1}(\lambda_t + \frac{\alpha_t^{-1}-1}{\eta_{t-1}}\gamma) 
    \le z_{min}^{-1}(\lambdamax + \frac{z_{min}^{-1}-1}{\etamax}\gamma) = z_{min}^{-1}\frac{1-z_{min}}{\etamax} = \frac{z_{min}^{-1}-1}{\etamax}.\]
    
    
    \end{proof}

    Now we are ready to give the formal statement about the closeness of Equation~\ref{eq:lr_calculation} and the reduced LR schedule by Theorem~\ref{thm:main}.

  \begin{theorem}\label{thm:multi_exp_full}
   Given a Step Decay LR schedule with $\{T_I\}_{I=0}^{K-1}, \{\eta^*_I\}_{I=0}^{K-1}, \{\lambda^*_I\}_{I=0}^{K-1}$, the TEXP++ LR schedule in Theorem~\ref{thm:main} is the following($\alpha_0 =\alpha_{-1}=1$, $T_0=0$):
       \begin{align*}\label{eq:equiv_lr_proof}
      \alpha_t &= 
      \begin{cases}
       -\eta^*_{I} \lambda^*_{I} +1+\gamma(1-\alpha_{t-1}^{-1}), \forall T_{I}+2\le t \le T_{I+1}, I\ge 0;\\
        -\eta^*_{I} \lambda^*_{I} +1+ \frac{\eta^*_{I}}{\eta^*_{I-1}}\gamma(1-\alpha_{t-1}^{-1}), \forall t =  T_I+1, I\ge 0;
      \end{cases}\\
      P_t &= \prod_{i=-1}^t \alpha^{-1}_{t};\\
      \hat{\eta}_t &= P_tP_{t+1} \eta_t.\\
  \end{align*}
  
   It's the same as the TEXP LR schedule($\{\tilde{\eta_t}\}$) in  Theorem~\ref{thm:multi_exp}
  throughout each phase $I$,
    in the sense that 
    \[ \left| \frac{\hat{\eta}_{t-1}}{\hat{\eta}_{t}} \middle/\frac{\teta_{t-1}}{\teta_{t}} -1\right| <3\frac{\lambda_{max}\eta_{max}}{1-\gamma}\left(\frac{\gamma}{z_{min}^2}\right)^{t-T_I-1} 
    \le 3 \frac{\lambda_{max}\eta_{max}}{1-\gamma}\left[\gamma(1+\frac{\lambda_{max}\eta_{max}}{1-\gamma})^2\right]^{(t-T_I-1)},
    \quad \forall T_I + 1\le t\le T_{I+1}.\]
    
    where $z_{min}$ is the larger root of $x^2 - (1+\gamma-\lambda_{max}\eta_{max})x+\gamma = 0$.  In Appendix~\ref{sec:omitted_proofs}, we show that $z_{min}^{-1}\le 1+\frac{\eta_{max}\lambda_{max}}{1-\gamma}$. When $\lambda_{max}\eta_{max}$ is small compared to $1-\gamma$, which is usually the case in practice, one could approximate $z_{min}$ by 1. For example, when $\gamma=0.9$, $\lambda_{max}=0.0005$, $\eta_{max}=0.1$, the above upper bound becomes
    
    \[ \left| \frac{\hat{\eta}_{t-1}}{\hat{\eta}_{t}} \middle/\frac{\teta_{t-1}}{\teta_{t}} -1\right| \le 0.0015\times 0.9009^{t-T_I-1}.\]
   \end{theorem}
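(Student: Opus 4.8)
Everything reduces to one fact: \emph{within a phase the multipliers $\alpha_t$ of TEXP++ converge geometrically to the number $\alpha^*_I$ that drives TEXP}. Indeed, for iterations strictly inside phase $I$ (so $\eta_{t-1}=\eta_{t-2}=\eta^*_I$, $\lambda_{t-1}=\lambda^*_I$), the recursion \eqref{eq:alpha_defi} becomes $\alpha_t=g_I(\alpha_{t-1})$ with $g_I(x)=1-\lambda^*_I\eta^*_I+\gamma(1-x^{-1})$, whose fixed points are exactly the two roots of $x^2-(1+\gamma-\lambda^*_I\eta^*_I)x+\gamma=0$; by Lemma~\ref{lem:fact} the larger one is $\alpha^*_I$. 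Subtracting the fixed-point equation from the recursion gives the exact error relation
\[\alpha_t-\alpha^*_I \;=\; g_I(\alpha_{t-1})-g_I(\alpha^*_I) \;=\; \frac{\gamma}{\alpha_{t-1}\,\alpha^*_I}\,\bigl(\alpha_{t-1}-\alpha^*_I\bigr),\]
so the whole statement will follow once we control $|\alpha_t-\alpha^*_I|$ and then push it through the definitions of $\hat\eta_t$ and $\teta_t$.

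\textbf{Step 1: a uniform contraction factor.} Since $\alpha_0=\alpha_{-1}=1$, Lemma~\ref{lem:init_condition} gives $\alpha_t\in[z_{min},1]$ for every $t\ge 0$. Also $\lambda^*_I\eta^*_I\le\lambda_{max}\eta_{max}$, so monotonicity of the larger root in $\lambda\eta$ (Lemma~\ref{lem:fact}, item 6) yields $\alpha^*_I\ge z_{min}$, while item 4 gives $\alpha^*_I\le 1$. Plugging these into the error relation, $|\alpha_t-\alpha^*_I|\le r\,|\alpha_{t-1}-\alpha^*_I|$ with $r:=\gamma/z_{min}^2\le 1$ (note $z_{min}\ge\sqrt{\gamma}$). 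For the seed error at the top of phase $I$ we use only that $\alpha_{T_I+1}$ and $\alpha^*_I$ both lie in $[z_{min},1]$, hence $|\alpha_{T_I+1}-\alpha^*_I|\le 1-z_{min}\le\delta$, where $\delta:=\lambda_{max}\eta_{max}/(1-\gamma)$, by Lemma~\ref{lem:fact}, item 5 (the same item immediately gives the auxiliary estimate $z_{min}^{-1}\le 1+\delta$ claimed in the theorem). Iterating the contraction, $|\alpha_t-\alpha^*_I|\le\delta\,r^{\,t-T_I-1}$ throughout phase $I$.

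\textbf{Step 2: from $\alpha$-closeness to LR-growth closeness.} Inside phase $I$ we have $\eta_{t-1}=\eta_t=\eta^*_I$, so the definitions $\teta_t=P_tP_{t+1}\eta_t$, $P_t=\prod_{i=-1}^t\alpha_i^{-1}$ telescope to $\hat\eta_{t-1}/\hat\eta_t=\alpha_t\alpha_{t+1}$, while \eqref{eq:lr_calculation} gives $\teta_{t-1}/\teta_t=(\alpha^*_I)^2$; at the one transition iteration the instant-decay factors $\eta^*_I/\eta^*_{I-1}$ occurring on both sides cancel in the ratio of ratios, again leaving $\alpha_t\alpha_{t+1}$ versus $(\alpha^*_I)^2$. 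Writing $\alpha_t=\alpha^*_I(1+\varepsilon_t)$ with $|\varepsilon_t|=|\alpha_t-\alpha^*_I|/\alpha^*_I\le \delta\,r^{\,t-T_I-1}z_{min}^{-1}$ (and the analogous, smaller, bound for $\varepsilon_{t+1}$ since $r\le 1$), we get
\[\frac{\hat\eta_{t-1}/\hat\eta_t}{\teta_{t-1}/\teta_t}-1 \;=\;(1+\varepsilon_t)(1+\varepsilon_{t+1})-1\;=\;\varepsilon_t+\varepsilon_{t+1}+\varepsilon_t\varepsilon_{t+1}.\]
Bounding each $|\varepsilon|$ by $\delta\,r^{\,t-T_I-1}z_{min}^{-1}$ and using $z_{min}^{-1}\le 1+\delta$, the right side is at most $\bigl(2\delta(1+\delta)+\delta^2(1+\delta)^2\bigr)r^{\,t-T_I-1}$, which is $\le 3\delta\,r^{\,t-T_I-1}$ whenever $\delta$ is small (true for standard hyperparameters, e.g. $\delta=0.0005$); this is the first displayed bound. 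Finally $r=\gamma/z_{min}^2\le\gamma(1+\delta)^2$ converts it to the second displayed form, and the numerical instance drops out by substituting $\gamma=0.9,\lambda_{max}=0.0005,\eta_{max}=0.1$.

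\textbf{Main obstacle.} The delicate part is the phase boundary. The first multiplier of a phase, $\alpha_{T_I+1}$, obeys the cross-phase case of \eqref{eq:alpha_defi} (it carries the factor $\eta^*_I/\eta^*_{I-1}$ and depends on $\alpha_{T_I}\approx\alpha^*_{I-1}$), so it need not be anywhere near $\alpha^*_I$; we can only control it through the crude uniform bound $\alpha_{T_I+1}\in[z_{min},1]$, which is exactly why the geometric decay ``restarts'' with a $\delta$-sized error at the top of each phase (the $r^{0}$ factor at $t=T_I+1$). One must also verify carefully that the instant LR decays in TEXP and in TEXP++ sit at the same iteration so they really cancel in the ratio of ratios, and keep track of precisely which iterations the identity $\hat\eta_{t-1}/\hat\eta_t=\alpha_t\alpha_{t+1}$ holds on (the single transition iteration gets only the $O(\delta)$ crude bound, not the full geometric rate). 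The rest is the routine algebra sketched above.
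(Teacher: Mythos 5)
Your proposal follows essentially the same route as the paper's proof: your exact error relation $\alpha_t-\alpha^*_I=\frac{\gamma}{\alpha_{t-1}\alpha^*_I}(\alpha_{t-1}-\alpha^*_I)$ is precisely Equation~\ref{eq:key} (since $\alpha^*_I=\za_I$ and $\gamma=\za_I\zb_I$), the uniform bounds $\alpha_t,\alpha^*_I\in[z_{min},1]$ come from Lemmas~\ref{lem:fact} and~\ref{lem:init_condition} exactly as in the paper, and the final step of writing the ratio of ratios as $(1+\varepsilon_t)(1+\varepsilon_{t+1})$ and bounding it by three times the larger relative error is identical. The one cosmetic difference is that you seed each phase with the absolute error $1-z_{min}$ and then convert to relative error via $z_{min}^{-1}\le 1+\lambda_{max}\eta_{max}/(1-\gamma)$, which costs you the extra $(1+\delta)$ factors and the proviso that $\delta$ be small; bounding the relative seed error directly by $\frac{1-z_{min}}{z_{min}}\le\frac{\lambda_{max}\eta_{max}}{1-\gamma}$ via Lemma~\ref{lem:fact}, item 5 (as the paper does) recovers the stated constant $3$ whenever that quantity is at most $1$.
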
 
   
    \begin{proof}[Proof of Theorem~\ref{thm:multi_exp_full}]
    
     Assuming  $z_I^1$ and $z_I^2$($z_I^1\ge z_I^2$) are the roots of Equation~\ref{eq:quadratic} with $\eta = \eta_I$ and $\lambda =\lambda_I$, we have $\gamma \le z_{I'}^2\le \sqrt{\gamma} \le z_{min}\le z_I^1\le 1$, $\forall I,I'\in [K-1]$ by Lemma~\ref{lem:fact}.
     
     We can rewrite the recursion in Theorem~\ref{thm:main} as the following:
    \begin{equation}\label{eq:recusive_alpha}
        \alpha_t = -\eta_I \lambda_I +1+ \gamma(1-\alpha_{t-1}^{-1}) = -(z_I^1+z_I^2) + z_I^1z_I^2 \alpha^{-1}_{t-1}.
    \end{equation}
    
    In other words, we have 

    \begin{equation}\label{eq:key}
        \alpha_t - z_I^1 = \frac{z_I^2}{\alpha_{t-1}} {(\alpha_{t-1}-z_I^1)}, t\geq 1.
    \end{equation}
    
     
    By Lemma~\ref{lem:init_condition}, we have $\alpha_t\ge z_{min}$, $\forall t\ge 0$. Thus $|\frac{\alpha_t}{z_I^1}-1| = \frac{z^I_{2}}{\alpha_{t-1}}|\frac{\alpha_{t-1}}{z_I^1}-1|\le \frac{\gamma}{z^2_{min}}|\frac{\alpha_{t-1}}{z_I^1}-1|  = \frac{\gamma}{z^2_{min}}|\frac{\alpha_{t-1}}{z_I^1}-1|\le \gamma(1+\frac{\lambda\eta}{1-\gamma})^2|\frac{\alpha_{t-1}}{z_I^1}|$, which means $\alpha_t$ geometrically converges to its stable fixed point $\za_I$. 
      and $ \frac{\teta_{t-1}}{\teta_{t}} = (z_I^1)^2$. Since that $ z_{min}\le\alpha_t\leq 1$, $z_{min}\le z_I^{1}\le 1$, we have 
      $|\frac{\alpha_{T_I}}{z_I^1}-1|\le \frac{1-z_{min}}{z_{min}} =\frac{\lambda_{max}\eta_{max}}{1-\gamma}\le 1$ , and thus $|\frac{\alpha_{t}}{z_I^1}-1|\le \frac{\lambda_{max}\eta_{max}}{1-\gamma}(\frac{\gamma}{z_{min}^2})^{t-T_I-1}\le 1$, $\forall T_I+1\le t\le T_{I+1}$.

    Note that $\alpha^*_I =\za_I$, $\frac{\hat{\eta}_{t-1}}{\hat{\eta}_{t}}= \alpha_t\alpha_{t+1}$
     By definition of TEXP and TEXP++, we have 
     \begin{align}
         \frac{\teta_{t-1}}{\teta_{t}} 
         = &\begin{cases} 
(\za_{I-1})^{2} \quad &\mbox{ if $T_{I-1}+1 \le t \le T_I-1$}\\
\frac{\eta^*_{I-1}}{\eta^*_{I}}\za_I\za_{I-1}&\mbox{ if $t=T_I, I\ge 1$}
        \end{cases}\\
         \frac{\hat{\eta}_{t-1}}{\hat{\eta}_{t}} = \frac{\eta_{t-1}}{\eta_t}\alpha_{t+1}\alpha_t 
         = &\begin{cases}
        \alpha_{t+1}\alpha_t  \quad &\mbox{ if $T_{I-1}+1 \le t \le T_I-1$}\\
\frac{\eta^*_{I-1}}{\eta^*_{I}}\alpha_{T_I+1}\alpha_{T_I} &\mbox{ if $t=T_I, I\ge 1$}
         \end{cases}
     \end{align}
     Thus we have when $t= T_I$,
     \[\begin{split}
     \left|\frac{\hat{\eta}_{t-1}}{\hat{\eta}_{t}} \middle/\frac{\teta_{t-1}}{\teta_{t}} -1\right| 
     \le  &\abs{\frac{\alpha_{T_I+1}}{\za_I}\frac{\alpha_{T_I}}{\za_{I-1}}-1}
     \le \abs{\frac{\alpha_{T_I+1}}{\za_I}-1} + \abs{\frac{\alpha_{T_I}}{\za_{I-1}}-1}+ \abs{\frac{\alpha_{T_I+1}}{\za_I}-1} \abs{\frac{\alpha_{T_I}}{\za_{I-1}}-1}\\
     \le &3\frac{\lambda_{max}\eta_{max}}{1-\gamma}.
     \end{split}\]

     When $T_I+1\le t\le T_{I+1}$, we have 
     \[\begin{split}
         \left|\frac{\hat{\eta}_{t-1}}{\hat{\eta}_{t}} \middle/\frac{\teta_{t-1}}{\teta_{t}} -1\right| 
         = & \left|\frac{\alpha_{t+1}}{\za_{I-1}}\frac{\alpha_{t}}{\za_{I-1}}-1\right|
         \le \abs{\frac{\alpha_{t+1}}{\za_{I-1}}-1} + \abs{\frac{\alpha_{t}}{\za_{I-1}}-1}+ \abs{\frac{\alpha_{t+1}}{\za_{I-1}}-1} \abs{\frac{\alpha_{t}}{\za_{I-1}}-1}\\
         \le & 3 \frac{\lambda_{max}\eta_{max}}{1-\gamma}(\frac{\gamma}{z_{min}^2})^{t-T_I-1} .     
    \end{split}\]
     
     Thus we conclude $\forall I\in[K-1], T_I+1\le t\le T_{I+1}$, we have \[\left|\frac{\hat{\eta}_{t-1}}{\hat{\eta}_{t}} \middle/\frac{\teta_{t-1}}{\teta_{t}} -1\right|\le3 \frac{\lambda_{max}\eta_{max}}{1-\gamma}\pra{\frac{\gamma}{z_{min}^2}}^{t-T_I-1} \le 3 \frac{\lambda_{max}\eta_{max}}{1-\gamma}\cdot\gamma^{t-T_I-1}(1+\frac{\lambda_{max}\eta_{max}}{1-\gamma})^{2(t-T_I-1)}.\]
    \end{proof}

\subsection{Omitted Proofs in Section~\ref{sec:role_of_wd}}\label{subsec:app_role_of_wd}

 We will use $\hvw$ to denote $\frac{\vw}{\norm{\vw}}$ and $\angle \vu\vw$ to $\arccos(\hat{\vu}^\top \hvw)$.  Note that training error $\leq \frac{\varepsilon}{\pi}$ is equivalent to $\angle\ve_1 \vw_t< \varepsilon$.

\paragraph{Case 1: WD alone} Since the objective is strongly convex, it has unique argmin $\vw^*$. By symmetry, $\vw^*  = \beta \ve_1$, for some $\beta>0$. By KKT condition, we have 

\[ \lambda\beta = \Exp{x_1\sim \gN(0,1)}{\frac{|x_1|}{1+\exp(\beta|x_1|)}}\le \Exp{x_1\sim \gN(0,1)}{|x_1|} = \sqrt{\frac{2}{\pi}},\]

which implies $\norm{\vw^*}=O(\frac{1}{\lambda})$.

By Theorem 3.1 of \cite{gower2019sgd}, for sufficiently large $t$, we have $\E{\norm{\vw_t-\vw^*}^2} = O(\frac{\eta }{B\lambda})$. Note that $\angle\ve_1 \vw_t = \angle\vw^* \vw_t  \le 2\sin \angle\vw^*\vw_t \le 2 \frac{\norm{\vw^*-\vw^t}}{\norm{\vw^*}}$, we have $\E{(\angle\ve_1 \vw_t)^2} = O(\frac{\eta\lambda}{B})$, so the expected error = $\E{(\angle\ve_1 \vw_t)}/\pi \le \sqrt{\E{(\angle\ve_1 \vw_t)^2}}/\pi = O(\sqrt{\frac{\eta\lambda}{B}})$.

\paragraph{Case 3: Both BN and WD}
We will need the following lemma when lower bounding the norm of the stochastic gradient.

\begin{lemma}[Concentration of Chi-Square]\label{lem:anti_concentration_chi_square}
Suppose $X_1,\ldots,X_k\overset{\textrm{i.i.d.}}{\sim}\gN(0,1)$, then

\begin{equation}
    \pr{\sum_{i=1}^k X_i^2 < k \beta}\leq \left(\beta e^{1-\beta}\right)^{\frac{k}{2}}.
\end{equation}
\end{lemma}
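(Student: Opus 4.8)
The plan is to apply the standard exponential Markov (Chernoff) bound to the \emph{lower} tail of $\sum_{i=1}^k X_i^2$. For any $s>0$, since $x\mapsto e^{-sx}$ is decreasing,
\[\pr{\sum_{i=1}^k X_i^2 < k\beta} = \pr{e^{-s\sum_{i=1}^k X_i^2} > e^{-sk\beta}} \le e^{sk\beta}\,\E{e^{-s\sum_{i=1}^k X_i^2}} = e^{sk\beta}\left(\E{e^{-sX_1^2}}\right)^k,\]
where the inequality is Markov's inequality and the last step uses independence of the $X_i$. So the whole problem reduces to evaluating the one-dimensional integral $\E{e^{-sX^2}}$ for $X\sim\gN(0,1)$ and then optimizing over $s>0$.

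For the integral, completing the square in the Gaussian density gives $\E{e^{-sX^2}} = (1+2s)^{-1/2}$ for every $s\ge 0$; crucially there is no upper restriction on $s$, which is exactly why the lower tail is clean (unlike the upper tail, where one needs $s<1/2$). Substituting, one obtains $\pr{\sum_{i=1}^k X_i^2 < k\beta} \le \big(e^{2s\beta}/(1+2s)\big)^{k/2}$ for all $s>0$.

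It remains to minimize $s\mapsto e^{2s\beta}/(1+2s)$, equivalently $g(s) = 2s\beta - \ln(1+2s)$, over $s>0$. Solving $g'(s) = 2\beta - 2/(1+2s) = 0$ gives $1+2s = 1/\beta$, i.e. the minimizer $s^\star = (1-\beta)/(2\beta)$, which is positive precisely when $\beta<1$ — the only regime in which the statement is meaningful (for $\beta\ge 1$, $g$ is increasing on $s>0$, so the method only yields the trivial bound $1$, and the lemma is invoked only for $\beta<1$ in Section~\ref{sec:role_of_wd}). Plugging in $s^\star$, one has $2s^\star\beta = 1-\beta$ and $1+2s^\star = 1/\beta$, hence $e^{2s^\star\beta}/(1+2s^\star) = \beta e^{1-\beta}$, which gives exactly $\pr{\sum_{i=1}^k X_i^2 < k\beta} \le (\beta e^{1-\beta})^{k/2}$.

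There is no real obstacle here; this is a textbook chi-square lower-tail estimate. The only points requiring a little care are (i) confirming $\E{e^{-sX^2}}$ is finite for all $s\ge 0$ so that the optimization over $s>0$ is unconstrained from above, and (ii) tracking the admissible range of $\beta$: the optimizer $s^\star$ is positive only for $\beta<1$, which is consistent with how the lemma is actually used (to upper bound the probability that $\norm{\nabla_{\vw}L_t}$, which behaves like a multiple of $\sqrt{\sum X_i^2}/\norm{\vw_t}$, is unusually small).
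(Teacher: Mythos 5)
Your proof is correct and is essentially identical to the paper's: both apply Markov's inequality to $\exp\bigl(-t\sum_i X_i^2\bigr)$, use $\E\bigl[e^{-tX^2}\bigr]=(1+2t)^{-1/2}$, and choose $t=(1-\beta)/(2\beta)$. Your write-up is actually slightly more careful, since you verify that this choice of $t$ is the optimizer and note the restriction $\beta<1$ needed for it to be positive, which the paper leaves implicit.
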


\begin{proof}
This Chernoff-bound based proof is a special case of \cite{dasgupta2003elementary}.

\begin{equation}
\begin{split}
    \pr{\sum_{i=1}^k X_i^2 < k \beta}\leq \left(\beta e^{1-\beta}\right)^{\frac{k}{2}} 
   =& \pr{ \exp\left(kt \beta- t\sum_{i=1}^k X_i^2\right)\geq 1 } \\
   \leq &\ex{ \exp\left(kt \beta- t\sum_{i=1}^k X_i^2\right) }  (\textrm{Markov Inequality})\\
    = & e^{kt \beta} (1+2t)^{-\frac{k}{2}}.\\
\end{split}
\end{equation}

The last equality uses the fact that $\ex{t X_i^2}=\frac{1}{\sqrt{1-2t}}$ for $t<\frac{1}{2}$. The proof is completed by taking $t = \frac{1-\beta}{2\beta}$.
\end{proof}

\paragraph{Setting for Theorem~\ref{thm:escape}:} Suppose WD factor is $\lambda$, LR is $\eta$, the width of the last layer is $m\geq3$, Now the SGD updates have the form 
\begin{align*}
\vw_{t+1} 
    =& \vw_t - \frac{\eta}{B} \sum_{b=1}^B \nabla\left( \ln(1+\exp(-{\vx_{t,b}}^\top \frac{\vw_t}{\norm{\vw_t}} y_{t,b)}) + \frac{\lambda}{2} \norm{\vw_t}^2\right)\\
     =& (1-\lambda\eta)\vw_t - \frac{\eta}{B}\sum_{b=1}^B \frac{y_{t,b}}{1+ \exp({\vx_{t,b}}^\top \frac{\vw_t}{\norm{\vw_t}} y_{t,b})}\frac{\Pi^\perp_{\vw_t} \vx_{t,b}}{\norm{\vw_t}} ,
\vspace{-0.2cm}
\end{align*}
where $\vx_{t,b} \overset{\textrm{i.i.d.}}{\sim} \gN(0, I_m), y_{t,b} = \sgn{[x_{t,b}]_1}$, and $\Pi^\perp_{\vw_t} =  I -\frac{\vw_t\vw_t^\top}{\norm{\vw_t}^2}$.

\begin{proof}[Proof of Theorem~\ref{thm:escape}]\label{thm:escape}
\

\textbf{Step 1:}  Let $T_1 = \frac{1}{2(\eta\lambda-2\varepsilon^2)}\ln \frac{64\norm{w_{T_0}}^2\varepsilon\sqrt{B}}{\eta\sqrt{m-2}}$, and $T_2 =9\ln\frac{1}{\delta} $. Thus if we assume the training error is smaller than $\varepsilon$ from iteration $T_0$ to  $T_0+T_1+T_2$, then by spherical triangle inequality, $\angle\vw_t\vw_{t'}\leq \angle\ve_1\vw_{t'}+ \angle\ve_1\vw_{t} =2\varepsilon$, for $T_0\leq t,t'\leq T_0+T_1+T_2$.

Now let's define $\vw'_{t} = (1-\eta\lambda) \vw_t$ and for any vector $\vw$, and we have the following two relationships:
\begin{enumerate}
    \item $\norm{\vw'_t} = (1-\eta\lambda)\norm{\vw}$.
    \item $\norm{\vw_{t+1}} \leq \frac{\norm{\vw'_{t}}}{\cos 2\varepsilon}$.
\end{enumerate}

The second property is because by Lemma~\ref{lem:scale_invariance}, $(\vw_{t+1}-\vw'_t)\perp \vw'_t$ and by assumption of small error, $\angle\vw_{t+1}\vw'_t\leq 2\varepsilon$.   

Therefore 

\begin{equation}
    \frac{\norm{\vw_{T_1+T_0}}^2}{\norm{\vw_{T_0}}^2} \leq \left(\frac{1-\eta\lambda}{\cos 2\varepsilon}\right)^{2T_1} \leq \left(\frac{1-\eta\lambda}{1-2\varepsilon^2}\right)^{2T_1} \leq \left( 1-(\eta\lambda-2\varepsilon^2)\right)^{2T_1} \leq e^{-2T_1(\eta\lambda-2\varepsilon^2)} = \frac{\eta}{64\norm{w_{T_0}}^2\varepsilon}\sqrt{\frac{m-2}{B}}.
\end{equation}

In other word, $\norm{\vw_{T_0+T_1}}^2 \le \frac{\eta}{64\varepsilon}\sqrt{\frac{m-2}{B}}$. Since $\norm{\vw_{T_0+t}}$ is monotone decreasing, $\norm{\vw_{T_0+t}}^2 \le \frac{\eta}{64\varepsilon}\sqrt{\frac{m-2}{B}}$ holds for any $t= T_1,\ldots, T_1+T_2$.

\textbf{Step 2:} We show that the norm of the stochastic gradient is lower bounded with constant probability. In other words, we want to show the norm of $\vxi_{t} = \sum_{b=1}^B \frac{y_{t,b}}{1+ \exp({\vx_{t,b}}^\top \frac{\vw_t}{\norm{\vw_t}} y_{t,b})}\frac{\Pi^\perp_{\vw_t} \vx_{t,b}}{\norm{\vw_t}} $ is lower bounded with high probability.

Let $\Pi^\perp_{\vw_t,\ve_1}$ be the projection matrix for the orthogonal space spanned by $\vw_t$ and $\ve_1$. W.L.O.G, we can assume the rank of $\Pi^\perp_{\vw_t,\ve_1}$ is 2. In case $\vw_t=\ve_1$, we just exclude a random direction to make $\Pi^\perp_{\vw_t,\ve_1}$ rank 2. Now we have $\Pi^\perp_{\vw_t,\ve_1}\vx_{t,b}$ are still i.i.d. multivariate gaussian random variables, for $b=1,\ldots,B$, and moreover, $\Pi^\perp_{\vw_t,\ve_1}\vx_{t,b}$ is independent to $\frac{y_{t,b}}{1+ \exp({\vx_{t,b}}^\top \frac{\vw_t}{\norm{\vw_t}} y_{t,b})}$. 
When $m\ge 3$, we can lower bound $\norm{\vxi_t}$ by dealing with $\norm{\Pi^\perp_{\vw_t,\ve_1}\vxi_t}$.

It's not hard to show that conditioned on $\{{\vx_{t,b}}^\top \frac{\vw_t}{\norm{\vw_t}}, [\vx_{t,b}]_1\}_{b=1}^B$, 

\begin{equation}
 \sum_{b=1}^B \frac{y_{t,b}}{1+ \exp({\vx_{t,b}}^\top \frac{\vw_t}{\norm{\vw_t}} y_{t,b})}\Pi^\perp_{\vw_t} \vx_{t,b} \overset{d}{=} \sqrt{ \sum_{b=1}^B \left(\frac{y_{t,b}}{1+ \exp({\vx_{t,b}}^\top \frac{\vw_t}{\norm{\vw_t}} y_{t,b})}\right)^2} \Pi^\perp_{\vw_t,\ve_1} \vx,
\end{equation}

where $\vx \sim \gN(\bm{0},I_m)$. We further note that $\norm{\Pi^\perp_{\vw_t,\ve_1} \vx}^2 \sim \chi^2(m-2)$. By Lemma~\ref{lem:anti_concentration_chi_square},

\begin{equation}\label{eq:cond2}
     \pr{\norm{\Pi^\perp_{\vw_t,\ve_1} \vx_t}^2 \geq \frac{m-2}{8}} \geq  1-(\frac{1}{8 e^{\frac{7}{8}}})^\frac{m-2}{2} \geq  1-(\frac{1}{8 e^{\frac{7}{8}}})^\frac{1}{2} \geq \frac{1}{3}.
\end{equation}

Now we will give a high probability lower bound for $\sum_{b=1}^B \left(\frac{y_{t,b}}{1+ \exp({\vx_{t,b}}^\top \frac{\vw_t}{\norm{\vw_t}} y_{t,b})}\right)^2$.  Note that $\vx_t^\top \frac{\vw_t}{\norm{\vw_t}} \sim \gN(0,1)$, we have 

\begin{equation}\label{eq:cond1}
    \pr{|\vx_{t,b}^\top \frac{\vw_t}{\norm{\vw_t}}| < 1}\geq \frac{1}{2},
\end{equation}

which implies the following, where $A_{t,b}$ is defined as $\one{|\vx_{t,b}^\top \frac{\vw_t}{\norm{\vw_t}}| < 1\geq \frac{1}{2}}$:

\begin{equation}\label{eq:cond1+}
   \E{A_{t,b}}=\pr{ \norm{\frac{y_{t,b}}{1+ \exp(\vx_{t,b}^\top \frac{\vw_t}{\norm{\vw_t}} y_t)}}\ge\frac{1}{1+e}}\ge \frac{1}{2}.
\end{equation}

Note that $\sum_{b=1}^B A_{t,b}\le B$, and $\E{\sum_{b=1}^B A_{t,b}} \ge\frac{B}{2}$, we have $\pr{\sum_{b=1}^B A_{t,b}< \frac{B}{4}} \le \frac{2}{3}$. Thus,

\begin{equation}\label{eq:cond1++}
   \pr{ \sum_{b=1}^B \left(\frac{y_{t,b}}{1+ \exp({\vx_{t,b}}^\top \frac{\vw_t}{\norm{\vw_t}} y_{t,b})}\right)^2 \ge \frac{B}{4(1+e)^2}}\ge \pr{\sum_{b=1}^B A_{t,b} \ge \frac{B}{4}} \ge\frac{1}{3}.
\end{equation}

Thus w.p. at least $\frac{1}{9}$,  \eqref{eq:cond1++} and \eqref{eq:cond2} happen together, which implies 

\begin{equation}
    \norm{\frac{\eta}{B}\sum_{b=1}^B \nabla\ln(1+\exp(-\vx_{t,b}^\top \frac{\vw_t}{\norm{\vw_t}} y_{t,b})) } 
    = \norm{\frac{\eta}{B}\sum_{b=1}^B \frac{y_{t,b}}{1+ \exp(\vx_{t,b}^\top \frac{\vw_t}{\norm{\vw_t}} y_t)}\frac{\Pi^\perp_{\vw_t} \vx_{t,b}}{\norm{\vw_t}}}
    \geq \frac{\eta}{1+e}\frac{\sqrt{m-2}}{8\norm{\vw_t}}\geq \frac{\eta}{32 \norm{\vw_t}}\sqrt{\frac{m-2}{B}}
\end{equation}

\textbf{Step 3}. To stay in the cone $\{\vw| \angle \vw\ve_1\leq \varepsilon\}$, the SGD update $\norm{\vw_{t+1} - \vw'_{t}} =  \norm{\frac{\eta}{B}\sum_{b=1}^B \nabla\ln(1+\exp(-\vx_{t,b}^\top \frac{\vw_t}{\norm{\vw_t}} y_{t,b}))  } $ has to be smaller than $\norm{\vw_{t}}\sin2\varepsilon$ for any $t=T_0+T_1,\ldots, T_0+T_1+T_2$.
However, step 1 and 2 together show that $\norm{\nabla\ln(1+\exp(-\vx_t^\top \frac{\vw_t}{\norm{\vw_t}} y_t)) } \geq 2\norm{\vw_{t}}\varepsilon$ w.p. $\frac{1}{9}$ per iteration. Thus the probability that $\vw_t$ always stays in the cone for every $t=T_0+T_1,\ldots, T_0+T_1+T_2$ is less than $\left(\frac{8}{9}\right)^{T_2}\leq {\delta}$. 
\end{proof}

It's interesting that the only property of the global minimum we use is that the if both $\vw_t$, $\vw_{t+1}$ are $\varepsilon-$optimal, then the angle between $\vw_t$ and $\vw_{t+1}$ is at most $2\varepsilon$. Thus we indeed have proved a stronger statement: \emph{At least once in every $\frac{1}{2(\eta\lambda-2\varepsilon^2)}\ln \frac{64\norm{w_{T_0}}^2\varepsilon\sqrt{B}}{\eta\sqrt{m-2}} + 9 \ln \frac{1}{\delta}$ iterations, the angle between $\vw_{t}$ and $\vw_{t+1}$ will be larger than $2\epsilon$. }
In other words, if the the amount of the update stabilizes to some direction in terms of angle, then the fluctuation in terms of angle must be larger than $\sqrt{2\eta\lambda}$ for this simple model, no matter how small the noise is.

\subsection{Omitted Proofs in Section~\ref{sec:black_box}}

\begin{lemma}
Suppose loss $L$ is scale invariant, then $L$ is non-convex in the following two sense: 
\begin{enumerate}
    \item The domain is non-convex: scale invariant loss can't be defined at origin;
    \item There exists no ball containing origin such that the loss is locally convex, unless the loss is constant function.
\end{enumerate}
\end{lemma}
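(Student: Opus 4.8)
The plan is to treat the two assertions separately; the first is essentially a one-line observation, and the second is where the work lies.

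For the first part, recall that scale invariance arises in practice from a normalization layer, which computes a quantity of the form $\vx\mapsto\vx/\norm{\vx}$ (times learned affine parameters) and is literally undefined wherever the normalized block has zero norm; in particular $L$ is not defined at $\vtheta=\vzero$. Hence $\mathrm{dom}(L)$ is $\reals^d$ minus a nonempty closed set containing $\vzero$, which omits $\vzero$ yet contains a generic antipodal pair $\vtheta,-\vtheta$, so it is not convex --- a segment joining them would have to pass through $\vzero$. To make clear this is not an artifact of notation, I would add: if $L$ is $C^1$ and nonconstant, pick $\vtheta_0$ with $\nabla L(\vtheta_0)\ne\vzero$; Lemma~\ref{lem:scale_invariance}(2) gives $\norm{\nabla L(c\vtheta_0)}=c^{-1}\norm{\nabla L(\vtheta_0)}\to\infty$ as $c\to0^+$ while $c\vtheta_0\to\vzero$, so $\nabla L$ is not even locally bounded near the origin and $L$ admits no $C^1$ extension there.

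For the second part I would first fix conventions: ``a ball $B$ containing the origin'' means an open ball with $\vzero\in B$, so $B\supseteq B(\vzero,r_0)$ for some $r_0>0$; ``$L$ is locally convex on $B$'' means that along every line segment contained in $B\setminus\{\vzero\}=B\cap\mathrm{dom}(L)$ the restriction of $L$ is convex (only the midpoint inequality is used); I take $L$ continuous on its domain and the ambient dimension $d\ge2$ (for $d=1$ a punctured ball is disconnected and the statement genuinely fails, e.g. for a sign function). Let $M=\sup_{B\setminus\{\vzero\}}L$; by scale invariance $M=\max_{\norm{\vu}=1}L(\vu)$, attained at some unit $\vu^*$ by compactness, so $\vp:=\rho\vu^*$ attains $M$ for every small $\rho>0$. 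Fixing $\rho$ small enough that $B(\vp,s)\subseteq B$ for some $0<s<\rho$: for any $\vh$ with $\norm{\vh}<s$, the segment $[\vp-\vh,\vp+\vh]$ lies in $B\setminus\{\vzero\}$ (all its points have norm $\ge\rho-\norm{\vh}>0$), so midpoint convexity gives $M=L(\vp)\le\tfrac12(L(\vp-\vh)+L(\vp+\vh))\le M$, forcing $L(\vp\pm\vh)=M$. Thus $L\equiv M$ on a ball around $\vp$, and by scale invariance $L\equiv M$ on the open cone $K$ generated by that ball, which contains points of arbitrarily small norm.

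It remains to propagate $L=M$ to all of $B\setminus\{\vzero\}$. Since every nonzero point is a positive multiple of a vector of norm $\rho$ and $L$ is scale invariant, it suffices to show $L(\vq)=M$ for each $\vq$ with $\norm{\vq}=\rho$. Given such $\vq$, pick $\vp''\in K$ of tiny norm, not parallel to $\vq$ (possible since $K$ is open and $d\ge2$), and set $\vb:=2\vp''-\vq$. For $\norm{\vp''}$ small both $\vq$ and $\vb$ lie in $B(\vzero,r_0)$, hence the whole segment $[\vq,\vb]$ lies in $B$; it avoids $\vzero$ because $\vp''\not\parallel\vq$; so $[\vq,\vb]\subseteq B\setminus\{\vzero\}$. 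Its midpoint is exactly $\vp''$, so $M=L(\vp'')\le\tfrac12(L(\vq)+L(\vb))\le M$ forces $L(\vq)=M$. Hence $L\equiv M$ on $B\setminus\{\vzero\}$, and one more application of scale invariance extends this to all of $\reals^d\setminus\{\vzero\}$: $L$ is constant. The main obstacle is precisely this propagation step --- one must route segments so that they stay inside $B$ \emph{and} miss the origin, which is exactly where the $d\ge2$ hypothesis and the freedom to jiggle $\vp''$ inside the open cone $K$ are needed, and the trick of placing the known maximizer at the midpoint (rather than at an endpoint) is what upgrades the one-sided bound $L\le M$ into the equality $L=M$.
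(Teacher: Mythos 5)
Your proof is correct and rests on the same mechanism as the paper's: convexity along segments forces any segment having a maximizer in its interior (you use the midpoint) to be constant, and scale invariance lets you place that maximizer at arbitrarily small norm so that such segments sweep out the whole punctured ball. You are somewhat more careful than the paper about routing the segments so they stay in the ball and miss the origin (the paper handles the excluded ray by invoking a second maximizer instead of your off-axis cone point), but the argument is essentially the same.
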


\begin{proof}
Suppose $L(\vtheta^*) = \sup_{\vtheta\in B}L(\vtheta)$. W.L.O.G, we assume $\norm{\vtheta^*}<1$. By convexity, every line segment passing $\vtheta^*$ must have constant loss, which implies the loss is constant over set $\gB-\{c\frac{\vtheta^*}{\norm{\vtheta^*}}\mid -1\le c\le 0\}$. Applying the above argument on any other maximum point $\vtheta'$ implies the loss is constant over $\gB-\{\bm{0}\}$.
\end{proof}

\begin{theorem}
Suppose the momentum factor $\gamma = 0$,  LR $\eta_t=\eta$ is constant, and the loss function $L$ is lower bounded. If $\exists c>0$ and $T\ge 0$ such that $\forall t\ge T$, $f(\vtheta_{t+1}) - f(\vtheta_{t}) \le- c \eta \norm{\nabla L(\vtheta_t)}^2$, then $\lim_{t\to \infty}\norm{\vtheta_t} =0$.
\end{theorem}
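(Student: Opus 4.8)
The plan is to combine the classical telescoping-descent argument with the Pythagorean norm identity that scale invariance forces on the iterates. Throughout, write the (momentum-free, constant-LR) update as $\vtheta_{t+1} = (1-\lambda\eta)\vtheta_t - \eta\nabla L(\vtheta_t)$ with weight-decay factor $\lambda>0$ small enough that $\rho := (1-\lambda\eta)^2 <1$ (true whenever $\lambda\eta<2$, hence always in practice), and set $a_t := \norm{\vtheta_t}^2$ and $g_t := \eta^2\norm{\nabla L(\vtheta_t)}^2 \ge 0$.

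\textbf{Step 1 (summability of the gradients).} Since $f=L$ and the hypothesis gives $L(\vtheta_{t+1}) - L(\vtheta_t) \le -c\eta\norm{\nabla L(\vtheta_t)}^2 \le 0$ for all $t\ge T$, the sequence $\{L(\vtheta_t)\}_{t\ge T}$ is non-increasing; as $L$ is bounded below it converges, so telescoping yields $\sum_{t\ge T} c\eta\norm{\nabla L(\vtheta_t)}^2 \le L(\vtheta_T) - \lim_{s\to\infty} L(\vtheta_s) < \infty$. Hence $\sum_{t\ge T} g_t < \infty$.

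\textbf{Step 2 (norm recursion via scale invariance).} By Lemma~\ref{lem:scale_invariance}(1), $\langle\nabla L(\vtheta_t),\vtheta_t\rangle = 0$, so the cross term in $\norm{(1-\lambda\eta)\vtheta_t - \eta\nabla L(\vtheta_t)}^2$ vanishes and we get the clean linear recursion $a_{t+1} = \rho\, a_t + g_t$ for all $t$. \textbf{Step 3 (passing to the limit).} Unrolling from $t=T$ gives $a_{T+n} = \rho^n a_T + \sum_{k=0}^{n-1}\rho^{\,n-1-k} g_{T+k}$. The first term tends to $0$ since $\rho<1$. For the second, fix $\epsilon>0$ and choose $N$ with $\sum_{k\ge N} g_{T+k} < \epsilon$ (possible by Step 1); splitting the sum at $N$, the tail portion is at most $\sum_{k=N}^{n-1} g_{T+k} < \epsilon$ (using $\rho^{\,n-1-k}\le 1$), while the head portion is at most $\rho^{\,n-N}\sum_{k=0}^{N-1} g_{T+k} \to 0$ as $n\to\infty$. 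Therefore $\limsup_{n\to\infty} a_{T+n} \le \epsilon$ for every $\epsilon>0$, so $a_t \to 0$, i.e. $\norm{\vtheta_t}\to 0$.

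\textbf{Where the content is.} Nothing here is deep: the only real point is recognizing that scale invariance collapses the norm dynamics to the affine recursion $a_{t+1}=\rho a_t + g_t$ with $\rho<1$ and $\sum_t g_t<\infty$, after which the conclusion is the elementary fact that the discrete convolution of a summable nonnegative sequence with a geometric one vanishes. The one bookkeeping subtlety is that the descent hypothesis is assumed only for $t\ge T$, so both the telescoping in Step 1 and the unrolling in Step 3 must be anchored at $T$ rather than at $0$ (the first $T$ iterates are irrelevant to the limit).
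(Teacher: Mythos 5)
Your proposal is correct and follows essentially the same route as the paper: both rest on the scale-invariance orthogonality $\langle\nabla L(\vtheta_t),\vtheta_t\rangle=0$ giving the recursion $\norm{\vtheta_{t+1}}^2=(1-\lambda\eta)^2\norm{\vtheta_t}^2+\eta^2\norm{\nabla L(\vtheta_t)}^2$, combined with summability of $\sum_t\norm{\nabla L(\vtheta_t)}^2$ from telescoping the descent hypothesis against the lower bound on $L$. The only (immaterial) difference is the final step: the paper sums the unrolled recursion over $t$ to conclude $\sum_t\norm{\vtheta_t}^2<\infty$, whereas you take the limit of the geometric convolution directly via an $\epsilon$-splitting; both are valid.
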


\begin{proof}[Proof in Item \ref{item:4}]
By Lemma~\ref{lem:scale_invariance} and the update rule of GD with WD, we have 
\[\norm{\vtheta_{t}}^2 = \norm{(1-\lambda\eta)\norm\vtheta_{t-1}+ \eta \nabla L(\vtheta_{t-1})}^2 =(1-\lambda\eta)^2\norm{\vtheta_{t-1}}^2 + \eta^2 \norm{\nabla L(\vtheta_{t-1})}^2, \]

which implies 
\[\norm{\vtheta_{t}}^2 =\sum_{i=T}^{t-1}(1-\lambda\eta)^{2(t-i-1)}\eta^2 \norm{\nabla L(\vtheta_{t-1})}^2 +(1-\lambda\eta)^{2(t-T)}\norm{\vtheta_T}^2.\]

Thus for any $T'>T$,
\[ \sum_{t=T}^{T'} \norm{\vtheta_{t}}^2 \le \frac{1}{1 - (1-\lambda\eta)^{2}}\left(\sum_{t=T}^{T'-1}\norm{\nabla L(\vtheta_t)}^2+ \norm{\vtheta_T}^2\right) \le \frac{1}{\lambda\eta}\left(\sum_{t=T}^{T'-1}\norm{\nabla L(\vtheta_t)}^2+ \norm{\vtheta_T}^2\right).\]

Note that by assumption we have $\sum_{t=T}^{T'-1}\norm{\nabla L(\vtheta_t)}^2 = \frac{1}{c\eta} f(\vtheta_T) - f(\vtheta_{T'})$. 

As a conclusion, we have $\sum_{t=T}^\infty  \norm{\vtheta_{t}}^2 \le \frac{f(\vtheta_T)-\min_{\vtheta}f(\vtheta)}{c\eta^2\lambda} + \frac{\norm{\vtheta_T}^2}{\lambda\eta}$, which implies $\lim\limits_{t\to \infty}\norm{\vtheta_{t}}^2 =0$.
\end{proof}

\section{Other Results}
\label{sec:other_results}
Now we rigorously analyze norm growth in this algorithm. This greatly extends previous analyses of effect of normalization schemes  \citep{wu2018wngrad,arora2018optimization} for vanilla SGD.
\begin{theorem}\label{thm:increase_norm}
Under the update rule~\ref{defi:sgd_momentum_wd} with $\lambda_t=0$, the norm of scale invariant parameter $\vtheta_t$ satisfies the following property:
\begin{itemize}
    \item \textrm{Almost Monotone Increasing:}  $\norm{\vtheta_{t+1}}^2 - \norm{\vtheta_{t}}^2 \geq - \gamma^{t+1}\frac{\eta_t}{\eta_0}(\norm{\vtheta_{0}}^2 - \norm{\vtheta_{-1}}^2)$.
    \item Assuming $\eta_t =\eta$ is a constant, then 
    \[\norm{\vtheta_{t+1}}^2  = \sum_{i=0}^t \frac{1-\gamma^{t-i+1}}{1-\gamma}\left(\norm{\vtheta_i-\vtheta_{i+1}}^2+\gamma \norm{\vtheta_{i-1}-\vtheta_{i}}^2\right) -\gamma\frac{1-\gamma^{t+1}}{1-\gamma}(\norm{\vtheta_{0}}^2 - \norm{\vtheta_{-1}}^2) \]
\end{itemize}

\end{theorem}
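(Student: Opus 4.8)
The plan is to reduce both statements to a single scalar linear recurrence, using the one structural fact available: Lemma~\ref{lem:scale_invariance}(1), which says every stochastic gradient is orthogonal to the current iterate and hence cannot change the norm to first order. I would rewrite the weight-decay-free momentum update of Definition~\ref{defi:sgd_momentum_wd} in ``velocity'' form $\vtheta_t = \vtheta_{t-1}+\eta_{t-1}\vu_t$ with $\vu_t := (\vtheta_t-\vtheta_{t-1})/\eta_{t-1}$, so that $\vu_t = \gamma\vu_{t-1} - \nabla L_t(\vtheta_{t-1})$ for $t\ge 1$ and $\vu_0 = (\vtheta_0-\vtheta_{-1})/\eta_{-1}$. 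Setting $a_t := \langle\vtheta_t,\vu_t\rangle$, two one-line computations do all the work: dotting the $\vu$-recursion with $\vtheta_{t-1}$ and using $\langle\vtheta_{t-1},\nabla L_t(\vtheta_{t-1})\rangle=0$ gives $\langle\vtheta_{t-1},\vu_t\rangle=\gamma a_{t-1}$, hence $a_t = \gamma a_{t-1}+\eta_{t-1}\norm{\vu_t}^2 \ge \gamma a_{t-1}$, so $a_t\ge \gamma^t a_0$; and $\langle\vtheta_t,\vu_{t+1}\rangle = \gamma\langle\vtheta_t,\vu_t\rangle=\gamma a_t$, so expanding $\norm{\vtheta_{t+1}}^2 = \norm{\vtheta_t+\eta_t\vu_{t+1}}^2$ and substituting $a_{t+1}=\gamma a_t+\eta_t\norm{\vu_{t+1}}^2$ yields the master identity
\[\norm{\vtheta_{t+1}}^2-\norm{\vtheta_t}^2 = 2\gamma\eta_t a_t + \eta_t^2\norm{\vu_{t+1}}^2 = \eta_t\bigl(\gamma a_t + a_{t+1}\bigr).\]

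For the almost-monotone bound, combine the master identity with $a_t\ge\gamma^t a_0$ to get $\norm{\vtheta_{t+1}}^2-\norm{\vtheta_t}^2 \ge 2\gamma^{t+1}\eta_t a_0$, and then lower-bound $a_0$ by polarization: $a_0 = \tfrac1{2\eta_{-1}}\bigl(\norm{\vtheta_0}^2+\norm{\vtheta_0-\vtheta_{-1}}^2-\norm{\vtheta_{-1}}^2\bigr)\ge \tfrac1{2\eta_{-1}}\bigl(\norm{\vtheta_0}^2-\norm{\vtheta_{-1}}^2\bigr)$. With the admissible convention $\eta_{-1}=\eta_0$ this is the claimed inequality; in the standard case $\vv_0=\vzero$ one has $\vu_0=\vzero$, $a_0=0$, and it degenerates to plain monotonicity $\norm{\vtheta_{t+1}}\ge\norm{\vtheta_t}$.

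For the closed form at constant LR $\eta_t\equiv\eta$, I would instead track the norm increments $b_t:=\norm{\vtheta_t}^2-\norm{\vtheta_{t-1}}^2$ directly. Using the same orthogonality to write $\langle\vtheta_t,\vtheta_{t+1}-\vtheta_t\rangle = \gamma\langle\vtheta_t,\vtheta_t-\vtheta_{t-1}\rangle$ and the polarization identity $\langle\vtheta_t,\vtheta_t-\vtheta_{t-1}\rangle = \tfrac12(\norm{\vtheta_t}^2+\norm{\vtheta_t-\vtheta_{t-1}}^2-\norm{\vtheta_{t-1}}^2)$, the master identity becomes the explicit recurrence
\[b_{t+1} = \gamma\,b_t + \Bigl(\norm{\vtheta_{t+1}-\vtheta_t}^2 + \gamma\norm{\vtheta_t-\vtheta_{t-1}}^2\Bigr),\qquad b_0 = \norm{\vtheta_0}^2-\norm{\vtheta_{-1}}^2.\]
Unrolling gives $b_s = \gamma^s b_0 + \sum_{i=0}^{s-1}\gamma^{s-1-i}c_i$ with $c_i := \norm{\vtheta_{i+1}-\vtheta_i}^2+\gamma\norm{\vtheta_i-\vtheta_{i-1}}^2$; then $\norm{\vtheta_{t+1}}^2 = \norm{\vtheta_0}^2 + \sum_{s=1}^{t+1}b_s$, and after swapping the order of the double summation the inner sum over $s$ collapses via the finite geometric series $\sum_{k=0}^{t-i}\gamma^k = \tfrac{1-\gamma^{t-i+1}}{1-\gamma}$, collecting the $c_i$-terms into $\sum_{i=0}^t \tfrac{1-\gamma^{t-i+1}}{1-\gamma}\bigl(\norm{\vtheta_i-\vtheta_{i+1}}^2+\gamma\norm{\vtheta_{i-1}-\vtheta_i}^2\bigr)$ and the $b_0$-terms into the $\gamma\tfrac{1-\gamma^{t+1}}{1-\gamma}(\norm{\vtheta_0}^2-\norm{\vtheta_{-1}}^2)$ correction, giving the stated expression.

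The step I expect to require the most care is not any estimate but the bookkeeping at the first index: the momentum buffer at iteration $0$ is specified through $\vv_0$ rather than produced by a $\GD$ step, so I must check that the velocity recursion and the orthogonality identity $\langle\vtheta_0,\nabla L_1(\vtheta_0)\rangle=0$ are both valid at $t=1$, and fix the conventions for $\eta_{-1}$ and $\vtheta_{-1}$ (in particular $\eta_{-1}=\eta_0$, and $\vtheta_{-1}=\vtheta_0$ when $\vv_0=\vzero$) so that the boundary term $\norm{\vtheta_0}^2-\norm{\vtheta_{-1}}^2$ appears with the correct coefficient. Everything else is routine algebra on a one-dimensional linear recurrence and a geometric-series resummation.
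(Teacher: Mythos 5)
Your proof is correct and is essentially the paper's own argument written in different variables: your $a_t=\langle\vtheta_t,\vu_t\rangle$ equals $(C_{t-1}+D_{t-1})/\eta_{t-1}$ in the notation of the paper's proof, and both arguments use only Lemma~\ref{lem:scale_invariance}(1) to turn the momentum update into a first-order linear recurrence for the norm increments, which is then unrolled and resummed. One remark: your (correct) algebra actually yields $\norm{\vtheta_{t+1}}^2-\norm{\vtheta_t}^2\ge +\gamma^{t+1}\tfrac{\eta_t}{\eta_0}\left(\norm{\vtheta_0}^2-\norm{\vtheta_{-1}}^2\right)$ and a closed form that carries a leading $\norm{\vtheta_0}^2$ together with $+\gamma\tfrac{1-\gamma^{t+1}}{1-\gamma}\left(\norm{\vtheta_0}^2-\norm{\vtheta_{-1}}^2\right)$; this agrees with what the paper's own proof derives, and the mismatches with the displayed theorem (the sign of the boundary term and the omitted $\norm{\vtheta_0}^2$) are typos in the statement rather than gaps in your argument, and they vanish in the standard initialization $\vtheta_{-1}=\vtheta_0$ except for the missing $\norm{\vtheta_0}^2$.
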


\begin{proof}
Let's use $R_t,D_t,C_t$ to denote $\norm{\vtheta_t}^2, \norm{\vtheta_{t+1}-\vtheta_{t}}^2, \vtheta_t^\top(\vtheta_{t+1}-\vtheta_t)$ respectively.

The only property we will use about loss is $\nabla_\vtheta L_t^\top \vtheta_t = 0$.

Expanding the square of $\norm{\vtheta_{t+1}}^2 = \norm{(\vtheta_{t+1}-\vtheta_{t})+\vtheta_t}^2$, we have 

\[ \forall t\ge -1\quad S(t):\quad R_{t+1}-R_t = D_t +2C_t.\]

We also have 
\[\frac{C_t}{\eta_t} = \vtheta_t^\top\frac{\vtheta_{t+1}-\vtheta_t}{\eta_t} = \vtheta_t^\top(\gamma \frac{\vtheta_{t}-\vtheta_{t-1}}{\eta_{t-1}}  -\lambda_t \vtheta_t ) = \frac{\gamma}{\eta_{t-1}}( D_t+ C_{t-1}) -\lambda_t R_t, \]

namely, 

\[\forall t\ge 0 \quad P(t):\quad   \frac{C_t}{\eta_t} - \frac{\gamma D_t}{\eta_{t-1}} = \frac{\gamma}{\eta_{t-1}}C_{t-1}-\lambda_t R_t. \] 

Simplify $\frac{S(t)}{\eta_t} - \frac{\gamma S(t-1)}{\eta_{t-1}}+ P(t)$, we have 

\begin{equation}\label{eq:norm_increase_dynamics}
\frac{ R_{t+1}-R_t}{\eta_t} - \gamma\frac{ R_{t}-R_{t-1}}{\eta_{t-1}} = \frac{D_t}{\eta_t}+\gamma\frac{D_{t-1}}{\eta_{t-1}} - 2\lambda_t R_t.
\end{equation}

When $\lambda_t=0$, we have 
\[\frac{ R_{t+1}-R_t}{\eta_t} = \gamma^{t+1}\frac{ R_{0}-R_{-1}}{\eta_{-1}}  +\sum_{i=0}^t \gamma^{t-i}(\frac{D_i}{\eta_i}+\gamma\frac{D_{i-1}}{\eta_{i-1}}) \geq \gamma^{t+1} \frac{ R_{0}-R_{-1}}{\eta_{0}}.\]

Further if $\eta_t=\eta$ is a constant, we have 

\[ R_{t+1} = R_0+ \sum_{i=0}^t \frac{1-\gamma^{t-i+1}}{1-\gamma}(D_i+\gamma D_{i-1}) -\gamma\frac{1-\gamma^{t+1}}{1-\gamma}(R_{0}-R_{-1}), \]

which covers the result without momentum in \citep{arora2018theoretical} as a special case:

\[R_{t+1} =R_0+  \sum_{i=0}^t D_i .\]
\end{proof}

For general deep nets, we have the following result,  suggesting that the mean square of the update are constant compared to the mean square of the norm. The constant is mainly determined by $\eta\lambda$, explaining why the usage of weight decay prevents the parameters to converge in direction.  \footnote{\citep{david_page_blog} had a similar argument for this phenomenon by connecting this to the LARS\citep{2017arXiv170803888Y}, though it's not rigorous in the way it deals with momentum and equilibrium of norm.}  

\begin{theorem}\label{thm:equilibrium}
 For SGD with constant LR $\eta$, weight decay $\lambda$ and momentum $\gamma$,  when the limits $R_\infty = \lim_{T\to \infty}\frac{1}{T}\sum_{t=0}^{T-1} \norm{\vw_t}^2$, $D_\infty = \lim_{T\to \infty}\frac{1}{T}{\sum_{t=0}^{T-1} \norm{\vw_{t+1}-\vw_{t}}^2}$ exist, we have
 \[D_\infty = \frac{2\eta\lambda}{1+\gamma}R_\infty.\]
\end{theorem}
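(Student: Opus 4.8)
The plan is to start from the norm recursion already established inside the proof of Theorem~\ref{thm:increase_norm}, specialize it to constant learning rate and weight decay, and then pass to a Cesàro average. Keep the notation $R_t = \norm{\vw_t}^2$, $D_t = \norm{\vw_{t+1}-\vw_t}^2$ from that proof. Setting $\eta_t \equiv \eta$ and $\lambda_t \equiv \lambda$ in Equation~\eqref{eq:norm_increase_dynamics} and clearing the common factor $\eta$ gives, for every $t \ge 0$,
\[ R_{t+1} - R_t - \gamma\,(R_t - R_{t-1}) \;=\; D_t + \gamma\, D_{t-1} - 2\eta\lambda\, R_t . \]
This is the one place where scale invariance enters (through the derivation of \eqref{eq:norm_increase_dynamics}, which used $\langle\nabla_\vtheta L_t,\vtheta_t\rangle=0$); no further structure of the loss is needed.

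Next I would sum this identity over $t = 0,1,\ldots,T-1$ and divide by $T$. The left-hand side telescopes to $\tfrac1T\big[(R_T - R_0) - \gamma(R_{T-1} - R_{-1})\big]$. The elementary fact I need is: if the averages $\tfrac1T\sum_{t=0}^{T-1}a_t$ converge, then $a_T/T \to 0$ — indeed, writing $b_T = \tfrac1T\sum_{t=0}^{T-1}a_t$, one has $a_{T-1}/T = (b_T - b_{T-1}) + b_{T-1}/T \to 0$ since $b_T - b_{T-1}\to 0$ and $(b_{T-1})$ is bounded, and the same holds with $T$ in place of $T-1$. Applying this to $(R_t)$ (whose averages converge to $R_\infty$ by hypothesis) shows the averaged left-hand side tends to $0$. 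For the right-hand side, $\tfrac1T\sum_{t=0}^{T-1}D_t \to D_\infty$ by hypothesis; $\tfrac1T\sum_{t=0}^{T-1}D_{t-1} = \tfrac1T\sum_{t=-1}^{T-2}D_t \to D_\infty$ because it differs from the previous sum only by the boundary term $\tfrac1T(D_{-1}-D_{T-1})$, which vanishes by the same lemma applied to $(D_t)$; and $\tfrac1T\sum_{t=0}^{T-1}R_t \to R_\infty$. Taking the limit in the averaged identity yields $0 = (1+\gamma)D_\infty - 2\eta\lambda R_\infty$, i.e. $D_\infty = \frac{2\eta\lambda}{1+\gamma}R_\infty$.

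The whole argument is bookkeeping once Equation~\eqref{eq:norm_increase_dynamics} is available; the only step needing a moment of care is the justification that the telescoped boundary terms $R_T/T$, $R_{T-1}/T$, $D_{T-1}/T$ vanish, which I would package as the one-line lemma above (existence of a Cesàro limit forces $a_T/T\to 0$). I do not expect a genuine obstacle here, since the substantive content was already distilled into \eqref{eq:norm_increase_dynamics} in the proof of Theorem~\ref{thm:increase_norm}.
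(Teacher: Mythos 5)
Your proposal is correct and follows the same route as the paper: the paper's proof is precisely ``take the average of Equation~\ref{eq:norm_increase_dynamics} over $t$'' with constant $\eta_t=\eta$, $\lambda_t=\lambda$, and pass to the limit. You simply make explicit the bookkeeping the paper leaves implicit, namely that the telescoped boundary terms $R_T/T$, $R_{T-1}/T$, $D_{T-1}/T$ vanish because existence of a Ces\`aro limit forces $a_T/T\to 0$.
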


\begin{proof}[Proof of Theorem~\ref{thm:equilibrium}]

Take average of Equation~\ref{eq:norm_increase_dynamics} over $t$, when the limits $R_\infty = \lim_{T\to \infty}\frac{1}{T}\sum_{t=0}^{T-1} \norm{\vw_t}^2$, $D_\infty = \lim_{T\to \infty}\frac{1}{T}{\sum_{t=0}^{T-1} \norm{\vw_{t+1}-\vw_{t}}^2}$ exists, we have
\[\frac{1+\gamma}{\eta} D_\infty  = 2\lambda R_{\infty}.\]

\end{proof}

     

\section{Scale Invariance in Modern Network Architectures}
\label{sec:scale_invariance_in_net}

In this section, we will discuss how Normalization layers make the output of the network scale-invariant to its parameters. Viewing a neural network as a DAG, we give a sufficient condition  for the scale invariance which could be checked easily by topological order, and apply this on several standard network architectures such as Fully Connected(FC) Networks, Plain CNN, ResNet\citep{he2016deep}, and PreResNet\citep{he2016identity}. For simplicity, we restrict our discussions among networks with ReLU activation only. Throughout this section, we assume the linear layers and the bias after last normalization layer are fixed to its random initialization, which doesn't harm the performance of the network empirically\citep{hoffer2018fix}. 

\subsection{Notations}
\begin{definition}[Degree of Homogeneity]
  Suppose $k$ is an integer and $\vtheta$ is all the parameters of the network, then $f$ is said to be \emph{homogeneous of degree} $k$, or $k$-homogeneous, if $\forall c>0$, $f(c\vtheta) = c^{k}f(\vtheta)$.  The output of $f$ can be multi-dimensional. Specifically, scale invariance means degree of homogeneity is 0.
\end{definition}

Suppose the network only contains following modules, and we list the degree of homogeneity of these basic modules, given the degree of homogeneity of its input.
\begin{enumerate}
    \item[(I)] Input
    \item[(L)] Linear Layer, e.g. Convolutional Layer or Fully Connected Layer
    \item[(B)] Bias Layer(Adding Trainable Bias to the output of the previous layer)
    \item[(+)] Addition Layer (adding the outputs of two layers with the same dimension\footnote{ Addition Layer(+) is mainly used in ResNet and other similar architectures. In this section, we also use it as an alternative definition of Bias Layer(B). See Figure~\ref{fig:modules_defi}}.)
    \item[(N)] Normalization Layer without affine transformation(including BN, GN, LN, IN etc.)
    \item[(NA)] Normalization Layer with affine transformation
\end{enumerate}

\begin{table}[!htbp]
    \centering
    \begin{tabular}{c|c|c|c|c|c|c}
    \hline
        Module & I & L & B &  + & N & NA \\
        \hline
        Input  & - & x & 1 &  (x,x) & x & x \\
        \hline
        Output & 0 & x+1 & 1 & x & 0 & 1 \\
        \hline
    \end{tabular}
    \caption{Table showing how degree of homogeneity of the output of basic modules depends on the degree of homogeneity of the input. For the row of the \textbf{Input} , entry `-' means the input of the network (I) doesn't have any extra input, entry `1' of Bias Layer means if the input is 1-homogeneous then the output is 1- homogeneous. `$(x,x)$' for `+' means if the inputs of Addition Layer have the same degree of homogeneity, the output has the same degree of homogeneity. ReLU, Pooling( and other fixed linear maps) are ignored because they keep the degree of homogeneity and can be omitted when creating the DAG in Theorem~\ref{thm:dag}.}
    \label{tab:module_scale_invariance}
\end{table}
\vspace{2cm}
\begin{remark}\label{rmk:modules}
For the purpose of deciding the degree of  homogeneity of a network, there's no difference among convolutional layers, fully connected layer and the diagonal linear layer in the affine transformation of Normalization layer, since they're all linear and the degree of homogeneity is increased by 1 after applying them. 

On the other hand, BN and IN has some benefit which GN and LN doesn't have, namely the bias term (per channel) immediately before BN or IN has zero effect on the network output and thus can be removed. (See Figure~\ref{fig:affine_break})
\end{remark}

We also demonstrate the homogeneity of the output of the modules via the following figures, which will be reused to later to define network architectures.

  \begin{figure}[!htbp]
    \vspace{-0.5cm}
    \centering
    \begin{subfigure}[t]{0.17\textwidth}
        \centering
        \includegraphics[width=0.99\textwidth]{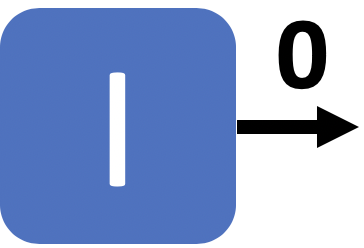}
        \subcaption{Input(I)}
    \end{subfigure}%
    ~
    \begin{subfigure}[t]{0.24\textwidth}
        \centering
        \includegraphics[width=0.99\textwidth]{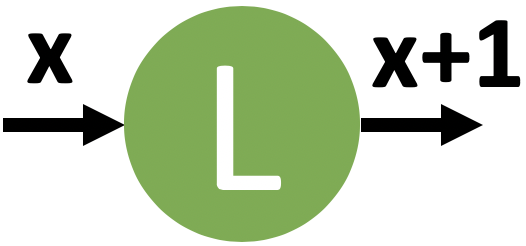}
        \subcaption{Linear(L)}
    \end{subfigure}
        ~
    \begin{subfigure}[t]{0.33\textwidth}
        \centering
        \includegraphics[width=0.99\textwidth]{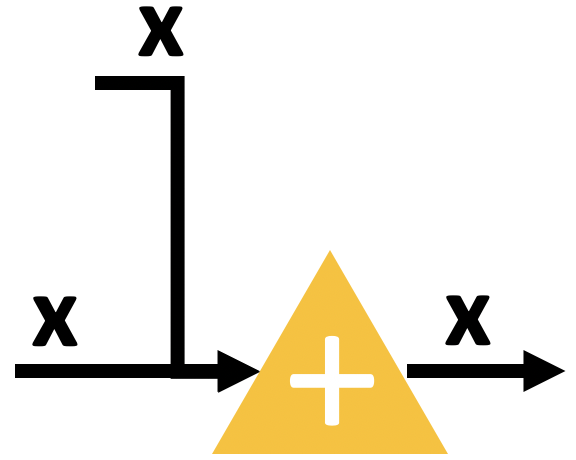}
        \subcaption{Addition(+)}
    \end{subfigure}
        ~
    \begin{subfigure}[t]{0.21\textwidth}
        \centering
        \includegraphics[width=0.99\textwidth]{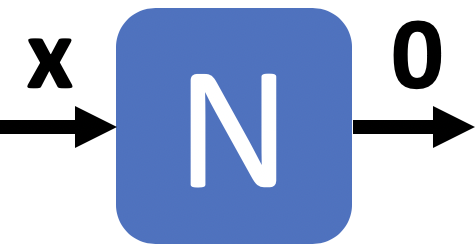}
        \subcaption{Normalization(N)}
    \end{subfigure}
    \\ \ \\ \ \\
    \begin{subfigure}[t]{0.3\textwidth}
        \centering
        \includegraphics[width=0.99\textwidth]{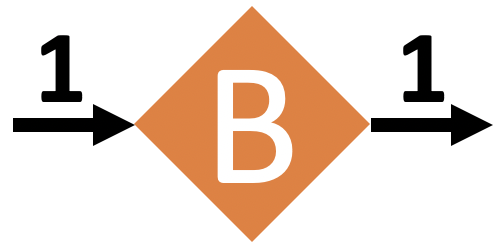}
        \subcaption{Bias(B)}
    \end{subfigure}%
    \qquad\qquad\quad
    \begin{subfigure}[t]{0.55\textwidth}
        \centering
        \includegraphics[width=0.99\textwidth]{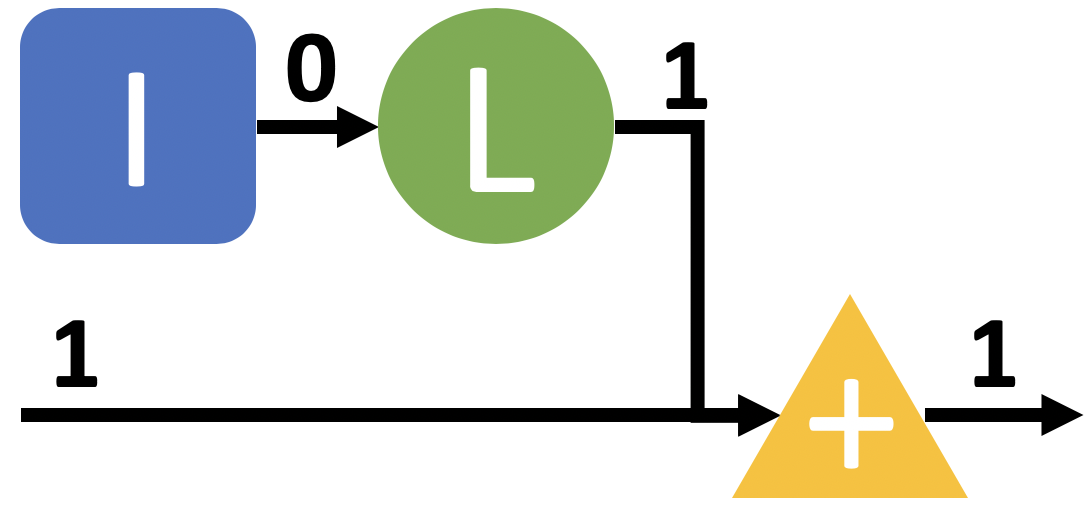}
        \subcaption{Alternative Definition of Bias(B)}
    \end{subfigure}
        \\ \ \\ \ \\
    \begin{subfigure}[t]{0.27\textwidth}
        \centering
        \includegraphics[width=0.99\textwidth]{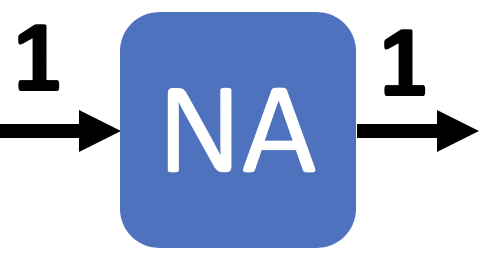}
        \subcaption{Normalization with Affine(NA)}
    \end{subfigure}
    \quad
    \begin{subfigure}[t]{0.65\textwidth}
        \centering
        \includegraphics[width=0.99\textwidth]{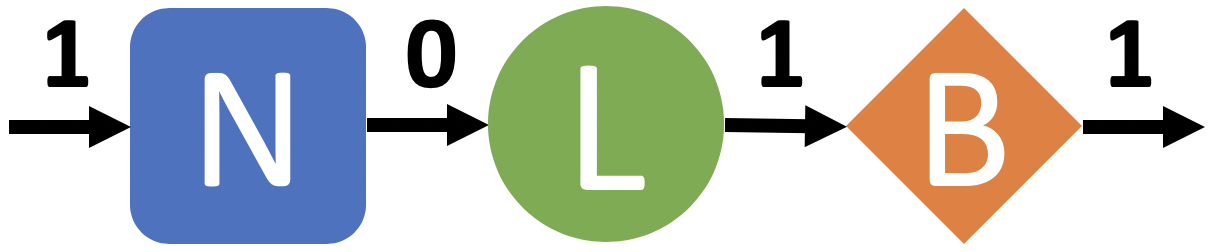}
        \subcaption{Definition of Normalization with Affine(NA)}
    \end{subfigure}
    \caption{Degree of homogeneity of the output of basic modules given degree of homogeneity of the input. }
    \label{fig:modules_defi}
\end{figure}

\begin{theorem}\label{thm:dag}
For a network only consisting of modules defined above and ReLU activation, we can view it as a \emph{Directed acyclic graph} and check its scale invariance by the following algorithm.


\IncMargin{1em}
\begin{algorithm}[H]\label{alg:homo}
\SetAlgoLined
\SetKwInOut{Input}{Input}\SetKwInOut{Output}{Output}
\Input{DAG $G=(V,E)$ translated from a neural network; the module type of each node $v_i\in V$.}
 \For{$v$ in topological order of $G$}{
  Compute the degree of homogeneity of $v$ using Table~\ref{tab:module_scale_invariance}\;
  \If{ $v$ is not homogeneous }{
   \KwRet{False}\;
   }
 }
  \If{ $v_{ouptut}$ is 0-homogeneous }{
   \KwRet{True}\;
   }
   \Else{\KwRet{False}.}
\end{algorithm}
\end{theorem}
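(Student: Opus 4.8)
The plan is to fix the input data and regard every node $v$ of the DAG $G$ as a function $g_v(\vtheta)$ of the trainable parameters only (the weights of the final linear/bias layers being frozen at initialization, they contribute fixed linear maps rather than variables). Say $v$ is \emph{$k$-homogeneous} if $g_v(c\vtheta)=c^k g_v(\vtheta)$ for all $c>0$. The claim to establish is the invariant: \emph{after the algorithm processes node $v$ without having returned \textup{False}, the value $d_v$ it assigns to $v$ is an actual degree of homogeneity of $g_v$}; scale invariance of the network is then the special case $d_{v_{\mathrm{output}}}=0$. I would prove this by induction along the topological order, with the inductive step a short case analysis over the six module types in Table~\ref{tab:module_scale_invariance}.

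First I would record two auxiliary facts. (i) \emph{Contraction lemma:} ReLU, max/average pooling, and any fixed (parameter-free) linear map preserve degree of homogeneity, because each is positively homogeneous of degree $1$ in its argument; e.g.\ $\mathrm{ReLU}(c z)=c\,\mathrm{ReLU}(z)$ for $c>0$. This justifies deleting these nodes when forming the DAG, as the excerpt does. (ii) \emph{Uniqueness lemma:} if $g\not\equiv 0$ is both $k_1$- and $k_2$-homogeneous then $k_1=k_2$ — pick $\vtheta_0$ with $g(\vtheta_0)\neq 0$, so $c^{k_1}g(\vtheta_0)=g(c\vtheta_0)=c^{k_2}g(\vtheta_0)$ for all $c>0$, forcing $k_1=k_2$. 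This makes the label $d_v$ canonical and underpins the detection of non-homogeneity at addition nodes.

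The inductive step is then the computation behind Table~\ref{tab:module_scale_invariance}. For an input node (I), $g_v$ does not depend on $\vtheta$, hence is $0$-homogeneous. For a linear layer (L) with weight $W\subseteq\vtheta$ acting on an $x$-homogeneous input $u$, $g_v(c\vtheta)=(cW)(c^x u(\vtheta))=c^{x+1}g_v(\vtheta)$. For a bias layer (B), written as an addition with a bias node carrying a parameter vector, the bias is $1$-homogeneous, so the sum is $1$-homogeneous precisely when the other summand is (the "$1$" entry in the Input row). For an addition node ($+$) with summands of degrees $x_1,x_2$: if $x_1=x_2=x$ the sum is $x$-homogeneous; if $x_1\neq x_2$ the uniqueness lemma shows (generically, i.e.\ at a $\vtheta_0$ where both summands are nonzero) that no single degree works, so the algorithm is right to reject. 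For a normalization layer without affine (N) applied to a $k$-homogeneous input $z$: the batch mean and standard deviation satisfy $\mu(c\vtheta)=c^k\mu(\vtheta)$ and $\sigma(c\vtheta)=|c|^k\sigma(\vtheta)=c^k\sigma(\vtheta)$ (here $c>0$ is essential), whence $(z-\mu)/\sigma$ is invariant, i.e.\ $0$-homogeneous for \emph{any} input degree $k$ — matching the "$x\mapsto 0$" row. For a normalization layer with affine transform (NA), the normalized output is $0$-homogeneous, multiplying by the affine scale parameter raises the degree to $1$, and adding the affine bias (itself $1$-homogeneous) keeps degree $1$. Composing these along the topological order gives the invariant, and reading it off at the output node gives the theorem.

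The main obstacle is the \emph{completeness} half — that the algorithm returning \textup{False} really means the network is not scale invariant. Soundness (\textup{True} $\Rightarrow$ scale invariant) follows cleanly from the induction above, but a non-homogeneous intermediate node could in principle be annihilated downstream (e.g.\ multiplied by an identically-zero activation, or cancelled at an addition node) so that the network output is still $0$-homogeneous. I would handle this by (a) the uniqueness lemma, which shows that at an addition node with mismatched input degrees the output genuinely fails to be homogeneous at any $\vtheta_0$ in general position, and (b) observing that for the standard architectures enumerated (FC nets, plain CNNs, ResNet, PreResNet) no such degenerate cancellation occurs, so the structural check is tight; a fully general converse would require a genericity hypothesis, which I would state explicitly. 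The rest is the routine homogeneity bookkeeping sketched above.
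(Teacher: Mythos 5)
Your proposal is correct and matches the paper's (implicit) argument: the paper gives no separate proof of Theorem~\ref{thm:dag}, relying exactly on the per-module homogeneity bookkeeping of Table~\ref{tab:module_scale_invariance} propagated in topological order, which your induction along the DAG formalizes. Your caveat about the completeness direction is well observed but moot for the paper's purposes, since the surrounding text only claims the algorithm as a \emph{sufficient} condition for scale invariance --- i.e.\ the soundness direction that your induction establishes cleanly.
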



\subsection{Networks without Affine Transformation and Bias}

We start with the simple cases where all bias term(including that of linear layer and normalization layer) and the scaling term of normalization layer are fixed to be 0 and 1 element-wise respectively, which means the bias and the scaling could be dropped from the network structure. We empirically find this doesn't affect the performance of network in a noticeable way. We will discuss the full case in the next subsection.

\paragraph{Plain CNN/FC networks:} See Figure~\ref{fig:vanilla1}.
  \begin{figure}[!htbp]
    \centering
    \begin{subfigure}[t]{0.99\textwidth}
        \centering
        \includegraphics[width=0.99\textwidth]{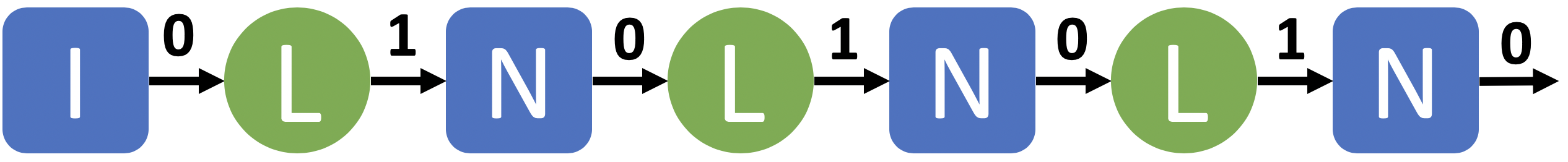}
    \end{subfigure}%
    \caption{Degree of homogeneity for all modules in vanilla CNNs/FC networks.}
    \label{fig:vanilla1}
\end{figure}

  \begin{figure}[!htbp]
  
    \centering
    \begin{subfigure}[t]{1\textwidth}
        \centering
        \includegraphics[width=0.99\textwidth]{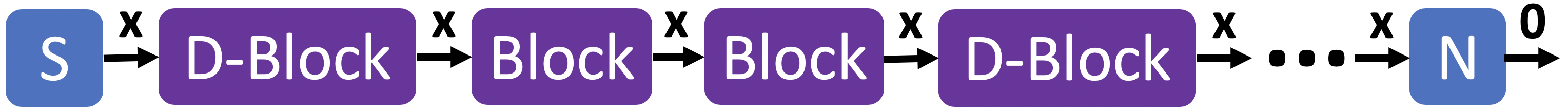}
    \end{subfigure}%
    \caption{An example of the full network structure of ResNet/PreResNet represented by composite modules defined in Figure~\ref{fig:resnet},\ref{fig:preresnet},\ref{fig:resnet_affine},\ref{fig:preresnet_affine}, where `S' denotes the starting part of the network, `Block' denotes a normal block with residual link, `D-Block' denotes the block with downsampling, and `N' denotes the normalization layer defined previously. Integer $x\in\{0,1,2\}$ depends on the type of network. See details in Figure~\ref{fig:resnet},\ref{fig:preresnet},\ref{fig:resnet_affine},\ref{fig:preresnet_affine}.}
    \label{fig:full_resnet}
\end{figure}

\paragraph{ResNet:} See Figure~\ref{fig:resnet}. To ensure the scaling invariance, we add an additional normalizaiton layer in the shortcut after downsampling. This implementation is sometimes used in practice and doesn't affect the performance in a noticeable way.

  \begin{figure}[!htbp]
  
    \centering
    \begin{subfigure}[t]{0.5\textwidth}
        \centering
        \includegraphics[width=0.99\textwidth]{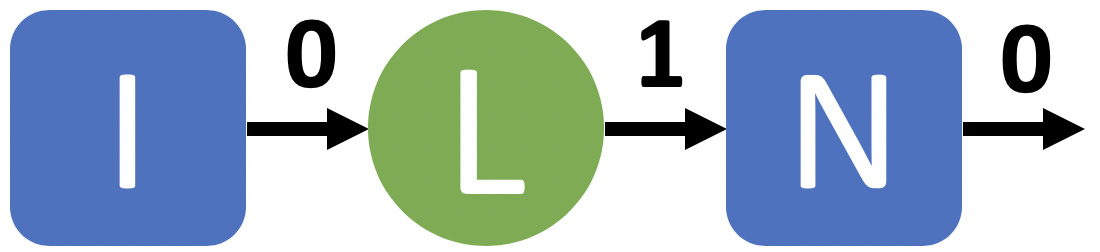}
        \subcaption{The starting part of ResNet}
    \end{subfigure}%
    \vspace{0.2cm}
    \begin{subfigure}[t]{0.99\textwidth}
        \centering
        \includegraphics[width=0.99\textwidth]{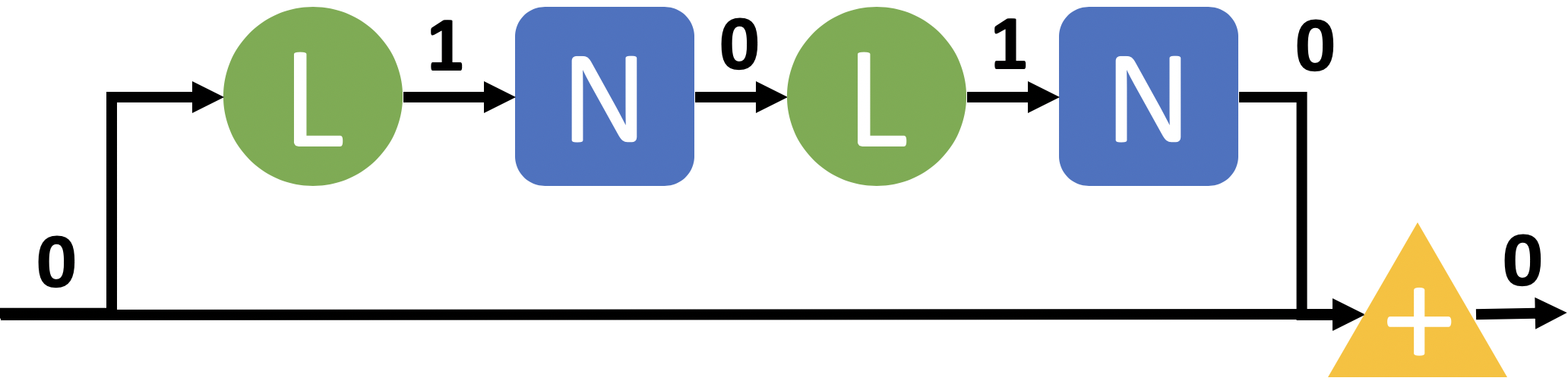}
        \subcaption{A block of ResNet}
    \end{subfigure}%
    \vspace{0.2cm}
    \begin{subfigure}[t]{0.99\textwidth}
        \centering
        \includegraphics[width=0.99\textwidth]{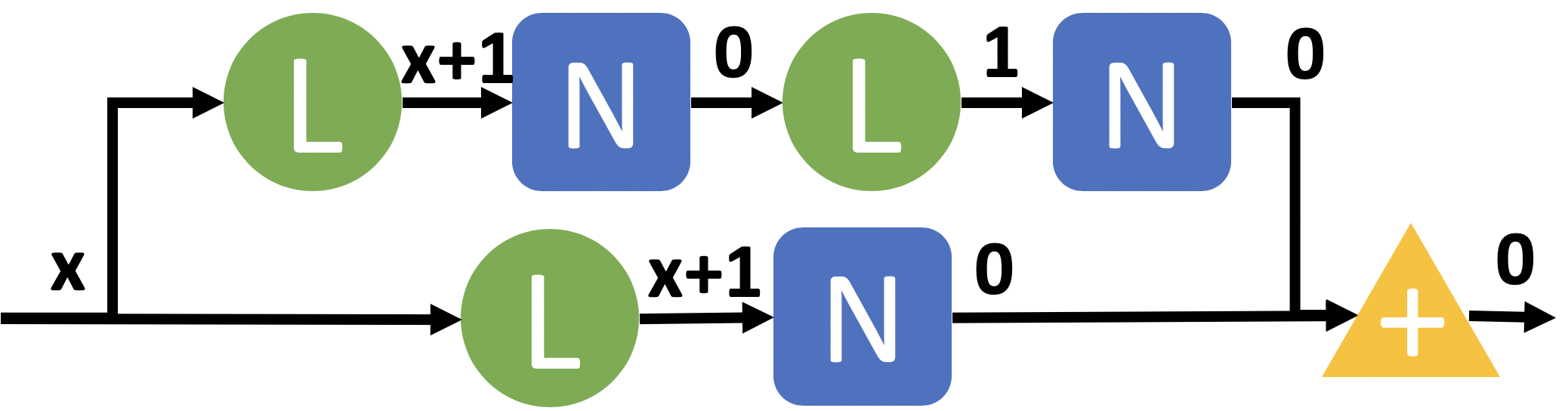}
        \subcaption{A block of ResNet with downsampling}
    \end{subfigure}%
    \caption{Degree of homogeneity for all modules in ResNet without affine transformation in normalization layer. The last normalization layer is omitted.}
    \label{fig:resnet}
\end{figure}

\paragraph{Preactivation ResNet:} See Figure~\ref{fig:preresnet}. Preactivation means to change the order between  convolutional layer and normalization layer. For similar reason, we add an additional normalizaiton layer in the shortcut before downsampling. 

  \begin{figure}[!htbp]
  
    \centering
    \begin{subfigure}[t]{0.5\textwidth}
        \centering
        \includegraphics[width=0.99\textwidth]{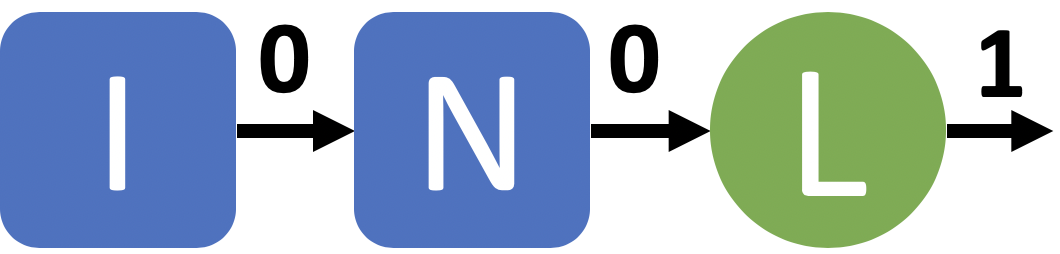}
        \subcaption{The starting part of PreResNet}
    \end{subfigure}%
    \vspace{0.2cm}
    \begin{subfigure}[t]{0.99\textwidth}
        \centering
        \includegraphics[width=0.99\textwidth]{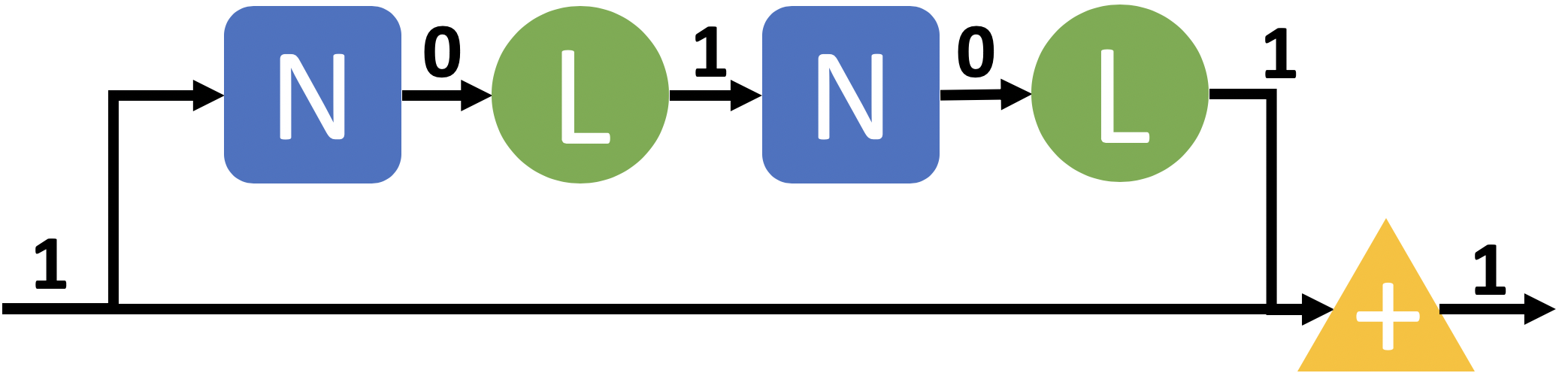}
        \subcaption{A block of PreResNet}
    \end{subfigure}%
    \vspace{0.2cm}
    \begin{subfigure}[t]{0.99\textwidth}
        \centering
        \includegraphics[width=0.99\textwidth]{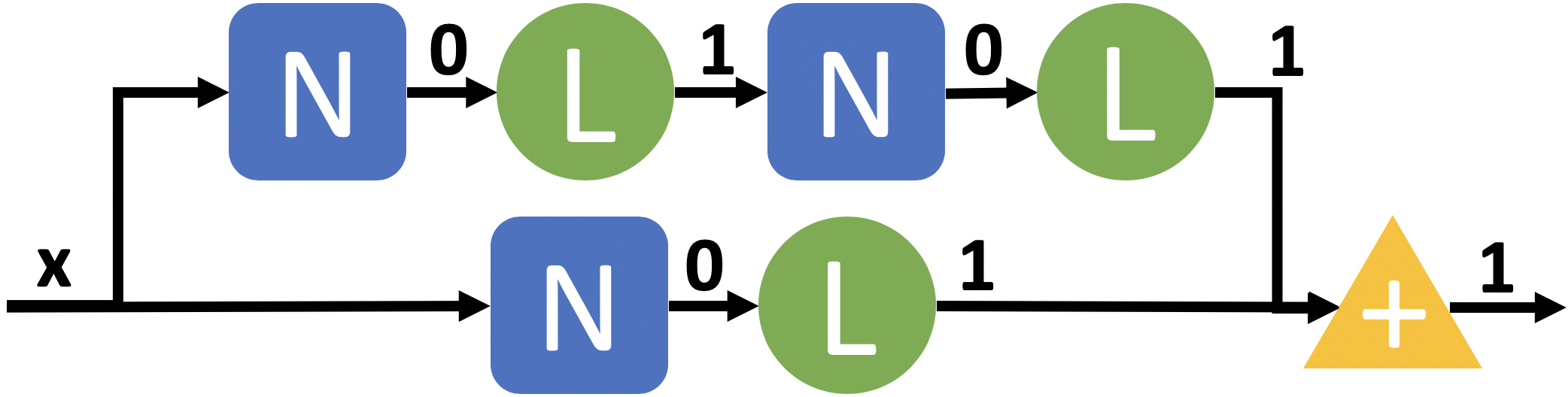}
        \subcaption{A block of PreResNet with downsampling}
    \end{subfigure}%
    \caption{Degree of homogeneity for all modules in ResNet without affine transformation in normalization layer. The last normalization layer is omitted.}
    \label{fig:preresnet}
\end{figure}

\newpage
\subsection{Networks with Affine Transformation}

Now we discuss the full case where the affine transformation part of normalization layer is trainable. Due to the reason that the bias of linear layer (before BN) has 0 gradient as we mentioned in \ref{rmk:modules}, the bias term is usually dropped from network architecture in practice to save memory and accelerate training( even with other normalization methods)(See PyTorch Implementation  \citep{paszke2017automatic}). However, when LN or GN is used, and the bias term of linear layer is trainable, the network could be scale variant (See Figure~\ref{fig:affine_break}).

\paragraph{Plain CNN/FC networks:} See Figure~\ref{fig:affine}.
  \begin{figure}[!htbp]
    \centering
    \begin{subfigure}[t]{0.8\textwidth}
        \centering
        \includegraphics[width=0.99\textwidth]{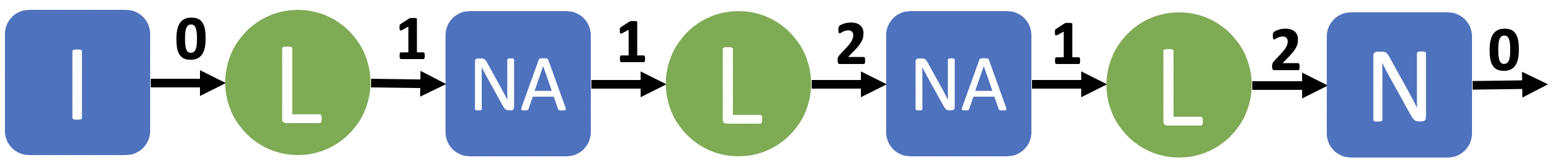}
    \end{subfigure}%
    \caption{Degree of homogeneity for all modules in vanilla CNNs/FC networks.}
    \label{fig:affine}
\end{figure}

\paragraph{ResNet:} See Figure~\ref{fig:resnet_affine}. To ensure the scaling invariance, we add an additional normalizaiton layer in the shortcut after downsampling. This implementation is sometimes used in practice and doesn't affect the performance in a noticeable way.

  \begin{figure}[!htbp]
  \vspace{0cm}
    \centering
    \begin{subfigure}[t]{0.5\textwidth}
        \centering
        \includegraphics[width=0.99\textwidth]{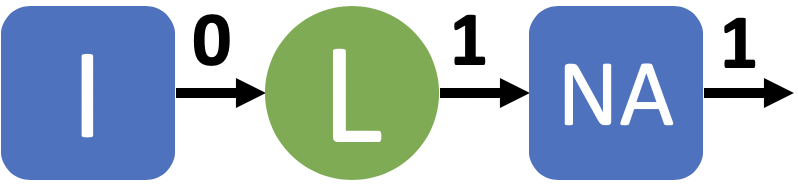}
        \subcaption{The starting part of ResNet}
    \end{subfigure}%
    \vspace{0.3cm}
    \begin{subfigure}[t]{0.99\textwidth}
        \centering
        \includegraphics[width=0.99\textwidth]{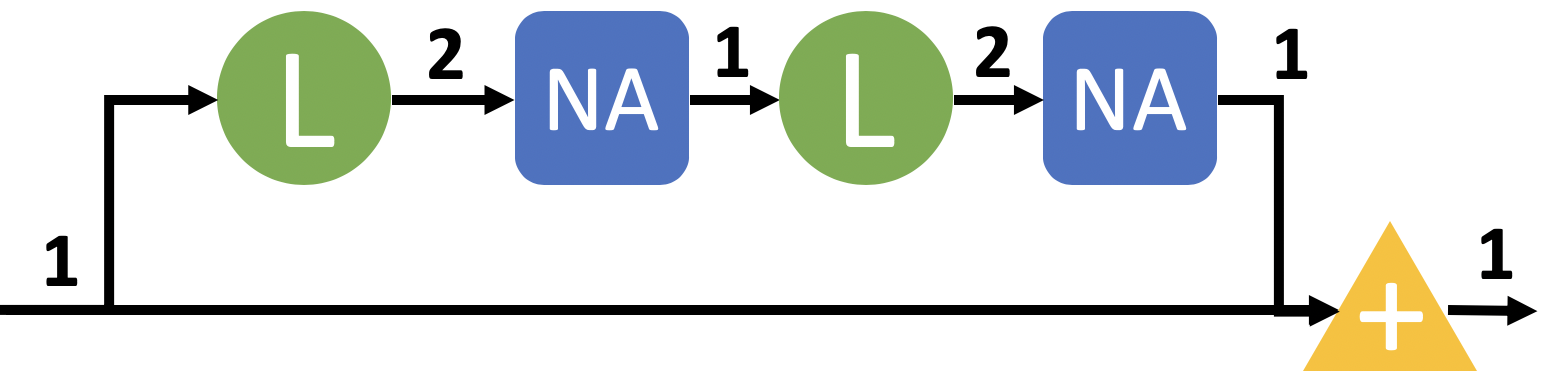}
        \subcaption{A block of ResNet}
    \end{subfigure}%
    \vspace{0.3cm}
    \begin{subfigure}[t]{0.99\textwidth}
        \centering
        \includegraphics[width=0.99\textwidth]{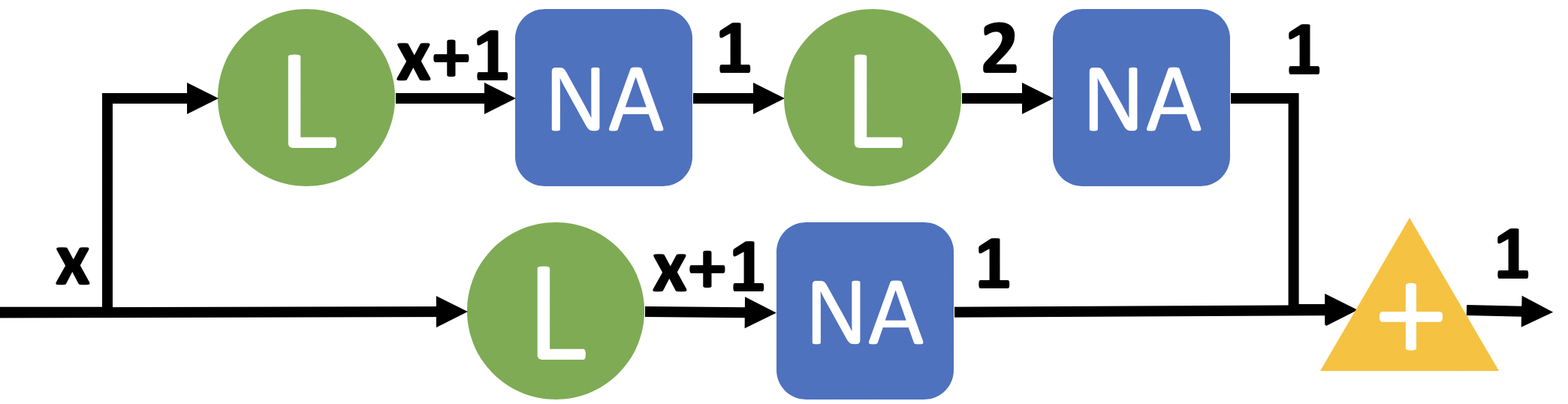}
        \subcaption{A block of ResNet with downsampling}
    \end{subfigure}%
    \caption{Degree of homogeneity for all modules in ResNet with trainable affine transformation. The last normalization layer is omitted.}
    \label{fig:resnet_affine}
\end{figure}

\paragraph{Preactivation ResNet:} See Figure~\ref{fig:preresnet_affine}. Preactivation means to change the order between  convolutional layer and normalization layer. For similar reason, we add an additional normalizaiton layer in the shortcut before downsampling. 

  \begin{figure}[!htbp]
  \vspace{0cm}
    \centering
    \begin{subfigure}[t]{0.5\textwidth}
        \centering
        \includegraphics[width=0.99\textwidth]{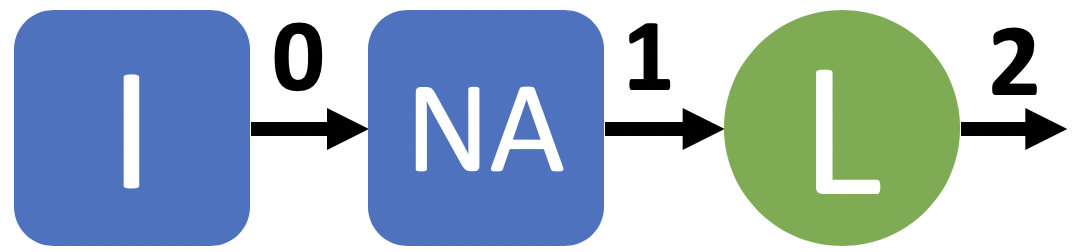}
        \subcaption{The starting part of PreResNet}
    \end{subfigure}%
    \vspace{0.3cm}
    \begin{subfigure}[t]{0.99\textwidth}
        \centering
        \includegraphics[width=0.99\textwidth]{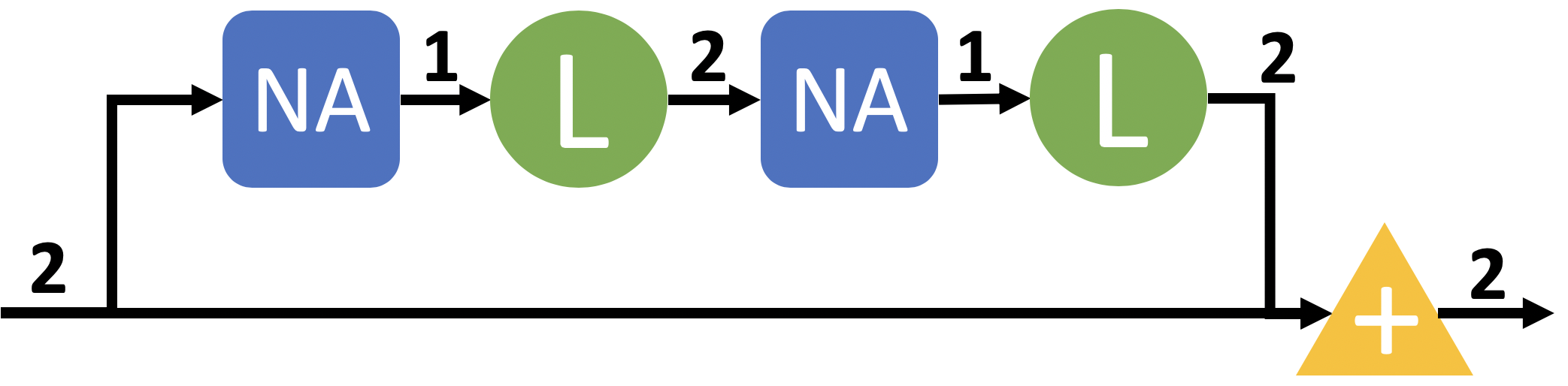}
        \subcaption{A block of PreResNet}
    \end{subfigure}%
    \vspace{0.3cm}
    \begin{subfigure}[t]{0.99\textwidth}
        \centering
        \includegraphics[width=0.99\textwidth]{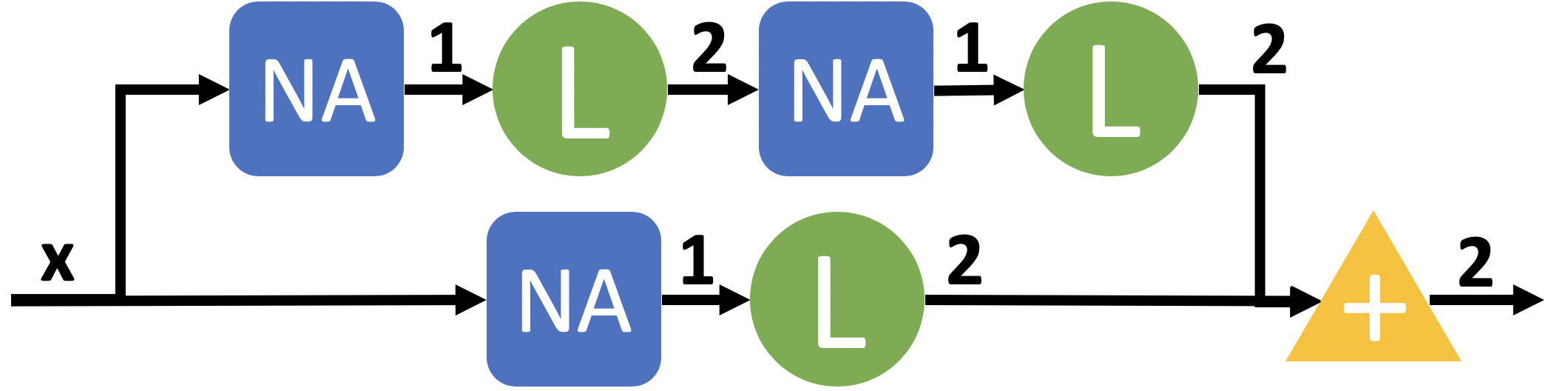}
        \subcaption{A block of PreResNet with downsampling }
    \end{subfigure}%
    \caption{Degree of homogeneity for all modules in PreResNet with trainable affine transformation. The last normalization layer is omitted.}
    \label{fig:preresnet_affine}
\end{figure}

  \begin{figure}[!htbp]
    \centering
    \begin{subfigure}[t]{0.99\textwidth}
        \centering
        \includegraphics[width=0.99\textwidth]{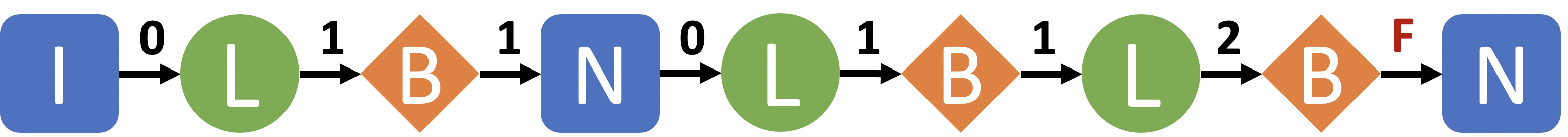}
    \end{subfigure}%
    \caption{The network can be not scale variant if the GN or IN is used and the bias of linear layer is trainable. The red `F' means the Algorithm~\ref{alg:homo} will return \textbf{False} here.}
    \label{fig:affine_break}
\end{figure}

\end{document}